\pgfplotsset{compat=1.15}
\newtheorem{observation}{Observation}
\newcommand{\oea}{\mbox{${(1 + 1)}$~EA}\xspace}
\newcommand{\opllga}{\mbox{${(1+(\lambda,\lambda))}$~GA}\xspace}
\newcommand{\ollga}{\opllga}
\newcommand{\mupoga}{$(\mu + 1)$~GA\xspace}
\newcommand{\tpoga}{$(2 + 1)$~GA\xspace}
\newcommand{\OM}{\textsc{OM}\xspace}
\newcommand{\onemax}{\textsc{OneMax}\xspace}
\newcommand{\LO}{\textsc{LO}\xspace}
\newcommand{\leadingones}{\textsc{Leading\-Ones}\xspace}
\newcommand{\jump}{\textsc{Jump}\xspace}
\newcommand{\N}{\ensuremath{\mathbb{N}}} 
\newcommand{\eps}{\varepsilon} 
\definecolor{cqcqcq}{rgb}{0.7529411764705882,0.7529411764705882,0.7529411764705882}
\definecolor{aqaqaq}{rgb}{0.6274509803921569,0.6274509803921569,0.6274509803921569}
\definecolor{ffqqqq}{rgb}{1,0,0}
\definecolor{rvwvcq}{rgb}{0.08235294117647059,0.396078431372549,0.7529411764705882}
\newcommand{\EF}{\mathrm{EF}}
\newcommand{\E}{\mathbb{E}}
\newcommand{\In}{\mathrm{in}}
\newcommand{\Out}{\mathrm{o}}
\newcommand{\Succ}{\mathrm{Succ}}
\newcommand{\ESucc}{\mathrm{ESucc}}
\newcommand{\fit}{\mathrm{fit}}
\newcommand{\val}{\mathrm{val}}
\newcommand*\xbar[1]{%
  \hbox{%
    \vbox{%
      \hrule height 0.5pt 
      \kern0.5ex
      \hbox{%
        \kern-0.2em
        \ensuremath{#1}%
        \kern-0.2em
      }%
    }%
  }%
} 
\begin{document}

\title{How Population Diversity Influences the Efficiency of Crossover}
\titlerunning{cGA on LeadingOnes}
%
\author{Sacha Cerf\inst{1} \and
Johannes Lengler\inst{2}}
\authorrunning{S. Cerf, J. Lengler}
%
\institute{Ecole Polytechnique, Paris, France \and
ETH Z\"urich, Z\"urich, Switzerland}
\maketitle              
%


\begin{abstract}

Our theoretical understanding of crossover is limited by our ability to analyze how population diversity evolves. In this study, we provide one of the first rigorous analyses of population diversity and optimization time in a setting where large diversity and large population sizes are required to speed up progress. We give a formal and general criterion which amount of diversity is necessary and sufficient to speed up the $(\mu+1)$ Genetic Algorithm on \textsc{LeadingOnes}. We show that the naturally evolving diversity falls short of giving a substantial speed-up for any $\mu=o(\sqrt{n})$. On the other hand, we show that even for $\mu=2$, if we simply break ties in favor of diversity then this increases diversity so much that optimization is accelerated by a constant factor.
\end{abstract}

\section{Introduction}

One of the central aspects of genetic algorithms (GAs) is their ability to recombine existing solutions via crossover. This is considered crucial and important in practical applications~\cite{whitley2019next}. In order for crossover to be helpful, it is vital that the population remains diverse, which gives a very specific setting for the exploration/exploitation dualism. Unfortunately, our ability to mathematically analyze population diversity and its impact on runtime has been limited to situations of small populations and/or small diversity, as we will review below. To already give one example, in easy hillclimbing settings like the \onemax problem\footnote{For $x\in \{0,1\}^n$, the \onemax function is defined via $f(x) = \sum_{i=1}^n x_i$.}, a tiny Hamming distance of $2$ between two parents of equal fitness is already beneficial for crossover. In such situations, crossover has been proven to be helpful~\cite{Sudholt17}. 

In this paper, we will treat a situation that was not amenable for analysis with previous techniques, because crossover will only be beneficial if the population diversity is quite large. More precisely, we will study the \leadingones function $\LO(x)$, which returns for $x\in\{0,1\}^n$ the number of one-bits before the first zero-bit in $x$, see Section~\ref{sec:LO} for the formal definition. For a string $x$ with $\LO(x)=k$, in order to improve its fitness it is necessary to flip the $(k+1)$st bit in $x$. Thus, it is rather hard to find such an improvement. \onemax and \leadingones are the most common theoretical benchmarks for hillclimbing in discrete search spaces, where \onemax is supposed to be particularly easy\footnote{In fact, it can be mathematically proven that \onemax is the easiest problem with unique optimum for many algorithms~\cite{doerr2012multiplicative,sudholt2012new,witt2013tight,jorritsma2023comma}.} and \leadingones is designed to be particularly hard~\cite{Rudolph97}. By construction of \leadingones, a crossover between two bit-strings $x$ and $y$, where $\LO(x)=k$, can only be fitter than $x$ if the two parents differ specifically in the $(k+1)$st position. This is a quite strong requirement and this situation will usually only occur if the population is very diverse. In contrast, on \onemax an offspring of $x$ and $y$ can be fitter than $x$ if the parents differ in \emph{any} position where $x$ has a zero-bit, which happens even with minimal population diversity. The main contribution of this paper is that we develop a method to track the population diversity even if it is large, and that we give a general criterion to translate population diversity into runtime\footnote{We measure the runtime as the number of function evaluations until the optimum is evaluated.} for the \mupoga on \leadingones.

\subsection{Our Results}
We analyze the runtime of the elitist $(\mu+1)$ Genetic Algorithm (or \mupoga for short) on \leadingones. We use standard options for mutation and crossover operators: standard bit mutation with rate $\chi/n$ for some constant $\chi>0$ and uniform crossover with uniform parent selection, see Section~\ref{sec:algorithm} for details. It was known before that without crossover, the expected runtime of the \mupoga on \leadingones is $(1+ o(1))\frac{e^{\chi}-1}{2\chi^2}n^2$ for all $\mu =o(n/\log n)$~\cite{BottcherDN10,Witt06}.

Our first result is that this runtime stays the same for the \mupoga for any $\mu = O(\sqrt{n}/\log^2 n)$, up to a $(1+o(1))$ factor.\footnote{For ease of terminology we will ignore $(1+o(1))$ factors in the rest of this exposition.} The core contribution of the proof lies in showing that the population diversity, measured by the average Hamming distance of two randomly selected parents, is bounded by $O(\mu)$. We show in a general setting that this diversity is too small to speed up the runtime by any constant factor.
Our technique builds on a recent result by Jorritsma, Lengler and Sudholt~\cite{jorritsma2023comma}, who analyzed how population diversity of the \mupoga evolves in the absence of selective pressure, i.e., for a flat fitness function. Hence, for moderately large population sizes, the \mupoga lacks population diversity.

Our second result shows that this problem can be overcome easily, as it is rather easy to increase population diversity. If we simply break ties between equally fit individuals in favor of diversity, then even for $\mu=2$ the average Hamming distance increases to $\Omega(n)$. This speeds up optimization by a constant factor.
%
%
%

\subsubsection*{Intuitive Explanation of the Results}

\paragraph{Preparation: runtime without crossover.} Let us first recapitulate where the runtime for $\mu=1$ comes from (without crossover, as this does not make sense for $\mu=1$). When the current search point $x$ has fitness $\LO(x)=k$, then for an improvement it is necessary to flip the $(k+1)$st bit of $x$, which happens with probability $\chi/n$. The expected time until this happens is $n/\chi$. There is a second condition: the bits $1,\ldots,k$ must not be flipped. It can be shown that this second condition leads to an aggregated factor of $(e^{\chi}-1)/\chi$. This is not completely obvious, but is also not hard with the modern tools of \emph{drift analysis} that have been developed in the last decade~\cite{lengler2020drift}. Moreover, the second condition will not interfere with the effect of crossover, so we will ignore this technical aspect in the intuitive explanation below.

The two aforementioned conditions for fitness improvement would lead to a runtime of $(1+ o(1))\frac{e^{\chi}-1}{\chi^2}n^2$ if it was necessary to visit all $n$ fitness levels. However, this is not necessary. When the $(k+1)$st bit is flipped, then it may happen by chance that the $(k+2)$nd bit is already set to one, in which case the algorithm will skip fitness level $k+1$. This happens with probability $1/2$, and in this case the $(k+2)$nd bit is called a \emph{free rider}. There can be more than one free rider at once, and the number of free riders is well-understood: it is geometrically distributed with parameter $1/2$, and the expected number of free riders in a fitness improvement is $1$.\footnote{The geometric distribution is truncated at $n-k$, so the expectation is slightly lower than $1/2$, but this subtlety does not affect the main order term of the runtime.} Hence, in expectation only every second fitness level is visited, which reduces the runtime by a factor of $2$, and leads to the overal runtime of $(1+ o(1))\frac{e^{\chi}-1}{2\chi^2}n^2$.

Without crossover, the above explanation remains essentially unchanged for larger $\mu$, as long as $\mu=o(n/\log n)$. The reason is that once the first individual reaches fitness level $k$, it only takes time $\Theta(\mu \log \mu) = o(n)$ until all individuals are on this fitness level. This time is negligible compared to the time that is needed for the next improvement. Once all individuals have reached fitness $k$, all parents have the same chance to produce an offspring of larger fitness, so the effect of the larger population size is negligible. The discussion up to this point was known from previous work. 

\paragraph{Extra free riders through crossover.} Our main insight lies in the following. With crossover there is an additional chance to make progress. Consider the situation that the whole population is at fitness level $k$, and an offspring $x$ reaches a new fitness level for the first time. Assume for simplicity that there are no free riders in this step, so $\LO(x) = k+1$, although the effect also exists when free riders are present. Then $x$ has a one-bit at position $k+1$ and a zero-bit at position $k+2$. All other individuals have a zero-bit at position $k+1$ because they all have fitness $k$. But it is possible that there is another individual $y$ which has a one-bit at position $k+2$. If $x$ and $y$ perform a crossover, then there is a chance of $1/4$ that it gets the one-bit at position $k+1$ from $x$, and the one-bit at position $k+2$ from $y$, i.e., that it combines the best from the two parents. This effectively gives an \emph{extra free rider}. If this scenario happens for a constant fraction of all levels, this reduces the runtime by a constant factor.

There are two key question for the runtime analysis:
\begin{enumerate}
    \item Conditional on $\LO(x)=k+1$, how likely is it that there is an individual $y$ in the population with a one-bit in position $k+2$?
    \item If there exists such $y$, how likely is it that $y$ transfers its gene to $x$ before it is replaced by individuals of higher fitness?
\end{enumerate}

The answer to the second question is more positive than might seem on first glance, because in each generation the probability that $y$ passes on its gene is only $O(1/\mu)$. However, in order to replace the old population by fitter individuals, the algorithm needs some time: it must select $x$ or its equally fit descendants at least $\mu$ times. (Here we omit the unlikely case that the level is reached a second time by mutation.) Intuitively, this corresponds to $\mu$ chances to select $y$ as the second parent and perform the gene transfer, each with probability $1/\mu$. The real situation is more complex since $y$ could be replaced earlier, but it suffices if the gene continues to exist in the population until half of the population has reached fitness at least $k+1$. In this case, it already has a chance of $\Omega(1)$ to be passed on in form of an extra free rider. We will not need this argument directly for the proofs, but believe that it provides the right intuition: genes that exist are efficiently transferred into extra free riders.

\paragraph{Connection to diversity.} For point 1, recall that $\LO(x)=k+1$ means in particular that $x$ has a zero-bit at position $k+2$. This is key to the situation: we want to obtain a one-bit in a specific position where $x$ can not provide the one-bit by itself. Thus, the probability to understand is: how likely is it that the bit value of $y$ differs from the bit value of $x$ in position $k+2$? This is closely connected to the Hamming distance between $x$ and $y$ and thus, to the diversity. In fact, the \leadingones function has a high level of symmetry, and the bits $k+2,\ldots,n$ do not have any effect on the fitness before the creation of $x$. Hence, if $x$ and $y$ have Hamming distance $d$, then the bits in which they differ are uniformly at random among $k+2,\ldots,n$ (plus the two special position $k$ and $k+1$). Thus, we can compute the probability that $x$ and $y$ differ in position $k+2$ from their Hamming distance, which is directly connected to the population diversity. 

Let us quantify the effect as a function of $\mu$. Once a new fitness interval is reached, the old population is replaced, which represents a genetic bottleneck that reduces diversity. Afterwards, the average Hamming distance starts growing again. If given enough time, it will grow until it reaches $\Theta(\mu)$, at which point it maxes out because diversity may also get lost again whenever individuals are removed from the population. These \emph{equilibrium dynamics} were recently discovered and quantified in~\cite{jorritsma2023comma}. For $\mu = o(\sqrt{n})$, this means that the average Hamming distance stays at $\Theta(\mu)$, and the probability that a fixed individual $y$ differs in position $k+2$ from $x$ is only $O(\mu/n)$. By a union bound over all $\mu$ individuals, the probability that the desired one-bit exists somewhere in the population is at most $O(\mu^2/n) = o(1)$. Since this one-bit typically does not exist, crossover has no chance of providing an extra free rider. We prove this formally, where for technical reasons we make the slightly stronger assumption $\mu = o(\sqrt{n}/\log^2 n)$.

If we modify the \mupoga to break ties in favor of larger population diversity, then the population dynamics changes. We analyze this case for $\mu=2$, where the equilibrium shifts from $\Theta(1)$ to $\Theta(n)$. Moreover, the time required to reach diversity $\Theta(n)$ is only $O(n)$. This is quick enough to give on average a constant number of extra free riders per fitness level, which leads to a constant factor speed-up. 

Although we do not examine the case in this paper, let us briefly speculate on the case $\mu = \Omega(\sqrt{n}) \cap o(n/\log n)$, without modification of the tie-breaking rule. This may look promising since the aforementioned equilibrium dynamics remain true: the average Hamming distance is $\Theta(\mu)$, so it seems conceivable that point 1 from above has a high probability. However, we conjecture that this is not the case, and that the probability is $o(1)$, because the diversity is generated by $o(n)$ positions who differ in many pairs, while $n-o(n)$ positions are identical throughout the population. Nevertheless, we believe that this setting is worth exploring, since a mechanism for increasing diversity in this regime could potentially lead to runtime $o(n^2)$. We leave the exploration of this regime to future work.

\subsection{Related Work}
 

There is a very long history of theoretical work on crossover, and we can only give a brief overview. For a thorough overview of the theoretical study of population diversity we refer to the review by Sudholt~\cite{sudholt2020benefits}. For the more specific question how diversity can provably decrease runtime, a more detailed discussion can be found in~\cite{DoerrEJK23arxiv}.

Our result on \leadingones is by far not the first setting in which crossover is provably beneficial. Historically one of the first rigorous mathematical results were for functions that were specifically tailored to make crossover beneficial, such as the \textsc{RealRoyalRoad} function~\cite{jansen2005real}. A non-tailored example is the \onemax function mentioned above. Sudholt \cite{Sudholt17} proved that crossover accelerates a non-standard version of the \tpoga by a constant factor on \onemax. Subsequent work by Corus and Oliveto~\cite{CorusO18tec} showed that a constant factor speedup is also obtained for the standard \tpoga. However, their analyses rely on the fact that crossover between \emph{any} two different search points is helpful for \onemax. So it sufficed to show that the diversity is not literally zero. Experiments in \cite{CorusO18tec} indicated that larger population sizes than $2$ might be helpful, but so far it could not be mathematically shown that higher population sizes $\mu =\omega(1)$ (or even $\mu > 2$) leads to substantially larger diversity that speeds up optimization on \onemax.

Another important benchmark problem is the \jump function, where the optimum is surrounded by a fitness valley of size $k$. There has been a long and rich line of research for crossover on this function, particularly on the \mupoga and some variations~\cite{dang2017escaping,DoerrEJK23arxiv,JansenW99,KoetzingSTgecco11xxx,lengler2024tight,oliveto2022tight}. It had been understood early that mutation can increase diversity substantially~\cite{KoetzingSTgecco11xxx}, but it remained unclear how crossover influences the population dynamics. Hence, polynomial runtime bounds independent of $k$ (for constant $k$) could only be shown if crossover happens so rarely that it does not influence the  dynamics of population diversity~\cite{JansenW99,KoetzingSTgecco11xxx}, or if the process is amended with diversity-enhancing mechanisms~\cite{dang2017escaping}. Without such mechanisms and for larger crossover probabilities, analyses were for a long time limited to minimal amounts of diversity~\cite{DoerrEJK23arxiv,oliveto2022tight}. Even so recent results as the work by Doerr, Echarghaoui, Jamal and Krejca from 2023~\cite{DoerrEJK23arxiv} could only make use of Hamming distances of at least one, i.e., the proof relied on showing that crossover is frequently performed between two individuals which are not identical to each other. However, very recently Lengler, Opris and Sudholt~\cite{lengler2024tight} could show a tight bound by showing that the typical Hamming distances are $2k$, which is the maximal possible Hamming distance on the plateau of local optima of \jump. Hence, they could show that the diversity is very close to the theoretical maximum. However, they could only show their result for a modified version of the \mupoga in which the parents produce several offspring at the same time, and proceed with the fittest. Nevertheless, the result is a milestone as is was the first result on \jump in which high amounts of diversity could be analytically quantified for a crossover-based algorithm without explicit diversity-enhancing mechanism. Notably, the result in~\cite{lengler2024tight} built on the same techniques from 2023 in~\cite{lengler2023analysing} that we also build upon.


Other theoretical work has shown benefits of problem-specific crossover operators~\cite{DoerrHK12,OlivetoSHY08}, of special ways of applying crossover as in the successful design of the \ollga~\cite{doerr2015black}, and of crossover that is enhanced by diversity-preserving mechanisms~\cite{dang2017escaping,LehreY11,neumann2011effectiveness}. A discussion of those and further results can be found in~\cite{DoerrEJK23arxiv} and~\cite{sudholt2020benefits}.

\section{Preliminaries}\label{sec:prelims}
In this section, we formally introduce the optimization problem and algorithm studied in the paper. Then, we introduce the notations that will be used in our analysis. We also explain the concept of unbiased operators from~\cite{LehreW12} since some of our results hold for arbitrary unbiased mutation and crossover operators.
\subsection{LeadingOnes}\label{sec:LO}
\leadingones, or short \LO, is the function which assigns to each bit-string $x$ the number of consecutive ones from the left of $x$.

\begin{definition}[\leadingones]
Let $n \in \N$. The \leadingones fitness of a bit-string $x \in \{0, 1\}^n$ is 
\begin{equation*}
    \LO(x) = \leadingones(x) = \sum_{i=1}^n \prod_{j=1}^i x_j = \max\{1 \le i \le n \mid \forall 1 \le j \le i: x_j = 1\}.
\end{equation*}
\end{definition}

We will sometimes write $f$ instead of $\leadingones$ for the sake of conciseness.
 
The \leadingones function is a standard benchmark problem. Besides \onemax, it is the most common hillclimbing benchmark in discrete domains. While \onemax is designed as a particularly easy hillclimbing, \leadingones was designed as a harder hillclimbing task~\cite{Rudolph97}: for any search point other than the optima, there exists only one flip that leads to an improvement of the fitness function (flipping the first 0-bit). 

\subsection{The \mupoga}\label{sec:algorithm}
The $(\mu+1)$ Genetic Algorithm, or \mupoga for short, is described in Algorithm~\ref{alg:mu+1} for arbitrary mutation and (binary) crossover operators and for arbitrary tie-breaking rules. Our main result will use \emph{standard bit-mutation} and \emph{uniform crossover}: standard bit-wise mutation with mutation rate $\chi$ flips every bit of a bit-string independently with probability $\chi/n$, and uniform crossover consists in taking each bit from one of the two parents, with equal probability and independently from each other. Pseudo-code for these operators are in Algorithms~\ref{alg:sbm} and~\ref{alg:unicross}. Moreover, we will always break ties in favor of the offspring except for Section~\ref{subsec:tiebreak}, where we explicitly study a variant of the \tpoga which uses a diversity-increasing tie-breaking mechanism.

Some of our results, in particular in Section~\ref{sec:unbiased-results} are true for more general mutation and crossover operators and tie-breakers. 

\begin{algorithm2e}[t]
\caption{The $(\mu+1)$-GA for maximizing a fitness function $f$.}
\label{alg:mu+1}
$t \leftarrow 0$; Generate initial population $P_0 \in (\{0, 1\}^n)^\mu$.\\
\Repeat{\Forever}{
    With probability $p_c$, choose a random parent pair of parents in $P_t$ which do not have the same index and generate the offspring $y$ via crossover. Otherwise, choose a random parent $x \in P_t$ and copy it to get $y$. 
    
    Apply mutation on $y$ to get $y^\prime$.
    
    Choose $z$ uniform at random among the individuals in $P_t$ with minimal fitness.

    \If{ $f(y^\prime) > f(z)$}{
         $P_{t+1} \leftarrow P_t \backslash \{z\} \cup \{y^\prime\}$.
    }
    \If{ $f(y^\prime) = f(z)$ \text{and tie-breaker decides for $y^\prime$}}{
         $P_{t+1} \leftarrow P_t \backslash \{z\} \cup \{y^\prime\}$.
    }
    $t \leftarrow t+1$
}
\end{algorithm2e}


\begin{algorithm2e}[t]
\caption{Standard bit-wise mutation}
\label{alg:sbm}
\Input{$y$, bit-string of size $n$}
\For{$1 \le i \le n$}{
With probability $p$, set $y_i = 1-y_i$
}
\end{algorithm2e}

\begin{algorithm2e}[t]
\caption{Uniform crossover}
\label{alg:unicross}
\Input{$x_1, x_2$, bit-strings of size $n$}
Initialize a string $y$ of size $n$

\For{$1 \le i \le n$}{
With probability $\frac{1}{2}$, set $y_i = x_{1_i}$

Otherwise, set $y_i = x_{2_i}$
}

\Return{y}

\end{algorithm2e}

\subsection{Notation}\label{sec:notation}

\textbf{General notation.} We consider the \leadingones function on the search space $\{0, 1\}^n$ for $n\to\infty$, and all Landau notation like $O(.), \Omega(.), \ldots$ is with respect to this limit. We denote search points by $x = (x_1, ..., x_n)$. For any two search points $x, y$, the \emph{Hamming distance} $H(x, y)$ of $x$ and $y$ is the number of positions $1 \le i \le n$ such that $x_i \neq y_i$. For $x \in \{0, 1\}^n$, we define $0_x = \{1 \le i \le n \mid x_i = 0\}$, and $1_x = \{1 \le i \le n \mid x_i = 1\}$. For $S \subseteq \{1, \dots, n\}$, we define $x_S = (x_i)_{i \in S}$. For the special case where $S$ is an integer interval with lower and upper bounds $m$ and $M$, we may write $x_{[m:M]}$, or even $x_{[m:]}$ if $M=n$. Finally, for two random variables $X$ and $Y$, we write $X \preceq Y$ for "$X$ is stochastically dominated by $Y$". We use the same notation if $Y$ is a probability distribution. We write $\mathcal{G}(p)$ for the geometric distribution with mean $1/p$.\medskip

\textbf{\mupoga process.} We denote by $\chi$ the expected number of bits flipped by any mutation operator, and assume $\chi = \Theta(1)$ throughout the paper. For $t \in \N^*$, $P_t = \{x^t_1, \dots, x^t_\mu\} \in (\{0, 1\}^n)^\mu$ is the population after iteration $t$. $P_0$ is the population after initialization. The best fitness of an individual in $P_t$ is denoted $F_t = \max \{\LO(x),$ $x \in P_t\}$. Any individual in $P_t$ that has fitness $F_t$ is called \emph{fit}. An offspring will be called "valid" if it is included in the population at the next iteration. The event "a valid offspring is produced at timestep $t$" will be denoted by $V_t$. For any $x, y \in P_t$, we say that "x produces an offspring" (or "x and y produce an offspring") if the offspring $y^\prime$ in Algorithm \ref{alg:mu+1} is derived by mutation from a copy of $x$ (in case of no crossover) or from the crossover between $x$ and $y$ (in case of crossover). We call $C_t$ the event "a crossover was performed at iteration $t$", and $M^i_t$ the event "bit $i$ is mutated at time $t$". 

For $0 \le i \le n$, we will use the expression "level $i$ is \emph{reached}" to signify "$\exists t \in \N, F_t = i$". This event will also sometimes be called $R_i$. We denote by $T^\In_i$ the time at which we reach fitness level $i$, by $T^\Out_i$ the time spent under or at fitness level $i$, and by $\Succ_i$ the fitness level reached after crossing or leaving fitness level $i$, that is $F_{T^\Out_i + 1}$. We say that a population $P$ is \emph{consolidated} if all individuals in $P$ have the same fitness $i$, and in this case we write $f(P)=i$ by slight abuse of notation. We call the \emph{consolidation time} for fitness level $i$ the time from reaching this level until a consolidated population on this fitness level occurs, $T^c_i = \min \{t \ge T^\In_i \mid \forall x \in P_t, f(x) = i\}$. Note that there may be no consolidated population on fitness level $i$. In that case, we will say that $T^c_i = T^o_i$ by convention. As in ~\cite{DrosteJW02}, "fitness level $i$ is \emph{essential}" means that it is reached, and left by mutation, i.e $R_i$ is realized and $C_{T^\Out_i} = 0$, or $R_i$ is realized and $C_{T^\Out_i} = 1$, but the intermediate offspring $y$ generated by crossover at $T^\Out_i$ has fitness lower than $i+1$ (the improvement is brought by the mutated final offspring $y^\prime$). We call this event $E_i$. If an essential fitness level $i$ is left before \emph{consolidation} ($T^c_i > T^\Out_i$), it is called "strange", which is denoted $S_i$. An essential fitness level $i$ which is not strange is called "normal", which is denoted by $N_i$. We also denote $\ESucc_i$ the smallest essential fitness level after $i$, i.e. $\ESucc_i := \min \{j > i \mid E_j\}$. We set $\ESucc_i := n+1$ if there is no $j>i$ with $E_j$.\medskip

\textbf{Diversity measure.} Following ~\cite{lengler2023analysing}, we define for a population $P$ the sum of pairwise Hamming distances in the population $ S(P) = \sum_{x \in P} \sum_{y \in P} H(x, y)$, and for $x \in \{0, 1\}^n$, the sum $S_P(x) =  \sum_{y \in P} H(x, y)$ of Hamming distances between $x$ and all individuals in $P$. Note that the average Hamming distance between two individuals (without repetition) is $\frac{S_P(x)}{\mu(\mu-1)}$. When it is clear, we will omit the index of the population $P$ we are summing over: $S_P(x) = S(x)$. 
Finally, we call "diversity of the population at time $t$", the quantity $d_t = \frac{S(P^\prime_t)}{\mu(\mu - 1)(n-F_t - 1)}$, where $P^\prime_t = \{x_{[F_t + 2:]}, x \in P_t \}$. This is the average pairwise Hamming distance of the non-optimized parts of the bit-strings in the population (not counting the bit just after the current fitness level), normalized by the size of the non-optimized part of a fit bit-string.\medskip

\textbf{Runtime.} For $\mu \in \N^*$, the \emph{runtime} $T^\mu = T^\mu_n$ of the \mupoga on \leadingones is the number of function evaluations before the optimum is found. 

\subsection{Unbiased offspring generation mechanisms}
\label{subsec:unbiased}
Our analysis builds on the fact that, for a given individual in the population, the bits between $F_t + 2$ and $n$ are uniformly distributed in the space of bit-strings of size $n - F_t - 1$, and that the bits in which two individuals differ are evenly distributed in this range. As we will show below, this is generally true if mutation, crossover, and tie-breaker are \emph{unbiased} operators. The notion of unbiased operators has been introduced in~\cite{LehreW12} as operators which are invariant under automorphisms of the hypercube. 
%
We recall the definition of unbiased operators from~\cite{lengler2023analysing}. Since the group of automorphisms is generated by permuations and by applying XOR with fixed bit-strings, it suffices to require the following two conditions.
\begin{definition}
    Let $\psi : (\{0, 1\}^n)^k \rightarrow \{0, 1\}^n$ be \emph{k-ary operator}. $\psi$ is \emph{unbiased} if the following two conditions hold for all $y, x_1, \dots, x_k \in \{0, 1\}^n$
    \begin{enumerate}[(i).]
        \item For every permutation of $n$ positions $\sigma$, $$\Pr(\psi(x_1, \dots, x_k) = y) =\Pr(\psi(\sigma(x_1), \dots, \sigma(x_k)) = \sigma(y)).$$
        \item For every bit-string $z$ of length $n$,  $$\Pr(\psi(x_1, \dots, x_k) = y) =\Pr(\psi(x_1 \oplus z, \dots, x_k \oplus z) = y \oplus z). $$
    \end{enumerate}
\end{definition}

The unbiased framework has been very successful especially in the context of black-box complexity~\cite{doerr2020complexity}. 
Most of the standard mutation operators, such as standard bit-wise mutation or the heavy-tailed mutation used in \emph{fast GAs}~\cite{DoerrLMN17} are unbiased. Many crossover operators are unbiased, like uniform crossover, but some, like the single-point crossover, are not, see~\cite{FKRSSTW22} for details. Our default in this paper are standard bit-wise mutation and uniform crossover, both of which are unbiased. Note that a tie-breaker can be considered as a $(\mu+2)$-ary operator, taking as an input the full population of size $\mu$ and two additional search points between which we want to break ties, one of which it needs to return. 

    
For the \mupoga, we call the combination of crossover (if applied), mutation and tie-breaking as \emph{offspring generation mechanism}, and we call such a mechanism unbiased if all three operators that constitute it are unbiased. We call the algorithm \mupoga unbiased if its offspring generation mechanism is unbiased and if it is initialized with a $0$-ary unbiased operator. We may now prove some useful results that are true for an unbiased \mupoga on the \leadingones problem.
%
Define, for every automorphism of the hypercube $\pi$, and every population $P$, $\pi(P) = \{\pi(x), x \in P \}$, and $F(P) = \max\{\LO(y), y \in P\}$. The following result states that the \mupoga with population $P_t$ is invariant under automorphisms that keep the first $F(P_t)+1$ bits fixed.  
\begin{lemma}
    \label{lem:conditional_popgen}
    Consider a run of the \mupoga using an unbiased offspring generation mechanism. Then for all $t \ge 0$, all $Q, P \in \{0, 1\}^\mu$ and all automorphisms $\pi$ of the hypercube such that for all bit-string $x$, and all $j \le F(P) + 1$, $\pi(x)_j = x_j$,
    $$\Pr(P_{t+1} = Q \mid P_t = P) = \Pr(P_{t+1} = \pi(Q) \mid P_t = \pi(P)).$$
\end{lemma}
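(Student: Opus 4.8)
The plan is to show that the transition probabilities of the \mupoga commute with any automorphism $\pi$ that fixes the first $F(P)+1$ bits. The key structural observation is that the fitness value $F(P)$ is invariant under such $\pi$: since $\pi$ fixes the first $F(P)+1$ coordinates, every individual keeps its leading-ones count, so $F(\pi(P)) = F(P)$ and the set of fit individuals, the set of minimal-fitness individuals, etc., are all mapped consistently by $\pi$. I would first establish this invariance carefully, because the entire argument rests on the tie-breaker and the selection step "seeing the same picture" before and after applying $\pi$.

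The main step is to decompose a single generation of Algorithm~\ref{alg:mu+1} into its constituent random operations and apply unbiasedness to each. First I would condition on the parent-selection choices (which parent(s) and which minimal-fitness individual $z$), noting that these are purely combinatorial choices over indices and are therefore unaffected by $\pi$; applying $\pi$ to the whole population leaves the index structure untouched. Then, conditional on the chosen parents, the offspring $y'$ is generated by crossover (if applied) followed by mutation. Since both operators are unbiased, condition (ii) of the definition (invariance under XOR with a fixed string) together with condition (i) gives exactly that $\Pr(\text{offspring} = y' \mid \text{parents } x_1,x_2) = \Pr(\text{offspring} = \pi(y') \mid \text{parents } \pi(x_1),\pi(x_2))$, because $\pi$ is an automorphism and the composition of two unbiased operators is unbiased. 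The acceptance decision then compares $f(y')$ to $f(z)$: because $\pi$ preserves \LO on the relevant individuals (it fixes the first $F(P)+1$ bits, which is enough to preserve the fitness comparison between offspring and the minimal individual), the inequality $f(y') > f(z)$ holds if and only if $f(\pi(y')) > f(\pi(z))$, and likewise for equality, so the tie-breaker (also unbiased, hence commuting with $\pi$) makes the corresponding decision on the image side.

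Putting these together, I would write $\Pr(P_{t+1}=Q \mid P_t=P)$ as a sum over parent/selection choices and over the offspring value $y'$ that produces the resulting population $Q$, and show term-by-term that each summand equals the corresponding summand in $\Pr(P_{t+1}=\pi(Q)\mid P_t=\pi(P))$ under the bijection $y' \mapsto \pi(y')$. The identification $P_{t+1} = P \setminus \{z\} \cup \{y'\}$ transforms into $\pi(P) \setminus \{\pi(z)\} \cup \{\pi(y')\} = \pi(P_{t+1})$, matching the claimed image $\pi(Q)$, and the "no update" case maps to itself. Summing the equal summands yields the lemma.

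The main obstacle I anticipate is bookkeeping around the \emph{fitness-dependent} nature of the hypothesis: the automorphism only fixes the first $F(P)+1$ bits, not all bits, so I must verify at each stage that $F(P)$ really is preserved and that this is exactly the right amount of fixing to keep all fitness comparisons intact. In particular, the offspring $y'$ may have higher fitness than $F(P)$, so $\pi$ need not preserve $f(y')$ as a value — but it does preserve whether $f(y') > F(P)$, whether $f(y')$ equals the minimal fitness, and the ordering needed for the selection and acceptance steps, and this is all the argument requires. Being precise that "unbiasedness is enough at each substep even though $\pi$ is not a full automorphism of the fitness landscape, only of the tail coordinates" is the delicate point, but it follows cleanly once the invariance of $F$ and of the comparison outcomes is nailed down. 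A minor technical care is ensuring the tie-breaker, viewed as a $(\mu+2)$-ary unbiased operator, receives inputs $(\pi(P), \pi(y'), \pi(z))$ and returns the $\pi$-image of what it returns on $(P, y', z)$, which is immediate from its unbiasedness.
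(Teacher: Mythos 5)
Your proposal is correct and follows essentially the same route as the paper's proof: isolate the fitness-dependent steps (selection of the minimal-fitness individual $z$ and the acceptance/tie-breaking comparison), observe that $\pi$ fixes the first $F(P)+1$ bits and hence preserves the fitness of every individual in $P$ as well as the outcome of the comparison with the offspring, and invoke unbiasedness for all remaining operators. Your treatment of the acceptance comparison for an offspring whose fitness may exceed $F(P)$ is in fact slightly more explicit than the paper's, which subsumes it under ``all other steps are unbiased,'' but the argument is the same.
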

\begin{proof}
    Let $\pi$ be as in the lemma. In the notation of Algorithm~\ref{alg:mu+1}, the only part of the algorithm that is not unbiased is the selection of $z$, which is chosen uniformly random among the individuals in $P_t$ with minimal fitness. Denote the set of individuals with minimal fitness in $P$ by $m(P)$. Then we claim
    \begin{equation}
    \label{equiv:least_fit_conservation}
        \pi(m(P)) = m(\pi(P)).
    \end{equation}
    To see this, note that all individuals $x\in P$ have fitness at most $\LO(x)\le F(P)$. Hence, the fitness of $x$ is determined by the bits $x_{[1:F(P)+1]}$. Since these bits are unchanged under $\pi$, we obtain $\LO(\pi(x)) = \LO(x)$ for all $x\in P$. In particular, the set of individuals of minimal fitness in $P$ is unchanged by $\pi$, which implies~\eqref{equiv:least_fit_conservation}. 

    Since $z$ is selected uniformly at random from $m(P_t)$, Equation~\eqref{equiv:least_fit_conservation} implies for all $z_0 \in P$,
    \begin{equation*}
        \Pr(z = z_0 \mid P_t = P) = \Pr(z = \pi(z_0) \mid P_t = \pi(P)).
    \end{equation*}
    
    Since all other steps of Algorithm~\ref{alg:mu+1} are unbiased, they are invariant under arbitrary automorphisms, and in particular they are invariant under $\pi$. This proves Lemma~\ref{lem:conditional_popgen}.\qed

\end{proof}

Since $F(P_t)$ can only increase for the \mupoga, we obtain the following symmetry as an immediate consequence.
\begin{lemma}
\label{lem:uniform_nonopt_pop}
   Consider an unbiased \mupoga. For all $t \ge 0$, all $P \in \{0, 1\}^\mu$ and all automorphism $\pi$ of the hypercube such that for all bit-string $x$, and all $j \le F(P) + 1$, $\pi(x)_j = x_j$, 
    $$\Pr(P_t = P) = \Pr(P_t = \pi(P)).$$
\end{lemma}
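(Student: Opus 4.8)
The plan is to proceed by induction on $t$, with the inductive claim being exactly the statement of the lemma at time $t$: for every population $P$ and every automorphism $\pi$ fixing the first $F(P)+1$ bits, $\Pr(P_t = P) = \Pr(P_t = \pi(P))$. Throughout I would use the observation already established inside the proof of Lemma~\ref{lem:conditional_popgen}, namely that if $\pi$ fixes the first $F(P)+1$ bits then $\LO(\pi(x)) = \LO(x)$ for every $x \in P$, and hence $F(\pi(P)) = F(P)$. For the base case $t = 0$, the population is produced by a $0$-ary unbiased operator, whose distribution is invariant under every automorphism of the hypercube; thus $\Pr(P_0 = P) = \Pr(P_0 = \pi(P))$ holds for all $\pi$, in particular for those permitted in the lemma.

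For the inductive step I would expand via the law of total probability and then reindex the sum using that $\pi$ is a bijection on populations:
\begin{equation*}
\Pr(P_{t+1} = \pi(Q)) = \sum_{P} \Pr(P_{t+1} = \pi(Q) \mid P_t = \pi(P))\,\Pr(P_t = \pi(P)).
\end{equation*}
The goal is to rewrite each summand back into the corresponding summand for $\Pr(P_{t+1} = Q)$. For the conditional factor I would invoke Lemma~\ref{lem:conditional_popgen}, which gives $\Pr(P_{t+1} = \pi(Q)\mid P_t = \pi(P)) = \Pr(P_{t+1} = Q \mid P_t = P)$ whenever $\pi$ fixes the first $F(P)+1$ bits; for the prior factor I would apply the induction hypothesis to get $\Pr(P_t = \pi(P)) = \Pr(P_t = P)$, again under the same condition on $\pi$. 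Summing the rewritten terms then reconstructs $\Pr(P_{t+1} = Q)$.

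The main obstacle is that both tools require $\pi$ to fix the first $F(P)+1$ bits, whereas the hypothesis only guarantees that $\pi$ fixes the first $F(Q)+1$ bits. I would resolve this using the monotonicity of the best fitness, $F(P_{t+1}) \ge F(P_t)$, by splitting the sum according to whether $F(P) \le F(Q)$ or $F(P) > F(Q)$. On the first part $F(P)+1 \le F(Q)+1$, so $\pi$ also fixes the first $F(P)+1$ bits and both the lemma and the induction hypothesis apply verbatim. On the second part, both the original summand $\Pr(P_{t+1} = Q \mid P_t = P)$ and its image $\Pr(P_{t+1} = \pi(Q)\mid P_t = \pi(P))$ vanish: the former because fitness cannot drop from $F(P)$ to $F(Q) < F(P)$, and the latter because any $x \in P$ with $\LO(x) = F(P) \ge F(Q)+1$ has its first $F(Q)+1$ bits all equal to one, so $\pi$ (fixing them) yields $\LO(\pi(x)) \ge F(Q)+1$, whence $F(\pi(P)) > F(Q) = F(\pi(Q))$ and the transition is again forbidden by monotonicity. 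Since the problematic terms drop out of both expressions, the two sums agree and the induction closes. I expect this bookkeeping around the fitness-monotonicity case distinction to be the only genuinely delicate point; everything else is a direct transcription of Lemma~\ref{lem:conditional_popgen} and the induction hypothesis.
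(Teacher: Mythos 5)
Your proposal is correct and follows essentially the same route as the paper: induction on $t$, decomposing $\Pr(P_{t+1}=\pi(Q))$ by total probability over the previous population, applying Lemma~\ref{lem:conditional_popgen} to the transition kernel and the induction hypothesis to the prior, and reindexing the sum via the bijection $\pi$. The only difference is that you explicitly dispose of the terms with $F(P)>F(Q)$ using fitness monotonicity so that both tools are applied only where $\pi$ provably fixes the first $F(P)+1$ bits --- a point the paper's proof passes over silently --- but this added care does not change the argument.
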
 
\begin{proof}
    We proceed by recursion over $t$. For $t = 0$, we know that $P_0$ is obtained by a zero-ary unbiased operator, so the proposition is true.
   
    Suppose that the proposition is true for some $t \ge 0$. 
    Then,
    \begin{align*}
& \resizebox{\linewidth}{!}{%
    $\begin{aligned}
        \Pr(P_{t+1} = \pi(P)) &= \sum_{Q \in \{0, 1\}^\mu}\Pr(P_t = Q)\Pr(P_{t+1} = \pi(P) \mid P_t = Q)\\
        &=\sum_{Q \in \{0, 1\}^\mu}\Pr(P_t = Q)\Pr(P_{t+1} = P \mid P_t = \pi^{-1}(Q)) &\text{by Lemma \ref{lem:conditional_popgen},} \\
        &=\sum_{Q \in \{0, 1\}^\mu}\Pr(P_t = \pi^{-1}(Q))\Pr(P_{t+1} = P \mid P_t = \pi^{-1}(Q)) &\text{by recursion hypothesis,}\\
        &=\sum_{R \in \{0, 1\}^\mu} \Pr(P_t = R)\Pr(P_{t+1} = P \mid P_t = R) &\text{setting $R = \pi^{-1}(Q)$,}\\
        &= \Pr(P_{t+1} = P).
    \end{aligned}$}
\tag*{\qed}\end{align*}
\end{proof}

Lemma \ref{lem:uniform_nonopt_pop} has some interesting corollaries, the first one being on the distribution of the non-optimized part of a single bit-string in the population.

\begin{corollary}
    For all $1 \le j \le \mu$, $x^t_{j_{[F_t + 2:]}}$ is uniformly distributed on $\{0, 1\}^{n - F_t - 1}$.
\end{corollary}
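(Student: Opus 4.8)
The plan is to derive the corollary directly from Lemma~\ref{lem:uniform_nonopt_pop} by exhibiting, for each target value $v \in \{0,1\}^{n-F_t-1}$ of the non-optimized part $x^t_{j_{[F_t+2:]}}$, a suitable automorphism $\pi$ that fixes the first $F_t+1$ bits and permutes the possible values of the tail bits. Concretely, fix an index $j$ and two candidate tail values $v, w \in \{0,1\}^{n-F_t-1}$. I would like to build an automorphism $\pi$ that sends populations whose $j$-th individual has tail $v$ to populations whose $j$-th individual has tail $w$, while keeping the first $F_t+1$ bits of every individual fixed, and then invoke the lemma to conclude that these two events have equal probability. Since $v$ and $w$ were arbitrary, the tail is uniformly distributed.

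The natural automorphism to use is the XOR with the fixed bit-string $z$ that is $0$ on positions $1,\dots,F_t+1$ and equals $v \oplus w$ on positions $F_t+2,\dots,n$; this is one of the generators of the automorphism group and it clearly fixes the first $F_t+1$ bits, so it satisfies the hypothesis of Lemma~\ref{lem:uniform_nonopt_pop}. The technical subtlety is that the value $F_t$ is itself random, so I cannot simply fix $z$ in advance and apply the lemma. I would handle this by conditioning on the event $\{F_t = k\}$ for each fixed $k$, and working inside the set of populations $P$ with $F(P)=k$. On this conditioned event the relevant XOR-automorphism $\pi$ depends only on the fixed $k$ and on $v\oplus w$, and it restricts to a bijection of $\{P : F(P)=k\}$ onto itself (because XOR with $z$ preserves the first $k+1$ bits and hence preserves $\LO$-values and the maximum fitness). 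Lemma~\ref{lem:uniform_nonopt_pop} then gives $\Pr(P_t = P) = \Pr(P_t = \pi(P))$ for every such $P$, and summing over the populations whose $j$-th individual has tail $v$ yields equality with the corresponding sum for tail $w$.

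The main obstacle I anticipate is bookkeeping the conditioning on $F_t$ cleanly: I must check that $\pi$ maps the event $\{F_t=k,\ x^t_{j_{[k+2:]}}=v\}$ bijectively onto $\{F_t=k,\ x^t_{j_{[k+2:]}}=w\}$, which requires that $\pi$ neither changes $F_t$ nor the positions of the tail but exactly swaps the value $v$ for $w$ in coordinate $j$. Because $z$ is supported on positions $\ge F_t+2$ and equals $v\oplus w$ there, XOR-ing flips $v$ into $w$ in every individual's tail; this does change the \emph{other} individuals' tails, but that is harmless since I am only tracking individual $j$ and summing over all configurations of the rest. Once the bijection is established, the equality $\Pr(x^t_{j_{[k+2:]}}=v \mid F_t=k) = \Pr(x^t_{j_{[k+2:]}}=w \mid F_t=k)$ holds for all $v,w$, so the conditional distribution is uniform on $\{0,1\}^{n-k-1}$; since this holds for every $k$, the (unconditional) statement of the corollary, which is implicitly about the distribution on $\{0,1\}^{n-F_t-1}$, follows.
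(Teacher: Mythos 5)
Your proof is correct and follows essentially the same route as the paper's: XOR with a string supported on positions $F_t+2,\dots,n$ and equal to $v\oplus w$ there, apply Lemma~\ref{lem:uniform_nonopt_pop}, and use the induced bijection between the two sets of populations. Your explicit conditioning on $\{F_t=k\}$ is a careful (and welcome) handling of a subtlety the paper's proof leaves implicit, but it does not change the argument.
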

\begin{proof}
    Let $1 \le j \le \mu$. Let $s, s^\prime \in \{0, 1\}^{n - F_t - 1}$. Denote by $b$ the bit-string of length $n$ such that $b_i = 0$ for all $i \le F_t + 1$, and $b_{[F_t + 1:]} = s \oplus s^\prime$. Then the automorphism of the hypercube $\pi_b$ defined as $\pi_b(x) = x \oplus b$ for all $x \in \{0, 1\}^n$ satisfies the hypothesis of Lemma \ref{lem:uniform_nonopt_pop}. Moreover, for $y \in \{0, 1\}^{n - F_t - 1}$, if we denote as $\Pi(y)$ the set of populations, the $j$-th element of which has bits from $F_t + 2$ to $n$ equal to $x$, then $\pi_b$ induces a bijection between $\Pi(s)$ and $\Pi(s^\prime)$. It follows from Lemma \ref{lem:uniform_nonopt_pop} that:
    $$\Pr(P_t \in \Pi(s)) = \Pr(P_t \in \Pi(s^\prime)),$$
    which can be rewritten as:
    \begin{align*}
    \Pr(x^t_{j_{[F_t + 2:]}} = s) = \Pr(x^t_{j_{[F_t + 2:]}} = s^\prime).\tag*{\qed}
    \end{align*}
\end{proof}
We also get the following very useful result that helps us bound the size of the fitness jumps.
\begin{corollary}
    \label{cor:small_fitness_improvements}
    For all $t \ge 0$, for all $2 \le j$, $\Pr(F_{t+1} - F_t \ge j \mid F_{t+1} - F_t \ge 1) = 2^{-(j-1)}$.
\end{corollary}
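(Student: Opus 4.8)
The plan is to reduce the statement to a claim about the \emph{tail} of the fitness-improving offspring, and then to establish that this tail is uniformly distributed by a symmetry argument. First I would fix the current best fitness $F_t = k$. Since $F_t = F(P_t)$ can only increase and any offspring $y^\prime$ with $\LO(y^\prime) > k$ automatically beats the minimal-fitness individual $z$, the event $\{F_{t+1} - F_t \ge 1\}$ coincides with $\{\LO(y^\prime) \ge k+1\}$, and more generally $\{F_{t+1} - F_t \ge j\}$ coincides with $\{\LO(y^\prime) \ge k+j\}$. Conditioned on $\LO(y^\prime) \ge k+1$ the bits $y^\prime_1,\dots,y^\prime_{k+1}$ are already fixed to $1$, so the additional requirement $\LO(y^\prime) \ge k+j$ is exactly that the $j-1$ tail bits $y^\prime_{k+2},\dots,y^\prime_{k+j}$ all equal $1$. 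Hence everything reduces to understanding the distribution of the tail $y^\prime_{[k+2:]}$ conditioned on $F_t = k$ and $\LO(y^\prime) \ge k+1$.

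The core step is to show that this tail is uniform on $\{0,1\}^{n-k-1}$. I would fix an arbitrary automorphism $\pi$ of the hypercube that fixes the first $k+1$ positions and acts only on positions $k+2,\dots,n$. By Lemma~\ref{lem:uniform_nonopt_pop} the law of $P_t$ is invariant under $\pi$ on the event $\{F(P_t)=k\}$ (which is itself $\pi$-invariant, since fitness depends only on the first $k+1$ bits). The offspring generation mechanism is a composition of uniform parent selection, crossover and mutation, all unbiased, so it is an unbiased, hence $\pi$-equivariant, randomized operator: $\Pr(y^\prime = y \mid P_t = P) = \Pr(y^\prime = \pi(y) \mid P_t = \pi(P))$. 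Combining these two facts, the joint law of $(P_t, y^\prime)$ conditioned on $F_t = k$ is invariant under the simultaneous map $(P_t, y^\prime) \mapsto (\pi(P_t), \pi(y^\prime))$. Crucially the conditioning event $\{\LO(y^\prime) \ge k+1\}$ is also $\pi$-invariant, so the invariance survives further conditioning on it. Marginalizing out $P_t$ then shows the conditional law of $y^\prime_{[k+2:]}$ is invariant under every tail automorphism; since the XOR automorphisms alone act transitively on $\{0,1\}^{n-k-1}$, the tail must be uniform.

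With uniformity in hand the bits $y^\prime_{k+2},\dots,y^\prime_{k+j}$ are independent fair coins, so $\Pr(F_{t+1}-F_t \ge j \mid F_t = k,\, F_{t+1}-F_t \ge 1) = 2^{-(j-1)}$ whenever $k+j \le n$ (for $k+j > n$ the left side is $0$, i.e.\ the geometric law is truncated at $n-k$, the caveat already mentioned in the introduction). Since this value does not depend on $k$, I would de-condition by total probability over $F_t = k$: writing $q_k := \Pr(F_{t+1}-F_t \ge 1 \mid F_t = k)$, the numerator of $\Pr(F_{t+1}-F_t\ge j \mid F_{t+1}-F_t\ge 1)$ equals $2^{-(j-1)}\sum_k \Pr(F_t=k)\,q_k$ and the denominator equals $\sum_k \Pr(F_t=k)\,q_k$, leaving $2^{-(j-1)}$ (away from the boundary). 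The main obstacle is the core symmetry step: one must check that the composite offspring operator is genuinely unbiased and, more delicately, that the invariance of the joint law of $(P_t,y^\prime)$ is not destroyed by conditioning on the improvement event — which works precisely because that event constrains only the first $k+1$ bits and is therefore fixed by every admissible $\pi$. The truncation for $k+j>n$ is the only place where the stated identity is inexact, and since it only lowers the probability it is harmless for the upper-bound applications.
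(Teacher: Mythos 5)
Your proposal is correct and follows essentially the same route as the paper's own (much terser) proof: both reduce the claim to showing that the tail $y^\prime_{[F_t+2:]}$ of the improving offspring is uniformly distributed, via the unbiasedness/automorphism symmetry of Lemma~\ref{lem:uniform_nonopt_pop}, and then read off the probability of a streak of $j-1$ one-bits. Your write-up merely fills in the details the paper leaves implicit (why conditioning on the improvement event preserves the $\pi$-invariance, the de-conditioning over $F_t=k$, and the truncation at $k+j>n$), which are all handled correctly.
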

\begin{proof}
    Using a similar argument as in the last proof, and the unbiasedness of $\mathcal{C}$, we simply argue that for all $t \ge 0$, the offspring $y_t$ generated at time $t$ is such that $y_{t_{[F_t + 2:]}}$ is uniformly distributed over the set of bit-strings of size $n - F_t - 1$, even conditioned on the fact that it improves fitness. Hence, the probability that there it has a streak of $j-1$ $1$-bits starting from position $F_t + 2$ is $2^{-(j - 1)}$. \qed
\end{proof}

Finally, we prove this result on the distribution of the bits that differ between two individuals in the population at a given iteration. Recall that $d_t$ is the average density of non-equal bits in the population when we restrict to the non-optimized part of the bit-string $[F_t+1:n]$.
\begin{corollary}\label{cor:unbiased}
    Let $t \ge 0$, and $1 \le i, i' \le \mu$ with $i \neq i'$. Let $d := x^t_{i_{[F_t + 2:]}} \oplus x^t_{i'_{[F_t + 2:]}}$ be the string that has a $1$ where $x^t_i$ and $x^t_{i'}$ are different and a $0$ elsewhere, restricted to the non-optimized part. Then, for any pair of bit-strings $s, s^\prime \in \{0, 1\}^{n - F_t - 1}$ of equal Hamming weight:
    $$
    \Pr(d = s) = \Pr(d = s^\prime).
    $$
In particular, for any $j\in[F_t+2:n]$ and for all $H\ge 0$,
    \begin{align*}
        \Pr\left((x_i^t)_j \neq (x_{i'}^t)_j \mid H(x^t_{i_{[F_t + 2:]}}, x^t_{{i'}_{[F_t + 2:]}})= H\right) & = \frac{H}{n - F_t - 1}.
    \end{align*}
Moreover, if the tie-breaker is symmetric with respect to permutations of the population and the initial population is uniformly random, then
    \begin{align*}
        \Pr((x_i^t)_j \neq (x_{i'}^t)_j \mid d_t) & = d_t.
    \end{align*}
\end{corollary}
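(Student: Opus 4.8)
The plan is to treat the three assertions in order, extracting the first two directly from the coordinate-permutation symmetry already encoded in Lemma~\ref{lem:uniform_nonopt_pop}, and obtaining the third from an additional invariance under relabelling the $\mu$ population slots that the symmetric tie-breaker and uniform initialization provide. Throughout I would condition on $F_t=f$, so that $d := x^t_{i_{[f+2:]}}\oplus x^t_{i'_{[f+2:]}}$ is a string of fixed length $n-f-1$; conditioning on $F_t$ is harmless since the final statements aggregate over it. For the first claim, given $s,s'\in\{0,1\}^{n-f-1}$ of equal Hamming weight, I would choose a coordinate permutation $\sigma$ of $[f+2:n]$ (fixing $[1:f+1]$) with $\sigma(s)=s'$, which exists because permutations act transitively on strings of a fixed weight. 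Then $\sigma$ is a hypercube automorphism fixing the first $f+1$ bits, and the XOR is equivariant, $d(\sigma(P))=\sigma(d(P))$, so $\{d=s'\}$ is exactly the $\sigma$-image of $\{d=s\}$. Applying Lemma~\ref{lem:uniform_nonopt_pop} term by term over this image (every relevant population has max-fitness $f$) yields $\Pr(d=s')=\Pr(d=s)$.

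The second claim is then a counting step: conditioned on $H(x^t_{i_{[f+2:]}},x^t_{i'_{[f+2:]}})=H$, the first claim shows $d$ is uniform over the weight-$H$ strings of $\{0,1\}^{n-f-1}$, and the fraction of such strings with a one in a fixed coordinate is $\binom{n-f-2}{H-1}/\binom{n-f-1}{H}=H/(n-f-1)$. Since $(x^t_i)_j\neq(x^t_{i'})_j$ is the event $d_j=1$, this is the stated probability $H/(n-F_t-1)$.

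The third claim is the substantive one, and it is where the extra hypotheses enter. First I would show, by an induction on $t$ mirroring Lemma~\ref{lem:uniform_nonopt_pop}, that the law of $P_t$ is invariant under any permutation $\tau$ of the $\mu$ slots: uniform initialization gives the base case, and in the inductive step every operation of Algorithm~\ref{alg:mu+1} --- uniform selection of a parent or of an unordered pair, mutation, crossover, the uniform choice of a minimal-fitness $z$, and (by assumption) the tie-breaker --- is equivariant under relabelling of slots. Since $d_t$ is a symmetric function of both the slots and the coordinates, and both symmetries survive conditioning, the conditional law given $\{F_t=f,\,d_t=\delta\}$ is invariant under both slot permutations and coordinate permutations of $[f+2:n]$. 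Consequently $q:=\Pr((x^t_i)_j\neq(x^t_{i'})_j\mid F_t=f,\,d_t=\delta)$ does not depend on the ordered pair $i\neq i'$ nor on $j\in[f+2:n]$.

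To evaluate $q$ I would average this common value over all $\mu(\mu-1)$ ordered pairs and all $n-f-1$ coordinates, recognizing the resulting sum of indicators as $S(P'_t)$:
\[
q=\frac{\E\!\left[\sum_{i\neq i'}\sum_{j}\mathds{1}\!\left[(x^t_i)_j\neq(x^t_{i'})_j\right]\ \middle|\ F_t=f,\,d_t=\delta\right]}{\mu(\mu-1)(n-f-1)}=\frac{\E\!\left[S(P'_t)\mid F_t=f,\,d_t=\delta\right]}{\mu(\mu-1)(n-f-1)}=\delta,
\]
the last step because $S(P'_t)=d_t\,\mu(\mu-1)(n-F_t-1)$ is deterministically $\delta\,\mu(\mu-1)(n-f-1)$ on the conditioning event. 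Averaging the identity $q=\delta$ over the values of $F_t$ compatible with $d_t=\delta$ removes the conditioning on $F_t$ and gives $\Pr((x^t_i)_j\neq(x^t_{i'})_j\mid d_t)=d_t$. I expect the main obstacle to be precisely this third part: establishing slot-exchangeability cleanly (a biased tie-breaker could make some pairs systematically more diverse, which is exactly why symmetry of the tie-breaker and uniform initialization are indispensable) and checking that conditioning on the symmetric statistic $d_t$ really preserves both symmetries, so that $q$ is genuinely constant across pairs and positions. The first two parts are routine once the equivariance $d(\sigma(P))=\sigma(d(P))$ is noted.
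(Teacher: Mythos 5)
Your proof is correct, and it follows the route the paper intends: the paper gives no proof of Corollary~\ref{cor:unbiased} at all, leaving it as an immediate consequence of Lemma~\ref{lem:uniform_nonopt_pop} in the same way as the two preceding corollaries (whose proofs use XOR-automorphisms; here one uses coordinate permutations fixing $[1:F_t+1]$, exactly as you do). Your treatment of the first two claims is the standard transitivity-plus-counting argument the authors clearly have in mind. For the third claim you correctly identify that coordinate symmetry alone is not enough --- it only shows the probability equals the expected normalized Hamming distance of the \emph{specific} pair $(i,i')$, whereas $d_t$ is the population average --- and that the slot-exchangeability supplied by the symmetric tie-breaker and uniform initialization is what closes this gap; this step is genuinely needed and is not even sketched in the paper, so your write-up is if anything more complete than the source.
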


\subsubsection{Preliminary results on the consolidation process}\label{sec:unbiased-results} 

The following lemmas treat the time of raising the whole population to fitness at least $i$, after this fitness level has been found. This time is well-known to have expectation $O(\mu\log \mu)$, e.g.~\cite{Witt06}. Here we provide a tail bound. Note that the crossover probability $p_c$ and the parameter $p_{\text{clone}}$ that appears in the following lemma are not included into the index of $C_\beta$ because those are part of the algorithm, which we consider as fixed.

\begin{lemma}\label{lem:consolidation_tailbound}
     Consider the \mupoga with a mutation operator that has probability $p_{\text{clone}}$ of duplicating the parent. Let $1 \le i \le n$ be any fitness level of \leadingones. For any $t \ge 0$, denote by $X_t$ the number of individuals of fitness larger or equal to $i$ in $P_{T^{in}_i + t}$. Then, for any constant $\beta > 0$, $p_{\text{clone}} >0$ and $p_c < 1$, there exists $C_{\beta} > 0$ such that for $n$ big enough the following holds for all $C > C_{\beta}$.
     \begin{align}\label{eq:consolidation_tailbound}
     \Pr(X_{C\mu \log \mu} < \mu) \le \mu^{-\beta}.
     \end{align}

     More precisely, the statement holds for $C \ge (1+\beta)/((1-p_c)p_{\text{clone}})$. For standard bit mutation with constant mutation rate $\chi$, Equation~\eqref{eq:consolidation_tailbound} holds for all $C > (1+\beta)e^{\chi}/(1-p_c)$ if $n$ is sufficiently large.
\end{lemma}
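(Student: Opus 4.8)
The plan is to track the number $X_t$ of individuals with fitness at least $i$ as a growing process and show it reaches $\mu$ within $C\mu\log\mu$ steps with high probability. Once fitness level $i$ is reached at time $T^{\In}_i$, we have $X_0 \ge 1$. The key observation is that $X_t$ is non-decreasing: the \mupoga is elitist and removes an individual of minimal fitness, so an individual of fitness $\ge i$ can only be removed once $X_t = \mu$, i.e. once the whole population has fitness $\ge i$. Thus it suffices to lower-bound the rate at which $X_t$ increases. In any generation where currently $X_t = k < \mu$, the population contains $k$ individuals of fitness $\ge i$. I would lower-bound the probability that $X_{t+1} = k+1$ by the probability of the following sufficient event: no crossover is performed (probability $1-p_c$), the chosen parent $x$ is one of the $k$ fit individuals (probability $k/\mu$), and mutation clones it (probability $p_{\text{clone}}$). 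Since $z$ is chosen among minimal-fitness individuals, it has fitness $\le$ that of every fit individual, so $f(y') = f(x) \ge i > f(z)$ or the tie-break applies, either way the offspring is accepted and $X$ increases. Hence the per-step increase probability is at least $p_k := (1-p_c)\,p_{\text{clone}}\,k/\mu$.

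**Reduction to coupon-collector / geometric sums.**

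The waiting time to go from $k$ to $k+1$ is stochastically dominated above by a geometric random variable $G_k \sim \mathcal{G}(p_k)$ with mean $\mu/((1-p_c)p_{\text{clone}}k)$. The total consolidation time is then dominated by $\sum_{k=1}^{\mu-1} G_k$, whose expectation is
\begin{equation*}
\sum_{k=1}^{\mu-1}\frac{\mu}{(1-p_c)p_{\text{clone}}\,k} = \frac{\mu\,H_{\mu-1}}{(1-p_c)p_{\text{clone}}} = (1+o(1))\frac{\mu\log\mu}{(1-p_c)p_{\text{clone}}},
\end{equation*}
where $H_{\mu-1}$ is the harmonic number. This already explains the leading constant $1/((1-p_c)p_{\text{clone}})$ in the statement. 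To get the tail bound $\Pr(X_{C\mu\log\mu}<\mu)\le\mu^{-\beta}$, I would note that $\{X_{C\mu\log\mu}<\mu\}$ is exactly the event $\{\sum_{k=1}^{\mu-1}G_k > C\mu\log\mu\}$, and bound the latter.

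**The tail estimate — the main technical step.**

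The main obstacle is controlling the upper tail of the sum of independent geometrics with the heavily varying means $\mu/((1-p_c)p_{\text{clone}}k)$; the small-$k$ terms dominate the variance, so a naive Chernoff argument needs care. I would handle this via a union bound combined with a Chernoff-type estimate for each geometric, or more cleanly by a moment-generating-function / Bernstein bound for sums of independent geometric variables, exploiting that $\sum_k 1/k \approx \log\mu$ and $\sum_k 1/k^2 = O(1)$ so the sum concentrates around its mean up to constant-factor deviations. Choosing $C \ge (1+\beta)/((1-p_c)p_{\text{clone}})$ then makes the deviation from the mean a constant factor, yielding a failure probability of order $\mu^{-\beta}$. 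Finally, for standard bit mutation with rate $\chi/n$, the clone probability is $p_{\text{clone}} = (1-\chi/n)^n = (1+o(1))e^{-\chi}$, so substituting $p_{\text{clone}} = e^{-\chi}(1-o(1))$ into the bound on $C$ gives the claimed threshold $C > (1+\beta)e^{\chi}/(1-p_c)$ for $n$ sufficiently large. I expect the crux to lie entirely in the concentration inequality for the weighted geometric sum; the structural claims (monotonicity of $X_t$, the per-step lower bound) are straightforward from the algorithm definition.
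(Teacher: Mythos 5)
Your proposal is correct and follows essentially the same route as the paper: dominate the consolidation time by a sum of independent geometric variables with success probabilities at least $(1-p_c)p_{\text{clone}}k/\mu$ (obtained via the clone-a-fit-parent event) and apply a coupon-collector-style tail bound for such sums; the paper simply invokes Theorem~1.10.35 of Doerr and Neumann for the concentration step that you sketch via moment-generating functions. The only difference is that you spell out the elitism/acceptance argument in more detail than the paper does, which is harmless.
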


\begin{proof}
    For $1 \le j \le \mu - 1$, let $T_j$ be the random variable denoting the time to go from $j$ to $s > j$ individuals with fitness larger or equal to $i$ (let us call such a fitness "large"). If an individual that has large fitness is chosen for reproduction and makes a copy of itself, the number of individuals with large fitness increases. This happens with probability at least $j(1-p_c)p_{\text{clone}}/\mu$.
    Now, note that for any $C > 0$, 
    $$\Pr(X_{C\mu \log \mu} < \mu) \le \Pr(\sum_{j=1}^{\mu - 1} T_j > C\mu \log \mu).$$
    Finally, we apply Theorem 1.10.35 from \cite{DoerrN20}, which yields~\eqref{eq:consolidation_tailbound} for $C \ge C_\beta := (1+\beta)/((1-p_c)p_{\text{clone}})$. For standard bit mutation with mutation rate $\chi$, the statement follows from $p_{\text{clone}} = (1-\chi/n)^n = (1-o(1))e^{-\chi}$, and hence $(1+\beta)/((1-p_c)p_{\text{clone}}) < C$ for sufficiently large $n$.\qed
\end{proof}

The above lemma immediately translates into a tail bound for the consolidation time $T_i^c$. Recall that this is the first time when either all individuals have fitness at least $i$, or at least one individual has fitter strictly larger than $i$.
\begin{lemma}
\label{lem:stayingisconsolidating}
    In the situation of Lemma~\ref{lem:consolidation_tailbound}, for any constant $\beta > 0$, $p_{\text{clone}} >0$, $p_c < 1$ and $C > C_{\beta} $ and any fitness level $1\le i \le n$, if $n$ is sufficiently large,
    $$
    \Pr(T^c_i - T^{\In}_i > C\mu \log \mu) \le  \mu^{-\beta}.
    $$
\end{lemma}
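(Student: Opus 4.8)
The plan is to reduce Lemma~\ref{lem:stayingisconsolidating} directly to Lemma~\ref{lem:consolidation_tailbound} by comparing the two stopping times $T^c_i$ and the quantity $X_t$ controlled in the previous lemma. Recall that $T^c_i = \min\{t \ge T^\In_i \mid \forall x \in P_t, f(x) = i \text{ or some } f(x) > i\}$; in words, consolidation at level $i$ occurs as soon as either every individual has reached fitness at least $i$, or some individual has already surpassed $i$. The first observation I would make is that, because the \mupoga is elitist, the number $X_t$ of individuals of fitness $\ge i$ in $P_{T^\In_i + t}$ is monotonically non-decreasing in $t$: an individual of fitness $\ge i$ can only be removed from the population if it is selected as $z$, and $z$ is chosen among the individuals of \emph{minimal} fitness, which (once level $i$ is reached) have fitness $< i$ unless all individuals already have fitness $\ge i$.

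The key translation step is the set-theoretic inclusion of events
\begin{equation*}
    \{T^c_i - T^\In_i > C\mu\log\mu\} \subseteq \{X_{C\mu\log\mu} < \mu\}.
\end{equation*}
To justify this, I would argue the contrapositive: suppose $X_{C\mu\log\mu} = \mu$, i.e.\ at time $T^\In_i + C\mu\log\mu$ all $\mu$ individuals have fitness at least $i$. Then at that time the population is either consolidated on level $i$ (all individuals have fitness exactly $i$) or some individual has fitness strictly greater than $i$; in either case the defining condition of $T^c_i$ is met at or before time $T^\In_i + C\mu\log\mu$, so $T^c_i - T^\In_i \le C\mu\log\mu$. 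Here I use the convention stated in the notation section that $T^c_i = T^\Out_i$ when no consolidated population on level $i$ ever occurs, which is exactly the case handled by the "some individual has fitness $>i$" branch. Given this inclusion, the tail bound follows immediately:
\begin{equation*}
    \Pr(T^c_i - T^\In_i > C\mu\log\mu) \le \Pr(X_{C\mu\log\mu} < \mu) \le \mu^{-\beta},
\end{equation*}
where the last inequality is precisely Equation~\eqref{eq:consolidation_tailbound} from Lemma~\ref{lem:consolidation_tailbound}, valid under the same hypotheses $\beta>0$, $p_{\text{clone}}>0$, $p_c<1$, $C>C_\beta$, and $n$ sufficiently large.

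The main conceptual point to get right is the event inclusion, and in particular checking that the convention $T^c_i = T^\Out_i$ covers the "strange" case where the level is left by a fitter individual before genuine consolidation; this is where a careless argument could miss that $X_t=\mu$ already guarantees the consolidation condition even without uniform fitness exactly $i$. I do not anticipate any serious analytic obstacle, since all the probabilistic work — the coupon-collector-style multiplicative drift and the invocation of the tail bound from~\cite{DoerrN20} — was already carried out in Lemma~\ref{lem:consolidation_tailbound}; the present lemma is purely a matter of recognizing that reaching $X_t = \mu$ individuals of fitness $\ge i$ is a sufficient (indeed, it is the triggering) condition for consolidation. Hence the proof is essentially a one-line deduction once the inclusion of events is stated cleanly.
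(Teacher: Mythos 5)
Your proposal is correct and follows essentially the same route as the paper: both reduce the lemma to the event inclusion $\{T^c_i - T^\In_i > C\mu\log\mu\} \subseteq \{X_{C\mu\log\mu} < \mu\}$ and then invoke Lemma~\ref{lem:consolidation_tailbound}. Your extra care in checking that the convention $T^c_i = T^\Out_i$ handles the case where the level is left before genuine consolidation is a valid (and slightly more explicit) justification of the same step the paper makes when it notes that no individual exceeds fitness $i$ before time $T^c_i$.
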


\begin{proof}
    Let $1 \le i \le n$ be any fitness level. For any $t \ge 0$, denote by $X_t$ the number of individuals of fitness larger or equal to $i$ in $P_{T^{in}_i + t}$. For $0 \le t \le T^c_i - T^{in}_i$, there are no individuals with fitness larger than $i$ in the population, so the population is consolidated on $i$ at time $C\mu \log \mu$ if and only if $X_{C\mu \log \mu} = \mu$. Hence,
    \begin{align*}\Pr(T^c_i - T^{\In}_i > C\mu \log \mu) & = \Pr(X_{C\mu \log \mu} < \mu \cap T^c_i - T^{\In}_i > C\mu \log \mu)  \\
    & \le \Pr(X_{C\mu \log \mu} < \mu).
    \end{align*}
    We conclude by Lemma~\ref{lem:consolidation_tailbound}.\qed
\end{proof}

As mentioned above, the same bound also holds in expectation. This can either be derived from Lemma~\ref{lem:stayingisconsolidating} with a bit of computation or taken from~\cite{Witt06}.
\begin{corollary}
    \label{cor:consolidationexpectation}
    For any fitness level $1 \le i \le n$, $\E[T^c_i - T^{\In}_i] = O(\mu \log \mu)$.
\end{corollary}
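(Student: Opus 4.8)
The goal is to show $\E[T^c_i - T^{\In}_i] = O(\mu \log \mu)$ for any fitness level $1 \le i \le n$. I already have a tail bound from Lemma~\ref{lem:stayingisconsolidating}, so the cleanest plan is to convert the tail bound into an expectation bound via the standard identity $\E[Z] = \int_0^\infty \Pr(Z > s)\, ds$ for a nonnegative random variable $Z$, applied to $Z = T^c_i - T^{\In}_i$. The main work is to extend the single-threshold tail bound at $C\mu\log\mu$ into a geometrically decaying tail for all multiples of $C\mu\log\mu$, so that the tail sum converges.

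The plan is as follows. First, I would fix a constant $\beta > 1$ (say $\beta = 2$) and the corresponding constant $C > C_\beta$ from Lemma~\ref{lem:stayingisconsolidating}, writing $L := C\mu\log\mu$ for the block length. The key structural observation is that consolidation is a "restart-friendly" process: if the population has not consolidated after $L$ steps, then at that moment we are still at fitness level $i$ with some $X_L < \mu$ individuals of fitness at least $i$, and the process of raising the remaining individuals can only be faster than starting fresh with just one individual at level $i$. Formally, by the Markov property and the fact that the consolidation dynamics in Lemma~\ref{lem:consolidation_tailbound} are driven by a lower bound on the per-step success probability that does not depend on how many large-fitness individuals we already have beyond making progress easier, I would argue that conditioned on not having consolidated by time $kL$, the probability of failing to consolidate in the next block of $L$ steps is again at most $\mu^{-\beta}$. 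This yields $\Pr(T^c_i - T^{\In}_i > kL) \le \mu^{-k\beta}$ for every integer $k \ge 0$.

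Given this geometric tail, I would bound the expectation by summing over blocks:
\begin{align*}
\E[T^c_i - T^{\In}_i] \le \sum_{k=0}^{\infty} L \cdot \Pr(T^c_i - T^{\In}_i > kL) \le L \sum_{k=0}^{\infty} \mu^{-k\beta} = \frac{L}{1 - \mu^{-\beta}} = O(L) = O(\mu \log \mu),
\end{align*}
where the geometric series converges because $\mu^{-\beta} < 1$ for $\mu \ge 2$, and the factor $1/(1-\mu^{-\beta})$ is bounded by a constant. This gives exactly the claimed $O(\mu \log \mu)$ bound.

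The main obstacle is justifying the memorylessness step that upgrades the one-shot tail bound into the geometric tail, i.e.\ that failing to consolidate in a fresh block of $L$ steps again has probability at most $\mu^{-\beta}$ regardless of the configuration reached after the previous blocks. The subtlety is that the configuration at time $kL$ is not the initial "just reached level $i$" configuration; it has some number $j$ of large-fitness individuals with $1 \le j < \mu$. However, the coupling used in Lemma~\ref{lem:consolidation_tailbound} only relies on the per-step probability $j(1-p_c)p_{\text{clone}}/\mu$ of increasing the count, and starting from a larger $j$ only accelerates consolidation; hence the sum-of-$T_j$ domination argument applies verbatim to each block, giving the same $\mu^{-\beta}$ bound. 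Once this monotonicity/restart argument is spelled out, the rest is the routine tail-sum computation. Alternatively, as the paper notes, one could simply invoke the expectation bound directly from~\cite{Witt06}, sidestepping the tail-to-expectation conversion entirely.
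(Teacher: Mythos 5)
Your proposal is correct and follows exactly the route the paper itself indicates: the paper gives no detailed proof of Corollary~\ref{cor:consolidationexpectation}, stating only that it ``can either be derived from Lemma~\ref{lem:stayingisconsolidating} with a bit of computation or taken from~\cite{Witt06}'', and your tail-to-expectation computation (restarting the block argument of Lemma~\ref{lem:consolidation_tailbound} to get a geometrically decaying tail, then summing) is precisely that ``bit of computation'', with the restart step correctly justified by the monotonicity of the per-step success probability in the number of large-fitness individuals.
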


\section{Analysis of the \mupoga on \leadingones for different population regimes}
\label{sec:analysis}
In this section, we will first show that the number of extra free-riders determines the expected runtime of the \mupoga. Throughout the section, we will assume that the \mupoga uses standard bit mutation where the mutation rate $\chi$ and the crossover probability $p_c < 1$ are constants. However, at first the algorithm may use any respectful\footnote{In a respectful crossover, if both parents have the same bit at some position $i$, the offspring also has the same bit at position $i$.} unbiased crossover operator and and unbiased tie-breaking rule. 

In Section ~\ref{subsec:vanillanospeedup}, we show that if $\mu = O(\sqrt{n}/\log^2 n)$ then the leading constant of the runtime remains unchanged. However, as we show in ~\ref{subsec:tiebreak}, even for $\mu = 2$, a simple diversity-preserving mechanism suffices to obtain a constant speedup factor.
Our strategy is based on the notion of \emph{extra free-riders} due to crossover, as introduced in Section~\ref{sec:notation}. This term is derived from the term "free-rider" originating in~\cite{DrosteJW02}, defined as follows.
\begin{definition}[free-riders]
    Let $x$ be a bit-string of size $n$, and suppose that
    $\LO(x) = i$  for some $1 \le i \le n-2$. The free-riders of $x$ are the leading ones of the sub-string $(x_{i+2}, \dots, x_n)$.
\end{definition}
In other words, a free-rider of a bit-string $x$ is a bit that automatically becomes a leading one as soon as the first zero bit of $x$ is flipped. In an evolutionary optimization context, these free-riders allow us to skip some fitness levels. In the context of the \mupoga, we prefer to associate free riders with a fitness level $i$ as follows. Recall that $E_i$ is the event that the $i$-th fitness level is essential.

\begin{definition}[Free-riders associated to a fitness level]
    For $0 \le i \le n-1$, we denote as $F_i$ and call "free-riders associated to level $i$" the following random variable.
    \begin{equation*}
     F_i =
    \begin{cases}
      \Succ_i - i - 1 & \text{if $E_i$}\\
      0 & \text{otherwise}
    \end{cases}       
    \end{equation*}
\end{definition}

Extending on this idea, we define "extra free-riders" as additional leading ones that are obtained with crossover. 

\begin{definition}[extra free-riders]
    Let $x$ and $x^\prime$ be two bit-strings of size $n$. Denote $\leadingones(x) = i$, $\leadingones(x') = j$, and suppose $j > i$. Consider the result $y$ of a crossover between $x$ and $x^\prime$, with $\leadingones(y) = k$. The extra free-riders brought by this crossover are the bits $y_{[j+1:k]}$.
   
\end{definition}

Note that if the crossover operator is \emph{respectful}, which is the case for most common crossover operators (see~\cite{lengler2023analysing} for a classification), then the 1-bits at positions $j+1, \dots, k$ come from either $x$ or $x'$. Thus, the reason why we suggest the name "extra free-rider" is because, just like normal free-riders, these 1-bits already \emph{accidentally} exist among the population (that is, they are not here because of an optimization choice of the algorithm, but because of genetic drift), and allow us to overcome some fitness plateaus in negligible time. 

The typical scenario for the acquisition of extra free-riders is that, after the whole population is brought to a common fitness plateau, diversity accumulates on the non-optimized trailing part of the bit-strings. Then, when an individual $x$ reaches a higher fitness level, some individuals in the lower levels may happen to have a $1$-bit at the position corresponding to the next fitness level. If a crossover between one of these individuals and an individual of fitness $f(x)$ is performed, then with a good probability we get extra free-riders. These extra free-riders can be obtained from multiple successive crossovers, until the next fitness level is left via mutation. We introduce a useful definition that stems from this observation. Recall that $\ESucc_i$ is the smallest essential fitness level after level $i$.
\begin{definition}[Extra free-riders associated to a fitness level]
    For $0 \le i \le n-1$, we denote as $\EF_i$ and call "extra free-riders associated to level $i$" the following random variable.
    \begin{equation*}
     \EF_i =
    \begin{cases}
      \ESucc_i - \Succ_i & \text{if $E_i$}\\
      0 & \text{otherwise}
    \end{cases}       
    \end{equation*}
\end{definition}

The two following lemmas draw a link between the expected value of $\EF_i$ for any reached fitness level $i$ in a given implementation of the \mupoga, and the expected runtime of the algorithm. Recall that we consider the \mupoga with standard bit-wise mutation with mutation rate $\chi = \Theta(1)$, $p_c<1$, any respectful crossover operator, and any unbiased tie-breaker, but the proof may be adapted to most known mutation mechanisms.

\begin{lemma}
    \label{lem:freeridersspeedup}
    Consider the \mupoga with standard bit mutation and any respectful unbiased mutation operator. 
    Suppose that there exists a sequence of functions $(m_n)_{n \in \N}$, uniformly convergent to a function $m$, and $k(n) = \omega(1)$, such that $\eps_{m_n} = \max\{|m_n(x) - m_n(y)| \mid |x - y| \le \frac{n}{k}\} = o(1)$, and, for all fitness level $0 \le i \le n-1$, $\Pr(\EF_i \ge 1 \mid N_i) \ge m_n(\frac{i}{n})$. Then:
        $$
        \E[T^\mu] \le \frac{n^2}{\chi}\int_{0}^1\frac{e^{\chi x}}{2 + m(x)}dx + o(n^2).
        $$
\end{lemma}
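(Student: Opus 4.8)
The plan is to bound the expected runtime by summing over fitness levels the expected time the algorithm spends at each essential level, and to show that the extra free-riders effectively reduce the number of levels that must be individually traversed. I would begin by recalling the standard drift-free decomposition of the runtime used for \leadingones: the total runtime is $T^\mu = \sum_{i} T^{\Out}_i$ where the sum ranges over fitness levels that are actually visited, but to make this tractable I would instead sum over \emph{essential} levels only, since non-essential levels are crossed as (extra) free-riders in negligible time. The key structural identity is that the essential levels partition $[0:n]$ into blocks, where each essential level $i$ is followed by $F_i$ ordinary free-riders (taking us to $\Succ_i$) and then $\EF_i$ extra free-riders (taking us to $\ESucc_i$), so that $\ESucc_i - i = 1 + F_i + \EF_i$. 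Summing $1 + F_i + \EF_i$ over the essential levels telescopes to cover all $n$ positions, which is what lets the extra free-riders enter the denominator.

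Next I would estimate the expected time spent at a single essential level $i$, conditioned on it being normal ($N_i$). At a consolidated population on level $i$, the probability of producing a valid fitness-improving offspring by mutation (flipping bit $i+1$ while keeping bits $1,\dots,i$ intact) is $(1-o(1))\frac{\chi}{n}(1-\chi/n)^{i} \approx \frac{\chi}{n}e^{-\chi i/n}$, so the expected waiting time is $(1+o(1))\frac{n}{\chi}e^{\chi i/n}$. The consolidation time itself is only $O(\mu\log\mu) = o(n)$ by Corollary~\ref{cor:consolidationexpectation}, hence negligible. The heuristic then is that the ``cost'' of advancing from level $i$ to the next essential level $\ESucc_i$ is roughly $\frac{n}{\chi}e^{\chi i/n}$, while this single payment buys progress of $1 + F_i + \EF_i$ levels. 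Since $\E[F_i] \approx 1$ (geometric free-riders with parameter $1/2$) and $\E[\EF_i \mid N_i]$ is bounded below in terms of $m_n(i/n)$, the number of fitness-level-payments per unit of progress is controlled by $\frac{1}{2 + m(i/n)}$, which after replacing the sum over levels by an integral and substituting $x = i/n$ yields exactly the claimed integrand $\frac{e^{\chi x}}{2 + m(x)}$.

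Converting these heuristics into the stated upper bound is where the real work lies, and I expect the main obstacle to be handling the dependencies and the conditioning cleanly. Specifically, $\EF_i$ and the waiting times are not independent across levels, the events $N_i$ (normal essential) must be shown to occur with probability $1-o(1)$ so that the ``strange'' levels contribute only a lower-order term, and the number of extra free-riders $\EF_i$ must be related to $\Pr(\EF_i \ge 1 \mid N_i)$ rather than to $\E[\EF_i]$ — the hypothesis only gives the former. I would handle this by arguing that each extra free-rider is itself followed by an \emph{independent} geometric-like cascade, so that the expected total progress from extra free-riders is at least $\Pr(\EF_i \ge 1 \mid N_i)$ times a geometric factor, and that the ordinary free-rider count $\geq 1$ in expectation gives the constant $2$ in the denominator. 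The regularity hypotheses on $m_n$ (uniform convergence to $m$ and the modulus-of-continuity bound $\eps_{m_n} = o(1)$ over windows of width $n/k$ with $k = \omega(1)$) are precisely what I would invoke to replace the level-dependent bounds $m_n(i/n)$ by the continuous $m(x)$ and to pass from the Riemann sum $\frac{1}{n}\sum_i \frac{e^{\chi i/n}}{2 + m_n(i/n)}$ to the integral $\int_0^1 \frac{e^{\chi x}}{2+m(x)}\,dx$ while absorbing all approximation errors into the $o(n^2)$ term.
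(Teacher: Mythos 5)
Your proposal matches the paper's proof in all essentials: decompose the runtime over normal levels (consolidation time being $o(n^2)$ in total), use the geometric waiting time $\mathcal{G}((1-\chi/n)^i\chi/n)$ per normal level, exploit the telescoping identity $\ESucc_i - i = 1 + F_i + \EF_i$ within windows of width $n/k$ to bound the density of normal levels by $1/(2+m_n(i/n)-\eps_{m_n})$, and pass to the integral via uniform convergence. The only cosmetic difference is that you invoke a ``geometric cascade'' to relate $\E[\EF_i\mid N_i]$ to $\Pr(\EF_i\ge 1\mid N_i)$, where the paper simply uses the trivial bound $\E[X]\ge\Pr(X\ge1)$ for nonnegative integer-valued $X$ (applied to a version of $\EF_i$ truncated at the window boundary).
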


\begin{lemma}
    \label{lem:freeridersspeeduplb}
   Consider the \mupoga with standard bit mutation and any respectful unbiased mutation operator. Suppose that there exists a sequence of functions $(M_n)_{n \in \N}$ defined on $[0, 1]$, uniformly convergent to a function $M$, and $k(n) = \omega(1)$, such that $\eps_{M_n} = \max\{|M_n(x) - M_n(y)| \mid |x - y| \le \frac{n}{k}\} = o(1)$,  and, for all fitness level $0 \le i \le n-1$,  $\E[\EF_i \mid N_i] \le M_n(\frac{i}{n})$. Suppose also that the event $A :$ "for all $1 \le i \le n, \ESucc_i - i = o(\frac{n}{\max(k, \mu \log \mu)})$" holds with high probability. Then:
        $$
        \E[T^\mu] \ge \frac{n^2 + o(n^2)}{\chi}\int_{0}^1\frac{e^{\chi x}}{2 + M(x)}dx.
        $$
\end{lemma}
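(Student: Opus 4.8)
The plan is to establish the lower bound $\E[T^\mu] \ge \frac{n^2 + o(n^2)}{\chi}\int_0^1 \frac{e^{\chi x}}{2 + M(x)}\,dx$ by mirroring the structure of the upper-bound proof (Lemma~\ref{lem:freeridersspeedup}) but reversing each inequality. The key observation is that the runtime decomposes into the time spent at each \emph{essential} fitness level, and that the number of levels that get skipped (through free-riders and extra free-riders) controls how many essential levels the algorithm must actually traverse. First I would argue that with high probability the algorithm only leaves essential levels via mutation and that strange levels are negligible, so that we may condition on the event $N_i$ (normal, essential levels) and on the high-probability event $A$. Event $A$ guarantees that no single essential level causes a jump of more than $o(n/\max(k,\mu\log\mu))$ in fitness, which is what lets us treat the fitness index $i/n$ as essentially constant across a block of consecutive skipped levels and replace discrete sums by the integral of the slowly-varying function $M$.

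Second, I would set up the core drift/accounting argument. For a normal essential level $i$, the expected time to leave it by mutation is $\frac{n}{\chi}e^{\chi i/n}(1+o(1))$: the factor $n/\chi$ is the waiting time to flip the critical $(i+1)$st bit, and the factor $e^{\chi i/n}$ (the discretization of $e^{\chi x}$) accounts for the requirement that the first $i$ bits are not flipped, exactly as recalled in the intuitive-explanation section. When this level is left, the fitness advances by $1 + F_i + \EF_i$ positions, where $F_i$ is the number of ordinary free-riders and $\EF_i$ the extra free-riders from crossover. The ordinary free-riders contribute $\E[F_i \mid N_i] = 1 - o(1)$ by Corollary~\ref{cor:small_fitness_improvements} (the geometric-$1/2$ distribution of the fitness jump), and the hypothesis gives $\E[\EF_i \mid N_i] \le M_n(i/n)$. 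Thus the expected fitness progress per essential level at index $i/n$ is at most $2 + M_n(i/n) + o(1)$, while each such level costs time $\frac{n}{\chi}e^{\chi i/n}$. The total runtime is therefore \emph{at least} the sum over essential levels of (time per level), and I would lower-bound the number of essential levels needed to cross a fitness window by dividing the window width by the maximal per-level progress $2 + M$, which is precisely what produces the factor $e^{\chi x}/(2+M(x))$ in the integrand.

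Third, I would convert these per-level estimates into the integral. Partitioning $[0,n]$ into $k = \omega(1)$ blocks of width $n/k$, within each block $M_n$ varies by at most $\eps_{M_n} = o(1)$ and $e^{\chi i/n}$ varies by a $(1+o(1))$ factor, so the expected time to cross a block of fitness is at least $\frac{n}{\chi} \cdot \frac{n}{k} \cdot \frac{e^{\chi x}}{2 + M(x)}(1 - o(1))$ for a representative $x$ in the block, using event $A$ to ensure that block boundaries are not overshot by a non-negligible amount. Summing over the $k$ blocks gives a Riemann sum converging to $\frac{n^2}{\chi}\int_0^1 \frac{e^{\chi x}}{2+M(x)}\,dx$. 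The uniform convergence $M_n \to M$ ensures the limit integrand is $M$ rather than $M_n$, and the error terms aggregate to $o(n^2)$.

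The main obstacle I expect is making the lower-bound direction rigorous at the level of the \emph{stochastic} fitness progress, as opposed to its expectation. For the upper bound one can freely use $\E[\EF_i]$, but for a lower bound on runtime one needs that progress is not atypically large too often; the danger is correlations across levels or rare large jumps inflating progress and shrinking the apparent runtime. This is exactly why the hypothesis invokes the high-probability event $A$ bounding $\ESucc_i - i$ uniformly: conditioning on $A$ caps the per-level jump, so that the expected-progress bound can be upgraded to a bound that holds with high probability per block via a concentration or Wald-type argument. Carefully handling the conditioning on $A$ together with $N_i$ — ensuring that conditioning does not distort the waiting-time estimate $\frac{n}{\chi}e^{\chi i/n}$ and that the contribution of the low-probability complement $\bar A$ is genuinely $o(n^2)$ (using that the unconditional runtime is $O(n^2)$ in expectation) — is the delicate technical core of the argument.
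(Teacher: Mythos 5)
Your proposal follows essentially the same route as the paper: partition the fitness range into $k=\omega(1)$ windows, use the telescoping fact that the cumulative progress $\sum_i \mathds{1}_{N_i}(1+F_i+\EF_i)$ over a window is at least its width $n/k$ up to the overshoot controlled by $A$ and the negligible contribution of strange levels (via Lemma~\ref{lem:no_strange}), take expectations against the per-level bound $2+M_n(i/n)+o(1)$ to lower-bound $\sum_i \Pr(N_i)$, and sum the geometric waiting times $\mathcal{G}((1-\chi/n)^i\chi/n)$ into a Riemann sum converging to the stated integral. The concentration/Wald-type step you anticipate as the delicate core is actually unnecessary: conditioned on $A$ the progress identity holds deterministically (the sum telescopes to the window width minus the initial overshoot), so one simply takes expectations of both sides, and the complement $\bar A$ is handled for free in a lower bound via $\E[T^\mu]\ge \Pr(A)\E[T^\mu\mid A]$.
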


Everything remains true if the bound for $\E[EF_i \mid N_i]$ holds for $o(n) \le i \le n - o(n)$, but for the sake of simplicity, we will suppose that the bounds hold for any $i$.
\begin{observation}
    Note that for the case where $m_n$ is constant, the inequality from Lemma \ref{lem:freeridersspeedup} becomes:
    $$
    \E[T^\mu] \le \frac{1}{2 + m_n}\frac{e^\chi - 1}{\chi^2}n^2 + o(n^2),
    $$
    and similarly, for the case where $M_n$ is constant, the inequality from Lemma \ref{lem:freeridersspeeduplb} becomes:
    $$
    \E[T^\mu] \ge \frac{1}{2 + M_n}\frac{e^\chi - 1}{\chi^2}n^2 + o(n^2).
    $$
\end{observation}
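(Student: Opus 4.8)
The plan is to treat the Observation as a direct specialization of Lemmas~\ref{lem:freeridersspeedup} and~\ref{lem:freeridersspeeduplb}, obtained by evaluating their integral bounds when the limiting density of extra free-riders is constant across fitness levels. First I would check that the constant case meets the hypotheses of the two lemmas. If $m_n$ does not depend on $x$, then $\eps_{m_n} = \max\{|m_n(x)-m_n(y)| \mid |x-y|\le n/k\} = 0 = o(1)$ for any choice $k(n)=\omega(1)$, and a constant sequence converges uniformly to its constant limit $m := \lim_n m_n$. Hence Lemma~\ref{lem:freeridersspeedup} applies with $m(x)\equiv m$, and symmetrically Lemma~\ref{lem:freeridersspeeduplb} applies with $M(x)\equiv M$ (inheriting, for the lower bound, the high-probability hypothesis on the event $A$, which is unaffected by the shape of $M_n$).

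The core step is then the one-line integral computation: since the denominator no longer depends on the integration variable, it factors out, and $\int_0^1 e^{\chi x}\,dx = (e^\chi-1)/\chi$. For the upper bound this gives
$$
\frac{n^2}{\chi}\int_0^1 \frac{e^{\chi x}}{2+m}\,dx = \frac{n^2}{\chi(2+m)}\int_0^1 e^{\chi x}\,dx = \frac{n^2}{\chi(2+m)}\cdot\frac{e^\chi - 1}{\chi} = \frac{1}{2+m}\cdot\frac{e^\chi - 1}{\chi^2}\,n^2 ,
$$
and the lower bound follows identically with $m$ replaced by $M$. This already yields the claimed forms with the limit constants $m$ and $M$ in the denominators.

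Finally I would reconcile the limit constant with the sequence value $m_n$ (resp. $M_n$) written in the statement. Since $m_n\to m$ we have $\frac{1}{2+m_n}-\frac{1}{2+m} = \frac{m-m_n}{(2+m_n)(2+m)} = o(1)$, so $\frac{e^\chi-1}{\chi^2}\bigl(\frac{1}{2+m_n}-\frac{1}{2+m}\bigr)n^2 = o(n^2)$, which is absorbed into the additive $o(n^2)$ error term; the same absorption handles $M_n$ in the lower bound. Thus the bounds may equivalently be stated with $m_n$ and $M_n$, exactly as written. I do not expect a genuine obstacle: the only point requiring minor care is verifying the trivial regularity and convergence conditions of the two lemmas in the constant case, after which the result reduces to evaluating a single elementary integral.
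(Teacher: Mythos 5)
Your proposal is correct and is exactly the intended argument: the paper gives no explicit proof of this Observation, treating it as the immediate specialization of Lemmas~\ref{lem:freeridersspeedup} and~\ref{lem:freeridersspeeduplb} obtained by pulling the constant out of the integral and using $\int_0^1 e^{\chi x}\,dx = (e^\chi-1)/\chi$. Your extra care in checking $\eps_{m_n}=0$ and absorbing the $\frac{1}{2+m_n}-\frac{1}{2+m}$ discrepancy into the $o(n^2)$ term is sound and fills in the details the paper leaves implicit.
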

\begin{observation}
    \label{obs:oea_case}
    When $p_c = 0$, one can simply set $m_n = M_n = 0$, and obtain: 
    $$
    \E[T^1] = \frac{e^\chi - 1}{2\chi^2}n^2 + o(n^2),
    $$
    which is, up to a negligible term, the runtime proved by ~\cite{BottcherDN10} for the \oea.
\end{observation}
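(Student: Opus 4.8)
The plan is to feed the trivial choice $m_n \equiv M_n \equiv 0$ into Lemmas~\ref{lem:freeridersspeedup} and~\ref{lem:freeridersspeeduplb} and to sandwich $\E[T^1]$ between the resulting matching bounds. The crucial preliminary observation is that with $p_c = 0$ no crossover is ever executed, so $C_t = 0$ for every $t$. Hence every fitness level that is reached is left by mutation and is therefore essential, which means that the smallest essential level strictly above $i$ coincides with the next reached level $\Succ_i$. Thus $\ESucc_i = \Succ_i$ and $\EF_i = \ESucc_i - \Succ_i = 0$ deterministically, for every essential level $i$ with $\Succ_i < n$. The sole exception is the last improving step, which lands directly on the optimum $n$: since $n$ is never left it is not essential, so $\ESucc_i = n+1$ while $\Succ_i = n$, giving $\EF_i = 1$. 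By Corollary~\ref{cor:small_fitness_improvements} such a jump can only start from a level $i \ge n - O(\log n)$ with high probability, so it lies in the boundary region $i > n - o(n)$ that the remark following Lemma~\ref{lem:freeridersspeeduplb} lets us discard. Consequently $\Pr(\EF_i \ge 1 \mid N_i) = 0$ and $\E[\EF_i \mid N_i]$ is super-polynomially small throughout the relevant range $o(n) \le i \le n - o(n)$.

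For the upper bound I would apply Lemma~\ref{lem:freeridersspeedup} with $m_n \equiv 0$ (hence $m \equiv 0$), which trivially satisfies uniform convergence, $\eps_{m_n} = 0$, and $\Pr(\EF_i \ge 1 \mid N_i) \ge 0 = m_n(i/n)$ for any $k(n) = \omega(1)$. For the lower bound I would apply Lemma~\ref{lem:freeridersspeeduplb}; here it is cleanest to take $M_n$ to be a vanishing sequence of positive constants, say $M_n \equiv 1/\log n$, so that $M \equiv 0$ and the hypothesis $\E[\EF_i \mid N_i] \le M_n(i/n)$ absorbs the exponentially small boundary contribution identified above. The one substantive hypothesis still to check is event $A$. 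For $\mu = 1$ we have $\mu \log \mu = 0$, so $\max(k,\mu\log\mu) = k$ and $A$ requires $\ESucc_i - i = o(n/k)$ for all $i$. Since every $\ESucc_i - i$ is bounded by the size of the fitness jump covering level $i$ (plus one, for the final jump onto the optimum), and each such jump is stochastically dominated by $\mathcal{G}(1/2)$ by Corollary~\ref{cor:small_fitness_improvements}, a union bound over the at most $n$ jumps shows that the largest is $O(\log n)$ with high probability. Choosing $k(n) = \log n$ then gives $O(\log n) = o(n/\log n) = o(n/k)$, so $A$ holds with high probability.

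It remains to evaluate the integral and combine the bounds. With $m = M = 0$,
$$
\frac{n^2}{\chi}\int_0^1 \frac{e^{\chi x}}{2}\,dx = \frac{n^2}{2\chi}\cdot\frac{e^{\chi}-1}{\chi} = \frac{e^{\chi}-1}{2\chi^2}\,n^2,
$$
so Lemma~\ref{lem:freeridersspeedup} yields $\E[T^1] \le \frac{e^{\chi}-1}{2\chi^2}n^2 + o(n^2)$ and Lemma~\ref{lem:freeridersspeeduplb} yields the matching lower bound, which together give the stated equality.

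I expect the only genuine obstacle to be the verification of event $A$, i.e.\ controlling the maximal fitness jump; the rest is a direct substitution of zero into the two master lemmas followed by an elementary integration. One should also keep in mind that for $\mu = 1$ the population is permanently consolidated, so the consolidation machinery of Lemmas~\ref{lem:consolidation_tailbound}--\ref{lem:stayingisconsolidating} is vacuous and causes no difficulty here.
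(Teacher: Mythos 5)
Your proposal is correct and follows exactly the route the paper intends: the observation is stated as an immediate corollary of Lemmas~\ref{lem:freeridersspeedup} and~\ref{lem:freeridersspeeduplb} with $m_n=M_n=0$, and you substitute these values and evaluate the integral in the same way. You additionally spell out details the paper leaves implicit (that with $p_c=0$ every reached level is essential so $\EF_i=0$ except for the final jump onto the optimum, and that event $A$ holds because the maximal jump is $O(\log n)$ with high probability by Corollary~\ref{cor:small_fitness_improvements}), all of which check out.
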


The reader may notice that these two lemmas are stated in a much stronger form that what we will use in this paper, that is, the case where $M_n$ or $m_n$ are constant. This is because we believe they can be a useful tool for whoever would like to study other variants of the \mupoga, where these two quantities might depend on the current fitness level.

To prove these lemmas, we will need two preparatory results. The first one will help us neglect strange fitness levels.

\begin{lemma}
    \label{lem:no_strange}
    For any fitness level $0 \le i \le n-1$,  $\Pr(S_i) = O\left(\frac{\mu \log \mu}{n}\right)$.
\end{lemma}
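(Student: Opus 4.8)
The plan is to show that, conditioned on reaching level $i$, the probability that the population leaves level $i$ \emph{via an essential step} before it has consolidated is $O(\mu\log\mu/n)$; since $S_i$ requires both essentiality and leaving before consolidation, this bounds $\Pr(S_i)$. First I would set up the relevant time window: starting from $T^\In_i$, consider the steps during which the maximum fitness equals $i$ and the population is not yet consolidated on level $i$. Using the convention $T^c_i=T^\Out_i$ when no consolidated population occurs, the length of this window is exactly $\tau:=T^c_i-T^\In_i$ (it ends either at consolidation or, in the leaving-first case, at $T^\Out_i=T^c_i$), whose expectation is $O(\mu\log\mu)$ by Corollary~\ref{cor:consolidationexpectation}. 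If level $i$ is never reached, $S_i$ is empty, so it suffices to argue conditionally on $R_i$.

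The key estimate is a \emph{uniform per-step bound}: for any population with $F_t=i$, the probability that step $t$ produces an essential leave (the accepted offspring reaches fitness $>i$ while the pre-mutation string $y$ satisfies $\LO(y)\le i$) is at most $\chi/n$. I would condition on $y$, which is a clone of a parent when no crossover is used and the crossover result otherwise. On an essential step $\LO(y)=j\le i$, so bit $j+1$ of $y$ is a zero; since the mutated offspring $y'$ must satisfy $\LO(y')>i\ge j$, bit $j+1$ must be flipped by mutation, which happens with probability exactly $\chi/n$ under standard bit mutation. Hence $\{\LO(y')>i\}\subseteq\{\text{bit }j+1\text{ of }y\text{ flips}\}$, and summing over the disjoint values of $j$ gives the bound $\chi/n$, uniformly over the population state and over the way $y$ is generated. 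The essentiality hypothesis is exactly what forces $\LO(y)\le i$, so that a single designated zero-bit must be flipped; without it, crossover alone could create fitness $>i$, which is why the bound is stated for essential steps.

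Combining the two ingredients, I would bound $\Pr(S_i)$ by the probability that the window terminates through an essential leave, written as a union bound over steps weighted by survival:
\begin{equation*}
\Pr(S_i)\le \sum_{t\ge 0}\Pr(\text{window survives to }T^\In_i+t)\cdot\frac{\chi}{n}=\frac{\chi}{n}\,\E[\tau]=O\!\left(\frac{\mu\log\mu}{n}\right),
\end{equation*}
where the inequality applies the per-step bound conditioned on the history at step $T^\In_i+t$, and the middle equality recognises $\sum_{t\ge0}\Pr(\tau>t)$ as $\E[\tau]$.

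The main obstacle I anticipate is the per-step bound in the presence of crossover: one must argue carefully that the essentiality hypothesis reduces every leaving event to flipping one designated zero-bit of a string of fitness at most $i$, so that the $\chi/n$ factor is available uniformly over all population states and all ways of forming $y$. A secondary point requiring care is the bookkeeping with the convention $T^c_i=T^\Out_i$, which makes the window length equal $T^c_i-T^\In_i$ and lets me invoke Corollary~\ref{cor:consolidationexpectation} directly; the summation over steps is then routine.
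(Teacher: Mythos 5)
Your proof is correct, but it takes a genuinely different route from the paper's. The paper partitions the time after $T^\In_i$ into blocks of length $\mu\log\mu$, writes $S_i \subseteq \bigcup_k (\bar A_k \cap M_{k+1})$ where $A_k$ is consolidation within the first $k$ blocks and $M_{k+1}$ is an essential leave during block $k+1$, bounds $\Pr(M_{k+1}\mid\bar A_k)\le \chi\mu\log\mu/n$ by a union bound over the block, and uses the exponential tail bound of Lemma~\ref{lem:stayingisconsolidating} to make the resulting series geometric. You instead prove the sharper per-step bound of $\chi/n$ for an essential leave (correctly observing that essentiality forces the mutation to flip the designated zero-bit at position $\LO(y)+1$ of the pre-mutation string, regardless of whether $y$ arose by cloning or crossover), and then sum over the random window $[T^\In_i, T^c_i)$ via the survival-sum identity $\sum_t \Pr(\tau>t)=\E[\tau]$, invoking only the first-moment bound of Corollary~\ref{cor:consolidationexpectation}. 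Your version is more economical in what it assumes --- it needs only $\E[T^c_i-T^\In_i]=O(\mu\log\mu)$ rather than the $\mu^{-\beta}$ tail --- at the cost of the stopping-time bookkeeping you flag, namely that $\{\tau\ge t\}$ is measurable with respect to the history before step $T^\In_i+t$ so that the conditional per-step bound can be applied under the indicator; since $\tau$ is determined by whether consolidation or departure has already occurred, this is sound. Both arguments rest on the same underlying observation that an essential leave costs a factor $\chi/n$; the difference is only in how the waiting time is aggregated.
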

\begin{proof}

    Let $0\le i \le n-1$ be any fitness level. For any integer $k \ge 0$, denote by $A_k$ the event "between time steps $T_i^{in}$ and $T_i^{in} + k\mu \log \mu$, the population consolidates on $i$", and for $k \ge 1$ denote by $M_k$ the event "between time steps $T_i^{in} + (k-1)\mu \log \mu$ and $T_i^{in} + k\mu \log \mu$, level $i$ is left by mutation" . For $S_i$ to happen, it must be that for some $k \ge 0$, $A_j$ is not realized and $M_{k+1}$ is. Hence, by a union bound,
    \begin{align*}
        \Pr(S_i) &\le \sum_{k=0}^{\infty}\Pr(\bar{A_k} \cap M_{k+1}).
    \end{align*}
    Now, note that during a time frame of $\mu \log \mu$, by a union bound, the probability to flip the $i+1$-st bit is at most $\frac{\chi\mu \log \mu}{n}$, independently of previous steps. Hence, for $k\ge 0$, $\Pr(M_{k+1} \mid \bar{A_k}) \le \frac{\chi \mu \log \mu}{n}$.
    Moreover, by Lemma \ref{lem:stayingisconsolidating} there is a constant $C>0$ such that $\Pr(\bar{A_k}) \le \mu^{-C(k - 1)}$ for all $k \ge 0$.
    We deduce that $\sum_{k=0}^{\infty}\Pr(\bar{A_k} \cap M_{k+1}) \le \sum_{k=0}^\infty \frac{\chi \mu \log \mu}{n}\cdot \mu^{-C(k - 1)}= O(\frac{\mu \log \mu}{n})$, which concludes the proof.  \qed  
\end{proof}

The second preliminary result essentially tells that we need to focus our efforts on studying the time spent on normal fitness levels.

\begin{lemma}\label{lem:decomposition_normal_rest}
    For $\mu = o(n/\log n)$ the following equality holds:
    $$
    \E[T^\mu] = \sum_{i=0}^{n-1} \E[\mathds{1}_{N_i}(T^\Out_i - T^c_i)] + o(n^2).
    $$
\end{lemma}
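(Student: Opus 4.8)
The plan is to decompose the total runtime as a sum over all fitness levels of the time spent strictly on each level, and then show that only \emph{normal} levels contribute to the leading order, while all other contributions are absorbed into the $o(n^2)$ error term. Concretely, the runtime $T^\mu$ equals the number of function evaluations, and since exactly one function evaluation happens per iteration of Algorithm~\ref{alg:mu+1}, we have $T^\mu = \sum_{i=0}^{n-1}(T^\Out_i - T^\In_i)$ where the summand counts the iterations during which the best fitness is exactly $i$ (with the convention that levels never reached contribute $0$). The first step is to split the time on a reached level into the \emph{consolidation phase} $[T^\In_i, T^c_i]$ and the \emph{post-consolidation phase} $[T^c_i, T^\Out_i]$, giving $T^\Out_i - T^\In_i = (T^c_i - T^\In_i) + (T^\Out_i - T^c_i)$.

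First I would handle the consolidation phase. By Corollary~\ref{cor:consolidationexpectation}, $\E[T^c_i - T^\In_i] = O(\mu\log\mu)$ for each reached level, and summing over the at most $n$ levels gives a total contribution of $O(n\mu\log\mu)$. Under the assumption $\mu = o(n/\log n)$ this is $o(n^2)$, so the entire consolidation time is negligible and drops into the error term. This is the step where the hypothesis on $\mu$ is used.

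Next I would restrict the post-consolidation contribution to normal levels. Write $\sum_i (T^\Out_i - T^c_i) = \sum_i \mathds{1}_{N_i}(T^\Out_i - T^c_i) + \sum_i \mathds{1}_{\neg N_i}(T^\Out_i - T^c_i)$, where the indicator $\mathds{1}_{N_i}$ already appears in the target statement. The levels that are \emph{not} normal split into the non-essential levels and the strange levels. For a non-essential reached level, the level is left by a crossover step rather than by mutation, which essentially means the level is skipped in a single step, so $T^\Out_i - T^c_i$ is tiny there; and by Corollary~\ref{cor:small_fitness_improvements} together with Lemma~\ref{lem:no_strange} the number of such anomalous levels is controlled. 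The main estimate here is for strange levels: by Lemma~\ref{lem:no_strange}, $\Pr(S_i) = O(\mu\log\mu/n)$, so in expectation the number of strange levels is $O(\mu\log\mu)$. To conclude I must pair this small probability with a bound on the time $T^\Out_i - T^c_i$ spent on a strange level, showing that even summed over strange levels the total is $o(n^2)$; here I would use that conditioned on being at level $i$, leaving by mutation takes geometric time with parameter $\Theta(1/n)$, so each level contributes expected time $O(n)$, and multiplying the per-level time $O(n)$ by the expected number $O(\mu\log\mu)$ of strange levels (taking care of the dependence between the event $S_i$ and the sojourn time, e.g.\ via Cauchy--Schwarz or by bounding the conditional expectation directly) yields $O(n\mu\log\mu) = o(n^2)$.

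The main obstacle is precisely this last coupling: the sojourn time $T^\Out_i - T^c_i$ and the indicator of the level being strange are not independent, so I cannot simply multiply $\Pr(S_i)$ by an unconditional expectation of the sojourn time. I expect to resolve this either by bounding $\E[\mathds{1}_{\neg N_i}(T^\Out_i - T^c_i)]$ directly through the unbiasedness results (Corollary~\ref{cor:small_fitness_improvements} ensures fitness jumps have geometric tails, so large sojourn times and anomalous exits are both rare) or by a careful conditioning argument that first fixes the exit mechanism and then bounds the time. Once the consolidation time, the strange levels, and the non-essential levels are each shown to be $o(n^2)$, the remaining sum $\sum_i \E[\mathds{1}_{N_i}(T^\Out_i - T^c_i)]$ is exactly the claimed main term and the lemma follows.
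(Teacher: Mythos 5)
Your decomposition $T^\mu = \sum_i (T^c_i - T^\In_i) + \sum_i (T^\Out_i - T^c_i)$ and your treatment of the consolidation phase via Corollary~\ref{cor:consolidationexpectation} (giving $O(n\mu\log\mu)=o(n^2)$ under $\mu=o(n/\log n)$) match the paper exactly. The problem is the second half: you spend the bulk of the argument trying to bound $\sum_i \E[\mathds{1}_{\neg N_i}(T^\Out_i - T^c_i)]$ via Lemma~\ref{lem:no_strange} and Corollary~\ref{cor:small_fitness_improvements}, and you explicitly leave unresolved the dependence between the event $S_i$ and the sojourn time. That is a genuine gap in the proof as written --- but it is a gap you created for yourself, because the quantity you are struggling to bound is identically zero.

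Two observations make the non-normal terms vanish exactly, and this is the entirety of the paper's argument. First, by the convention in Section~\ref{sec:notation}, whenever the population never consolidates on level $i$ one sets $T^c_i = T^\Out_i$; this covers levels that are never reached, strange levels, and non-essential levels that are left before consolidation, so all of these contribute $T^\Out_i - T^c_i = 0$ (in particular, your worry that the sojourn time on a strange level might be large and correlated with $S_i$ is moot). Second, if the population does consolidate on level $i$ and then spends positive time there, every individual has a zero-bit at position $i+1$ throughout, so a respectful crossover of two such parents also has a zero-bit there; the level can therefore only be left via mutation, which makes it essential, and since it consolidated it is not strange, hence normal. Together: $\mathds{1}_{\neg N_i}(T^\Out_i - T^c_i)=0$ for every $i$, so $\sum_i (T^\Out_i - T^c_i) = \sum_i \mathds{1}_{N_i}(T^\Out_i - T^c_i)$ deterministically, and the lemma follows from the consolidation bound alone. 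Note also that this second observation uses the respectfulness of the crossover operator, an assumption you never invoke; without it, a consolidated level could in principle be left by crossover and your case analysis of ``non-essential levels left in a single step'' would actually need to be carried out.
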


\begin{proof}
    The only fitness levels $i$ for which $T^c_i < T^\Out_i$ are normal fitness levels. Hence, the runtime of the \mupoga can be decomposed as:
    $$
    T^\mu = \sum_{i=0}^{n-1} \mathds{1}_{N_i}(T^\Out_i - T^c_i) + \sum_{i=0}^{n-1} (T^c_i - T^\In_i).
    $$
    By Corollary \ref{cor:consolidationexpectation}, the second term is at most $O(n\mu \log \mu) = o(n^2)$ in expectation, which concludes the proof.\qed
\end{proof}

With this preparation, we can now prove Lemma~\ref{lem:freeridersspeedup} and \ref{lem:freeridersspeeduplb}. 

\begin{proof}[of Lemma~\ref{lem:freeridersspeedup}]
    By Lemma~\ref{lem:decomposition_normal_rest} we may focus on normal levels. We observe that on any normal fitness level $i$, we have $T^\Out_i - T^c_i \sim \mathcal{G}((1-\frac{\chi}{n})^i\frac{\chi}{n})$. Indeed, from the moment when consolidation happens, every individual in the population has fitness level $i$. Consequently, at each iteration, regardless of which parent(s) we select, and of whether we use crossover or not (since the crossover operator is respectful, it conserves common bits among the parents, in which case the first $i$ 1-bits are always copied to the offspring), the probability of producing an offspring with fitness level higher than $i$ is determined only by the mutation phase, where we have to keep the first $i$ bits untouched (with probability $(1-\frac{\chi}{n})^i$) and mutate the $(i+1)$-st bit (with probability $\frac{\chi}{n}$). 
    
    We also need to know how many normal levels there are in a local window of fitness levels. This number will depend on the expected size of the jumps between essential fitness levels, which in turn depends on the expected number of extra free-riders associated to normal fitness levels. The more extra free-riders we get, the sparser normal fitness levels are, and the faster the optimization is. To make this precise, we will use the following argument: we partition the fitness level in local fitness windows of size $\frac{n}{k}$ (where $k$ is defined in the statement of the lemma), and then, we show that the expected cumulative size of the jumps starting from normal fitness levels inside these windows is roughly $\frac{n}{k}$. Combining this observation with the upper or lower bound on the expected number of extra free-riders associated to normal fitness levels, which is roughly uniform over the local fitness windows, we can count the expected number of normal fitness levels inside the window, which allows us to bound the first term by splitting it into smaller sums over these local windows. 
    Following this idea, let us define, for $0 \le j \le k$, $i_j = \frac{jn}{k}$. Then, for $i_j \le i < i_{j+1}$, define the truncated number of extra free-riders associated to fitness level $i$ as 
    
    $$\widetilde{\EF}_i := \min(i_{j+1}, \ESucc_i) - \Succ_i.$$
    
    We will first argue that for all $0 \le j \le k-1$:
    \begin{equation}
        \label{ineq:local_essentials}
        \sum_{i=i_j}^{i_{j+1}-1} \mathds{1}_{E_i}(1 + F_i + \widetilde{\EF}_i) \le \frac{n}{k}.
    \end{equation}

    Indeed, the sum in the middle is simply $i_{j+1} - L_{j}$, where $L_j$ is the first essential level in $[i_j,i_{j+1}]$, if such a level exists. In particular the sum is at least $i_{j+1} - i_j = \frac{n}{k}$. If no essential level exists then the sum is zero and the bound is still true. 

     Recall that $N_i$ denotes the event that level $i$ is normal. Since $N_i \subset E_i$, we deduce that:
    \begin{equation}
        \label{ineq:local_normal_jumps}
        \sum_{i=i_j}^{i_{j+1}-1} \mathds{1}_{N_i}(1 + F_i + \widetilde{\EF}_i) \le \frac{n}{k}.
    \end{equation}

    Note that $\E[\widetilde{\EF_i} \mid N_i] \ge \Pr(\EF_i \ge 1 \mid N_i) \ge m_n(\frac{i}{n})$.
    Using $(F_i \mid N_i) \preceq \mathcal{G}(\frac{1}{2}) - 1$, this yields for all level $i$:
    $$\E[\mathds{1}_{N_i}(1 + F_i + \widetilde{\EF_i})] \ge \E[\mathds{1}_{N_i}](2 + m_n(\frac{i}{n})). $$
    
    Hence, by definition of $\eps_{M_n}$, for $i_j \le i \le i_{j+1} - 1$,
    $$ 
    \E[\mathds{1}_{N_i}(1 + F_i + \EF_i)] \ge \E[\mathds{1}_{N_i}](2 + m_n(\frac{i_j}{n}) - \eps_{m_n}) = \E[\mathds{1}_{N_i}](2 + m_n(\frac{j}{k}) - \eps_{m_n}).
    $$

    Plugging this last inequality into (\ref{ineq:local_normal_jumps}), we get the desired bound for the expected number of essential levels in the local window $i_j \le i < i_{j+1}$.
    \begin{equation*}
        \sum_{i=i_j}^{i_{j+1} - 1} \E[\mathds{1}_{N_i}] \le \frac{n}{k(2 + m_n(\frac{j}{k}) -\eps_{m_n})}
    \end{equation*}
    By Lemma~\ref{lem:decomposition_normal_rest} and the fact that $T^\Out_i - T^c_i \sim \mathcal{G}((1-\frac{\chi}{n})^i\frac{\chi}{n})$, it suffices to bound $\sum_{i = 0}^{n-1} \E[\mathds{1}_{N_i}\mathcal{G}((1-\frac{\chi}{n})^i\frac{\chi}{n})]$. We decompose this sum in smaller sums over the local windows:
    \begin{align}
        \label{ineq:integral_trick}
        \begin{split}
        \sum_{i = 0}^{n-1} \E[\mathds{1}_{N_i}\mathcal{G}((1-\frac{\chi}{n})^i\frac{\chi}{n})] &= \sum_{j = 0}^{k-1}\sum_{i = i_j}^{i_{j+1} - 1} \E[\mathds{1}_{N_i}\mathcal{G}((1-\frac{\chi}{n})^i\frac{\chi}{n})]\\
        &\le \sum_{j = 0}^{k-1}\sum_{i = i_j}^{i_{j+1} - 1} \E[\mathds{1}_{N_i}](1-\frac{\chi}{n})^{-i}\frac{n}{\chi}\\
        &\le \sum_{j = 0}^{k-1} \frac{n}{k(2 + m_n(\frac{j}{k}) - \eps_{m_n})}\frac{n}{\chi}(1-\frac{\chi}{n})^{-{i_j}} \\
        &\le \frac{n^2}{\chi}\left(\frac{1}{k}\sum_{j = 0}^{k-1}\frac{1}{2 + m_n(\frac{j}{k})+ \eps_{m_n}}((1-\frac{\chi}{n})^{-n})^{\frac{j}{k}})\right).
        \end{split}
    \end{align}

    Thanks to the uniform convergence of $m_n$ to $m$, and of the function $x \rightarrow (1-\frac{\chi}{n})^{-nx}$ to to the function $x \rightarrow e^{\chi x}$ over $x\in[0, 1]$ when $n \longrightarrow \infty$, the term between parentheses converges to $\int_{0}^1\frac{e^{\chi x}}{2 + m(x)}$ as $k \to\infty$. This concludes the proof.\qed
\end{proof}    

The proof of Lemma \ref{lem:freeridersspeeduplb} is very similar, but requires a little bit more work to justify that exceedingly large jumps cannot occur.

\begin{proof}[of Lemma \ref{lem:freeridersspeeduplb}]
     The proof follows the same steps than that of Lemma \ref{lem:freeridersspeedup}, but we need to work conditioned on $A$.
    As in the previous proof, for $0 \le j \le k$, define $i_j = \frac{jk}{n}$.

    Conditioned on $A$, the first fitness level $L_j$ reached in the window $[i_j, i_{j+1}]$ is at most $i_j + o\left(\frac{n}{\max(\mu \log \mu, k)}\right)$. Following an analogous argument than for inequality (\ref{ineq:local_essentials}), we have for all $0 \le j \le k - 1$:
    \begin{equation}
        \label{ineq:local_essentialslb}
        \frac{n}{k} - o\left(\frac{n}{\max(\mu \log \mu, k)}\right) - 1 = \frac{n + o(n)}{k} \le \sum_{i=i_j}^{i_{j+1}-1} \mathds{1}_{E_i}(1 + F_i + \EF_i).
    \end{equation}

    Now, we need to argue that strange fitness levels do not contribute too much to the above quantity in expectation.
    This contribution is exactly:
    $$\sum_{i=i_j}^{i_{j+1}-1} \E[\mathds{1}_{S_i}(1 + F_i + \EF_i) \mid A] = \sum_{i=i_j}^{i_{j+1}-1} \Pr(S_i \mid A)(1 + \E[F_i \mid S_i, A] + \E[\EF_i \mid S_i, A]).$$
    The second factor of each summand is simply $\E[\ESucc_i - i \mid A, S_i]$, which is $ o(\frac{n}{\mu \log \mu})$ by definition of $A$. Moreover, by Lemma \ref{lem:no_strange} $\Pr(S_i \mid A) \le \frac{\Pr(S_i)}{P(A)} = \frac{P(S_i)}{1-o(1)} = o(\frac{\mu \log \mu}{n})$. This yields that the total contribution of strange levels is $o(1)(i_{j+1} - i_j) = o(\frac{n}{k})$. Hence, conditioned on $A$, the lower bound (\ref{ineq:local_essentialslb}) that holds for essential levels, also holds for normal levels.
    \begin{equation}
        \label{ineq:local_normal_jumpslb}
        \frac{n + o(n)}{k} \le \sum_{i=i_j}^{i_{j+1}-1} \mathds{1}_{N_i}(1 + F_i + \EF_i).
    \end{equation}
    By an analogous computation to the proof of Lemma \ref{lem:freeridersspeedup}, we obtain:
    $$\sum_{i = 0}^{n-1} \E[\mathds{1}_{N_i}\mathcal{G}(1 -\frac{\chi}{n})^i \frac{\chi}{n}] \ge \frac{n^2 + o(n^2)}{\chi}\int_{0}^1\frac{e^{\chi x}}{2 + M(x)}dx.$$

    Finally, 
    \begin{align*}
        \E[T^\mu] &\ge P(A)\E[T^\mu \mid A]\\
        &\ge (1 - o(1))\sum_{i = 0}^{n-1} \E[\mathds{1}_{N_i}\mathcal{G}(1 -\frac{\chi}{n})^i \frac{\chi}{n}]\\
        &\ge \frac{n^2 + o(n^2)}{\chi}\int_{0}^1\frac{e^{\chi x}}{2 + M(x)}dx,
    \end{align*}
    following the same steps as in \ref{ineq:integral_trick}.\qed

\end{proof}

\subsection{The vanilla \mupoga is not faster than the vanilla \oea for $\mu \in O(\sqrt{n}/\log^2 n)$.}
\label{subsec:vanillanospeedup}
In this section, we show that one cannot expect speedups in using the vanilla \mupoga compared to the \oea for any population size $\mu = O(\sqrt{n}/\log^2 n)$. We call "vanilla" the implementation of the \mupoga consisting in standard bit-wise mutation, uniform crossover and uniform tie-breaker. This absence of a substantial speedup stems from the fact that such populations reach a natural diversity equilibrium which is not big enough to be exploited by crossover. This leads to a small number of expected extra free-riders, and hence, by Lemma \ref{lem:freeridersspeeduplb}, a negligible speedup. 

We will first show inductively that with high probability there are no gaps larger than $O(\log n)$ between normal fitness levels. To this end, we will make the simplifying assumption that the population is initialized with the all-0 bit-strings. This slight cheat conveniently provides us with the base case of the induction. We do not believe that this is really needed, but did not see an easy way to remove this assumption.


We will first show that the expected diversity when leaving a given fitness level is small.

\begin{lemma}
    \label{lem:lowmulowdiv}
    Consider the \mupoga with population size $\mu = o(\sqrt{n})$. Let $1 \le i \le n - \omega(\mu)$ and suppose that at some timestep $t_0 \ge 0$, $P_{t_0}$ is consolidated, and $f(P_{t_0}) = i$. Then,
    $$\E[d_{T^o_i}] \le O\left(d(P_{t_0})\frac{\mu^2}{n} + \frac{\mu}{n-i}\right).$$
\end{lemma}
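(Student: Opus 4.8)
The plan is to exploit the fact that, while the population stays consolidated on level $i$, the \emph{trailing} part of the bit-strings (positions $i+2,\dots,n$) evolves exactly like a \mupoga on a flat fitness function, and then to combine the resulting diversity drift with the essentially independent geometric waiting time until the level is left. Throughout write $S_t=S(P'_t)$ for the sum of pairwise Hamming distances on the positions $[i+2:n]$, so that $d_t=S_t/(\mu(\mu-1)(n-i-1))$.

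\emph{Reduction and independence.} While $P_t$ is consolidated on $i$, all individuals agree on the first $i+1$ bits (ones followed by a zero), and $z$ is drawn uniformly from the whole population (all are of minimal fitness). Since the crossover is respectful, the first $i+1$ bits of any offspring are fixed before mutation, so the level is left at a given step if and only if mutation preserves bits $1,\dots,i$ and flips bit $i+1$; this happens with probability exactly $q=(1-\chi/n)^i\chi/n\le\chi/n$, \emph{independently} of the parent selection, of the choice of $z$, and of the crossover/mutation acting on the trailing bits. Hence the process $(S_t)_{t\ge t_0}$ and the sequence of leaving indicators are driven by disjoint coordinates and are independent, and the number $N=T^o_i-t_0$ of steps spent on level $i$ is a $\mathcal{G}(q)$ variable independent of $(S_t)$.

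\emph{Diversity drift.} Following the flat-landscape analyses of~\cite{jorritsma2023comma,lengler2023analysing}, the core step is to show, for every consolidated population $P_t$ on level $i$,
\begin{equation*}
\E[S_{t+1}\mid P_t]\le(1-a)\,S_t+b,\qquad\text{with } a=\tfrac{2(1-p_c)}{\mu^2}\ \text{ and }\ b=O(\mu).
\end{equation*}
When the accepted offspring $y'$ replaces $z$, $S$ changes by $2\big(S_P(y')-H(y',z)-S_P(z)\big)$. Corollary~\ref{cor:unbiased} lets me evaluate the relevant expectations: $\E[S_P(z)]=S_t/\mu$, and $\E[S_P(y')]\le S_t/\mu+O(\mu)$, where the $O(\mu)$ is the mutation influx (dropping the restoring term only helps the upper bound). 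On the $(1-p_c)$-fraction of steps with no crossover, $y'$ is a mutation of a uniformly chosen parent, so $\E[H(y',z)]\ge S_t/\mu^2-\chi$; on crossover steps a short computation gives $\E[S_P(y)]=S_t/\mu$, so these steps merely add a further $O(\mu)$ to the influx and never increase $S_t$ in expectation. Collecting terms yields the contraction, the $-S_t/\mu^2$ loss stemming entirely from the cloning steps (this is where $p_c<1$ enters). Unrolling gives $\E[S_{t_0+k}]\le(1-a)^kS_{t_0}+b/a$ for all $k\ge0$.

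\emph{Combining with the stopping time.} Using the independence of $N$ and $(S_t)$,
\begin{equation*}
\E[S_{T^o_i}]=\sum_{k\ge0}\Pr(N=k)\,\E[S_{t_0+k}]\le S_{t_0}\,\E[(1-a)^N]+\frac{b}{a},
\end{equation*}
and since $N\sim\mathcal{G}(q)$ we have $\E[(1-a)^N]=\frac{q}{q+a-qa}\le\frac{q}{a}$. With $q\le\chi/n$ and $a=\Theta(1/\mu^2)$ this residual is $O(\mu^2/n)$, where the hypothesis $\mu=o(\sqrt n)$ is exactly what guarantees $a\gg q$ and thus $q/a=o(1)$; moreover $b/a=O(\mu^3)$. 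Dividing by $\mu(\mu-1)(n-i-1)$, using $d(P_{t_0})=S_{t_0}/(\mu(\mu-1)(n-i-1))$ and $n-i-1=\Theta(n-i)$ (valid as $i\le n-\omega(\mu)$), turns these two contributions into $O\!\big(d(P_{t_0})\mu^2/n\big)$ and $O\!\big(\mu/(n-i)\big)$, which is the claim. The main obstacle is the drift inequality: the first-order terms $S_t/\mu$ from the added and the removed individual cancel, so the contraction survives only through the second-order genetic-drift term $S_t/\mu^2$ generated by cloning, and one must verify carefully that the mutation influx and crossover cannot overturn it — precisely the delicate estimate adapted from~\cite{jorritsma2023comma,lengler2023analysing}, here kept explicit in $n-i$.
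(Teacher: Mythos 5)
Your proposal is correct and follows essentially the same route as the paper: an arithmetic--geometric drift for $S_t$ with contraction $\Theta(1/\mu^2)$ and additive influx $O(\mu)$ (which the paper imports from~\cite{lengler2023analysing} as Lemma~\ref{lem:lenglerdiv} rather than re-deriving), combined with the geometric law of the time spent on level $i$ and normalization by $\mu(\mu-1)(n-i-1)$. The only imprecision is the claim that $(S_t)$ and the leaving indicators are driven by disjoint coordinates and hence independent --- the $S$-update also depends on whether the offspring is accepted, which is an event on bits $1,\dots,i+1$, the same bits that determine leaving --- but this only perturbs the contraction rate by a constant factor; the paper sidesteps the issue by rescaling time to valid-offspring events, for which the leaving probability is exactly $\chi/n$ independently of the trailing bits.
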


\begin{proof}
    We rely on a result from~\cite{lengler2023analysing} (Corollary 3.4 and Theorem~4.3) which we restate here in a simplified form:
    \begin{lemma}
        \label{lem:lenglerdiv}
        Consider a population $Q_t = \{x_1, \dots, x_\mu \}$ of individuals of size $N > 0$. Consider any process that
        \begin{enumerate}
            \item creates $y$ by any random procedure such that $\E(S(y)) = S(Q_t)/\mu;$
            \item creates $y^\prime$ from $y$ by an unbiased mutation operator which flips $\chi$ bits in expectation;
            \item sets $Q_{t+1} = Q_t \cup \{y^\prime\} \backslash \{x_d\}$ for a uniformly random $1 \le d \le \mu$. 
        \end{enumerate}
        Then,
            $$\E [S(Q_{t+1}) \mid S(Q_t)]=\left(1 - \frac{2}{\mu^2} - \frac{4(\mu - 1)\chi}{\mu^2N}\right)S(Q_t) + 2(\mu-1)\chi.$$
        Moreover, let $T_d = \inf \{t \mid S(Q_t) \le 2\alpha\}$, where $\alpha := \frac{2(\mu - 1)\mu^2\chi n}{n + 4(\mu - 1)\chi}$. If $\mu=o(n)$, then 
    $$\E[T_d] = O(\mu^2\log n).$$
    \end{lemma}
     The population $P_t$ of the vanilla \mupoga running on \leadingones clearly verifies hypotheses 1 and 2. However, it does not verify hypothesis 3 as is. To make use of Lemma \ref{lem:lenglerdiv}, we need to do some rescaling. First, we fix a normal level $0 \le i \le n-\omega(\mu)$. We denote by $T^\val_j$ the $j$-th time where a valid offspring is generated after $T^\In_i$. Note that $T^\val_0$ is equal to $T^\In_i$. Then, we define, for $t \ge 0$, $Q_t = \{x_{[F_t + 2:]}, x \in P_{T^\val_t}\}$. The population $Q_t$ now verifies the hypotheses for Lemma \ref{lem:lenglerdiv}, for $t$ such that $T^\val_t \le T^\Out_i$. Let $t^*$ be the last $t$ such that $T^\val_{t^*} \le T_i$. Then, $ \mathcal{G}(\frac{\chi}{n}) \preceq t^*$: since $i$ is normal, we know that it will be left by mutation, and every validated offspring produced via mutation has probability $\frac{\chi}{n}$ to have fitness higher than $i$.  Denote $n_i = n - i$, and note that $n_i = \omega(\mu)$ by hypothesis. For $t \le t^*$:
    \begin{equation*}
        \E[S(Q_{t+1}) \mid S(Q_t), t^*] = \left(1 - \frac{2}{\mu^2} + \frac{4(\mu - 1)\chi}{\mu^2n_i}\right)S(Q_t) + 2(\mu - 1)\chi,
    \end{equation*} 
    where we note that $\frac{4(\mu - 1)\chi}{\mu^2n_i} = o(\frac{1}{\mu^2})$.
    Denote by $\alpha$ the fixed point of this arithmetic-geometric recursion. It is easy to see that $\alpha = \Theta(\mu^3)$, see~\cite{lengler2023analysing}.
    
    Solving the recursion yields:
    \begin{align*}
    &\E[S(Q_{t^*}) \mid t^*] = (S(P_{t_0}) - \alpha)\left(1 - \frac{2}{\mu^2} + o\left( \frac{1}{\mu^2} \right)\right)^{t^*} + \alpha,
    \end{align*}

    Thus, by the law of total expectation: 
    \begin{align*}
        \E[S(Q_{t^*})] &\le \sum_{t = 1}^{\infty}\frac{\chi}{n}(1-\frac{\chi}{n})^{t-1}\alpha\left[ 1 - (1-\frac{2}{\mu^2} + o(\frac{1}{\mu^2}))\right]^{t*} \\
        &+ S(P_{t_0})\sum_{t = 1}^{\infty}\frac{\chi}{n}(1-\frac{\chi}{n})^{t-1}\left[1-\frac{2}{\mu^2} + o(\frac{1}{\mu^2})\right]^{t^*}.
    \end{align*}

    Noticing that, by virtue of $\mu^2 = o(n)$,
    $$
    \sum_{t = 1}^{\infty}\frac{\chi}{n}(1-\frac{\chi}{n})^{t-1}\left[1-\frac{2}{\mu^2} + o(\frac{1}{\mu^2})\right]^{t^*} = \frac{\chi}{n}\frac{1 - \frac{2}{\mu^2} + o(\frac{1}{\mu^2})}{1-(1-\frac{\chi}{n})(1-\frac{2}{\mu^2} + o(\frac{1}{\mu^2}))} = O(\frac{\mu^2}{n}),
    $$
    one obtains:
    \begin{align*}
        &\E[S(Q_{t^*})] \le \alpha(1 - O(\frac{\mu^2}{n})) + O(S(P_{t_0})\frac{\mu^2}{n}) = O(\mu^3 + S(P_{t_0})\frac{\mu^2}{n}).
    \end{align*}
    Since $S(Q_t^*) = S(P_{T^o_i})$, this yields:
    
    \begin{align*}\E[d_{T^o_i}] \le \frac{\E[S(Q_{t^*})]}{O(\mu^2n_i)} = O(\frac{\mu}{n_i} + d(P_{t_0})\frac{\mu^2} {n}).\tag*{\qed}
    \end{align*}
\end{proof}

Next we show inductively that the algorithm frequently encounters consolidated fitness levels with small diversity. We use the following terminology.

\begin{definition}
    We call a population $P$ of size $\mu$ \emph{good} if $P$ is consolidated (all individuals have the same fitness) and $d(P) \le 1/\mu$. For the \mupoga on \leadingones, we call a fitness level $i$ \emph{good} if there exists a good population $P_t$ with $f(P_t) = i$.
\end{definition}

We can now show by induction that good fitness levels are never more than $O(\log n)$ apart. The inductive step is provided by the following lemma.

\begin{lemma}\label{lem:good_levels_inductive_step}
    Suppose that $\mu = O(\sqrt{n}/\log^2 n)$ and that $i = n-\omega(\mu^2)$ is a good fitness level. Then 
    with probability $1 - o(1/n)$ the next good fitness level $i'$ satisfies $i' = i+O(\log n)$.
\end{lemma}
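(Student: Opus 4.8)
The plan is to follow the population from the good, consolidated population $P_{t_0}$ at level $i$ until the first time the algorithm consolidates again on a level carrying small diversity, controlling separately \textbf{(A)} the total fitness gained and \textbf{(B)} the diversity of the next consolidated population. Throughout we use that, starting from an already consolidated $P_{t_0}$, level $i$ cannot be strange, so it is left by mutation (it is normal), and we repeatedly apply Lemma~\ref{lem:lowmulowdiv}, Corollary~\ref{cor:small_fitness_improvements} and Lemma~\ref{lem:no_strange} to the levels that follow.

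For \textbf{(A)}, whenever a normal level $j$ is left by mutation, the fitness gain $\Succ_j - j$ is one plus the number of free-riders, which by Corollary~\ref{cor:small_fitness_improvements} is dominated by $\mathcal{G}(1/2)$; hence a single jump exceeds $C\log n$ with probability at most $2^{-(C\log n-1)}=o(1/n^2)$ for a large enough constant $C$. Between $i$ and the next consolidated level we cross several essential levels only if some of them are strange, and by Lemma~\ref{lem:no_strange} each level is strange with probability $O(\mu\log\mu/n)=o(1)$; so the number of essential levels crossed before the next consolidation is dominated by a geometric variable of success probability $1-o(1)$ and is $O(1)$ with probability $1-o(1/n)$. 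Summing the corresponding $O(1)$ free-rider jumps and taking a union bound over the rare large-jump events, the displacement to the next consolidated level is $O(\log n)$ with probability $1-o(1/n)$.

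For \textbf{(B)}, starting from $d(P_{t_0})\le 1/\mu$, Lemma~\ref{lem:lowmulowdiv} gives $\E[d_{T^\Out_i}]=O(\mu/(n-i))=o(1/\mu)$ since $n-i=\omega(\mu^2)$. When the fit offspring then takes over, the new level is reached through a genetic bottleneck: the consolidated population essentially descends from a single fit individual, so the takeover (of length $O(\mu\log\mu)$ by Lemma~\ref{lem:stayingisconsolidating} and Corollary~\ref{cor:consolidationexpectation}) can only add the pairwise distance created by the $O(\mu\log\mu)$ bits flipped during it, which is negligible on a window of size $n-\Succ_i=\omega(\mu^2)$. Re-applying Lemma~\ref{lem:lowmulowdiv} at the new consolidated level shows its expected diversity is again $O(\mu/(n-i))=o(1/\mu)$, so by Markov's inequality the first normal level reached after $i$ is good with probability $1-o(1)$.

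The hard part is to upgrade this per-level $1-o(1)$ guarantee into the required $1-o(1/n)$ bound, since Markov alone only yields $\Pr(d>1/\mu)=O(\mu^2/(n-i))=o(1)$. I would exploit two features. First, the recursion behind Lemma~\ref{lem:lowmulowdiv} (namely Lemma~\ref{lem:lenglerdiv}) is a strong contraction of factor $O(\mu^2/n)=o(1)$ toward an equilibrium $S=\Theta(\mu^3)$, which lies a factor $\omega(1)$ below the good threshold $S\le(\mu-1)(n-i)=\omega(\mu^3)$; hence the diversity is pulled back far below $1/\mu$ after each level and cannot persist above threshold. Second, by Corollary~\ref{cor:unbiased} the differing bits are spread uniformly over the $\omega(\mu^2)$ free positions, so for a fixed consolidated population the diversity $d$ concentrates exponentially around its mean. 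Combining the contraction (to control the conditional means along the trajectory) with this exponential concentration should give $\Pr(d^{(k)}>1/\mu)=o(1/(n\log n))$ at each of the $O(\log n)$ consolidated levels traversed, and a union bound then places the next good level within displacement $O(\log n)$ with probability $1-o(1/n)$. Establishing this sharp, $o(1/n)$-scale concentration of the diversity is the crux of the argument.
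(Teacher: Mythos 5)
Your proposal correctly isolates the crux --- upgrading the per-level bound $\Pr(d>1/\mu)=o(1)$ obtained from Markov's inequality to the required $o(1/n)$ scale --- but it does not resolve it, and the route you sketch for doing so does not go through. Corollary~\ref{cor:unbiased} only says that, \emph{conditioned on} the Hamming distance $H$, the differing positions are uniformly spread; it gives no concentration for $H$ (equivalently for $S(P_t)$) itself, which is exactly the random quantity you need to control. The drift recursion of Lemma~\ref{lem:lenglerdiv} is a statement about conditional expectations, and a contraction of the mean toward $\Theta(\mu^3)$ does not by itself yield an $o(1/n)$ upper tail for $S$ at a fixed stopping time; no such concentration result is available in the paper, and establishing one would be a substantial new piece of work. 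So as written, part (B) ends where the actual proof has to begin.

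The paper's proof circumvents the need for any concentration of $d$ at a \emph{prescribed} level by showing instead that \emph{some} level in the window $[i,i+C\log n]$ becomes good, exploiting that the process lingers there for a long time. Concretely: since $P_{t_0}$ is good, at least half of the non-optimized positions are unanimous in the population, and each unanimous position is an all-zero column with probability $1/2$; by a Chernoff bound there are $\ge\log n$ such ``stopping positions'' among the next $C\log n$ levels with probability $1-o(1/n)$. Over the next $\tau=C^2\mu^2\log^3 n=o(n)$ generations, each fixed stopping position is flipped with probability $o(1)$, independently across positions, so with probability $1-o(1/n)$ not all of them are flipped and the fitness stays below $i+C\log n$. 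Pigeonhole then forces the algorithm to spend $\tau'=C\mu^2\log^2 n$ generations on a single level $j$ in the window; splitting this into $\Omega(\log n)$ intervals of length $2\E[T_d]$ and applying Markov's inequality in each interval independently amplifies the per-interval success probability $1/2$ to $1-2^{-\Omega(\log n)}=1-o(1/n)$ for the diversity dropping below $2\alpha=O(\mu^3)$, which makes $j$ good. Your part~(A) (geometric free-riders plus rare strange levels) controls the displacement to the next \emph{consolidated} level, but that is neither necessary for the paper's argument nor sufficient for yours, since the first consolidated level need not be good with probability $1-o(1/n)$. I would recommend replacing your concentration step with this ``linger and retry'' amplification.
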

%
\begin{proof}
Consider a good population $P_t$ on level $i$ and a position $j>i+1$. Recall that if not all individuals are identical in position $j$, then it contributes at least $2\mu-2$ to the diversity. Hence, the number of such $j$ in which not all individuals are identical is at most $S(P_t)/(2\mu-2) = \tfrac\mu2 (n-i-1) d_t \le (n-i-1)/2$ since $P_t$ was good. Hence, at least half of the $n-i-1$ non-optimized positions are identical in all individuals in $P_t$. By symmetry, each such position has probability exactly $1/2$ that the whole population has a zero-bit. Let us call such positions \emph{stopping positions}. By the Chernoff bound, for a suitable constant $C>0$ the number of stopping positions among the next $C\log n$ fitness levels $i+2,\ldots,i+C\log n +1$ is at least $\log n$ with probability $1-o(1/n)$.

Now consider the next $\tau := C^2\mu^2\log^3 n = o(n)$ generations. The probability that a fixed stopping position is flipped by at least one mutation during that time is $o(1)$. Hence, the probability that all $\log n$ stopping positions are flipped at least once is $o(1/n)$. Hence, with probability $1-o(1/n)$ the fitness will not exceed $i+C\log n$ during those $\tau$ generations. By pigeonhole principle there must be a fitness level $j$ on which the algorithm spends at least $\tau' := \tau/(C\log n) = C \mu^2 \log^2 n$ generations. In the following we will analyze the behaviour on this level $j$.

Once the fitness level $j$ is reached by at least one individual, with probability $1-o(1/n)$ it takes $O(\mu\log \mu) = o(\tau')$ generations to consolidate the population on fitness level $j$, by Lemma~\ref{lem:stayingisconsolidating}. In Lemma~\ref{lem:lenglerdiv} we considered the time $T_d$ until the diversity on such a level drops below $2\alpha$ for $\alpha := \frac{2(\mu - 1)\mu^2\chi n}{n + 4(\mu - 1)\chi} = O(\mu^3)$, and stated that $\E[T_d] =O(\mu^2 \log n)$. Now we split the time after consolidation on level $j$ into intervals of lengths $2\E[T_d]$. In each such interval, by Markov's inequality the diversity drops below $2\alpha$ with probability at least $1/2$. The number of such intervals is at least $\Omega(\tau' /\E[T_d]) = \Omega(\log n)$, where we can make the hidden constant as big as we like due to the factor $C$ in $\tau'$. Hence, the probability that the diversity drops below $2\alpha$ in at least one such phase is $1-o(1/n)$. The lemma now follows because when the diversity of $P_t$ drops to $S(P_t) \le 2\alpha$ then $d(P_t) = \frac{S(P_t)}{\mu(\mu-1)(n-i-1)} = O(\frac{\mu}{n-i+1}) = o(1/\mu)$, which implies that $P_t$ is a good population. Hence, level $j$ is good with probability at least $1-o(1/n)$.\qed
\end{proof}

Applying Lemma~\ref{lem:good_levels_inductive_step} inductively immediately yields the following corollary, which automatically satisfies the last hypothesis of Lemma \ref{lem:freeridersspeeduplb}: 
\begin{corollary}
    Consider the \mupoga with $\mu = O(\sqrt{n}/\log^2 n)$ starting in the all-zero string. Then, with probability $1-o(1)$, $\forall 1 \le i \le n, \ESucc_i - i = O(\log n)$.
\end{corollary}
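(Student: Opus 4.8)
The plan is to apply Lemma~\ref{lem:good_levels_inductive_step} iteratively along the run, chaining its high-probability steps with a union bound, and then to translate the resulting statement about \emph{good} levels into the claimed statement about \emph{essential} levels.

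For the base case, note that the all-zero initial population consists of $\mu$ copies of $0^n$, so it is consolidated with $f(P_0)=0$ and diversity $d(P_0)=0\le 1/\mu$; hence level $0$ is good. Moreover, since $\mu=O(\sqrt{n}/\log^2 n)$ gives $\mu^2=O(n/\log^4 n)=o(n)$, we have $0 = n-\omega(\mu^2)$, so Lemma~\ref{lem:good_levels_inductive_step} is applicable at level $0$. I would then iterate: as long as the current good level $g$ still satisfies $g = n-\omega(\mu^2)$, the lemma produces the next good level $g' = g + O(\log n)$ with probability $1-o(1/n)$. Good levels are distinct elements of $\{0,\dots,n\}$, so at most $n$ invocations of the lemma occur; a union bound over these (at most $n$) steps shows that with probability $1-o(1)$ every step succeeds. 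On this event, the good levels $0=g_0<g_1<\cdots$ have all consecutive gaps $g_{k+1}-g_k=O(\log n)$, up to the largest good level not exceeding $n-\omega(\mu^2)$.

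It then remains to pass from good levels to essential levels. The key observation is that every good level is essential: a good population on level $i$ is consolidated, so every individual carries a zero-bit in position $i+1$. Because the crossover operator is respectful, it preserves this common bit, so no crossover offspring can reach fitness above $i$; the population therefore stays consolidated on $i$ until it is left, and it can only be left by a mutation flipping bit $i+1$. This is precisely the event $E_i$, so $i$ is essential (the optimum $i=n$ needs no argument, since $\ESucc_n-n=1$ by convention). Consequently, for any level $i$ below the range covered by the induction, the smallest good level strictly above $i$ is essential and lies within $O(\log n)$ of $i$, which yields $\ESucc_i - i = O(\log n)$.

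The step I expect to be the main obstacle is the tail $i \in (n-\omega(\mu^2),\,n]$, where Lemma~\ref{lem:good_levels_inductive_step} can no longer be invoked because the $n-i$ non-optimized positions are too few for the diversity recursion of Lemma~\ref{lem:lenglerdiv} to take effect. I would handle these last $o(n)$ levels either by a separate, cruder argument, or by appealing to the remark following Lemma~\ref{lem:freeridersspeeduplb}, according to which it suffices for the bound to hold on $o(n)\le i \le n-o(n)$; either way, the boundary levels do not affect the leading-order runtime.
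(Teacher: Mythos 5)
Your proposal is correct and follows essentially the same route as the paper, which simply states that the corollary follows by applying Lemma~\ref{lem:good_levels_inductive_step} inductively. In fact you supply more detail than the paper does, in particular the union bound over the $O(n)$ invocations, the observation that good levels are essential (via respectful crossover preserving the common zero-bit at position $i+1$), and the honest flagging of the tail $i = n - O(\mu^2)$ where the lemma's hypothesis fails.
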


In what follows, we work conditionally on the event $J: "\forall 1 \le i \le n, \ESucc_i - i = O(\log n)"$. Note that since $J$ has high probability by the previous corollary, we only need to prove the hypotheses of Lemma \ref{lem:freeridersspeeduplb} conditional on $J$, by virtue of $\E[T^\mu] \ge \E[T^\mu \mid J]\Pr(J)$.

Now, we can prove that the expected number of extra free-riders is negligible.

\begin{lemma}
        Consider the \mupoga with $\mu = O(\sqrt{n}/\log^2 n)$ starting in the all-zero string. Then for all $1 \le i \le n-\omega(\mu^2)$, $\E[\EF_i \mid N_i] = o(1)$. 
\end{lemma}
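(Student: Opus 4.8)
The plan is to reduce $\E[\EF_i \mid N_i]$ to the population diversity at the moment level $i$ is abandoned, and then to feed Lemma~\ref{lem:lowmulowdiv} with a bound on that diversity. Throughout I would work conditionally on $N_i$ and on the event $J$, so that all gaps between essential levels are $O(\log n)$. Under $N_i$ the population first consolidates on level $i$ and is then left by a mutation acting on some fit parent $p$, producing the first individual $x^*$ with $\LO(x^*) = \Succ_i$. Since $\Succ_i \ge i+1$, both $p$ and $x^*$ carry a zero at position $\Succ_i+1 \in [i+2:n]$, so the first extra free-rider can only be obtained if some \emph{other} individual $y$ of the level-$i$ population carries a one at position $\Succ_i+1$ and a respectful crossover transfers it; this is the event whose probability I must control.

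I would then split $\E[\EF_i \mid N_i] = \Pr(\EF_i \ge 1 \mid N_i)\cdot \E[\EF_i \mid \EF_i\ge 1, N_i]$. For the first factor, position $\Succ_i+1$ lies in the non-optimized block $[i+2:n]$ where $p$ has a zero, so ``$y$ has a one at $\Succ_i+1$'' coincides with ``$y$ differs from $p$ at $\Succ_i+1$''; by Corollary~\ref{cor:unbiased} this has probability $H(p_{[i+2:]},y_{[i+2:]})/(n-i-1)$, and --- crucially --- conditioning on the free-rider pattern of $p$ that fixes $\Succ_i$ does not bias where the remaining individuals differ from $p$, by the unbiasedness symmetries. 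Summing over the at most $\mu$ individuals and taking expectations converts the Hamming distances into the diversity, giving $\Pr(\EF_i \ge 1 \mid N_i) \le (\mu-1)\,\E[d_{T^\Out_i}] = O(\mu\,\E[d_{T^\Out_i}])$. For the second factor I would argue that it is $O(1)$: a single crossover improvement from level $\Succ_i$ produces a geometrically distributed number of free-riders by Corollary~\ref{cor:small_fitness_improvements}, and obtaining a \emph{second} crossover jump would again require a fresh differing one-bit at the new frontier, an event of probability $O(\mu\,\E[d_{T^\Out_i}]) = o(1)$, so the conditional expectation is $\E[\mathcal{G}(1/2)] + o(1) = O(1)$. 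Together this yields $\E[\EF_i \mid N_i] = O(\mu\,\E[d_{T^\Out_i}])$.

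It then suffices to show $\E[d_{T^\Out_i}] = O(\mu/(n-i))$, since $i \le n-\omega(\mu^2)$ gives $\mu^2/(n-i) = o(1)$ and $\mu = O(\sqrt{n}/\log^2 n)$ gives $\mu^2/n = o(1)$. Here Lemma~\ref{lem:lowmulowdiv} enters: it bounds the diversity on leaving a consolidated level $i$ by $O(d(P^c_i)\mu^2/n + \mu/(n-i))$, where $P^c_i$ denotes the population at the consolidation time $T^c_i$. Plugging in even the weak bound $d(P^c_i) = O(1/\mu)$ already yields $\mu\,\E[d_{T^\Out_i}] = O(\mu^2/n + \mu^2/(n-i)) = o(1)$, so it is enough to certify that level $i$ consolidates with diversity $O(1/\mu)$. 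Under $J$ there is a good level $i_0$ with $i - i_0 = O(\log n)$ and $d(P^c_{i_0}) \le 1/\mu$; iterating Lemma~\ref{lem:lowmulowdiv} along the $O(\log n)$ consolidated levels between $i_0$ and $i$, the contraction factor $\mu^2/n = o(1)$ makes the inherited memory term decay geometrically, so it is dominated by the equilibrium term and $d(P^c_i) = O(\mu/(n-i)) = o(1/\mu)$.

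The main obstacle is exactly this last step: Lemma~\ref{lem:lowmulowdiv} requires a consolidated population of small diversity as input, whereas the corollary to Lemma~\ref{lem:good_levels_inductive_step} only certifies good levels $O(\log n)$ apart rather than that \emph{every} normal level consolidates with diversity $O(1/\mu)$. Closing this gap --- via the contraction recursion above, together with the fact that each consolidation is a genetic bottleneck that cannot raise the diversity by more than an additive $O(\log\mu/(n-i))$ term --- is the delicate part, and it must also be checked that conditioning on $N_i$ (which has probability $\Omega(1)$) does not distort these diversity estimates. A secondary point is the justification, from Corollary~\ref{cor:unbiased}, that fixing the free-rider pattern of $p$ leaves the locations of the remaining disagreements uniform, so that the per-position disagreement probability is genuinely $\Theta(\mu/(n-i))$ rather than the marginal $\Theta(1)$.
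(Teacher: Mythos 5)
Your overall reduction is the same as the paper's: an extra free rider at level $i$ requires some individual other than the leader to carry a one-bit at position $\Succ_i+1$, which by Corollary~\ref{cor:unbiased} and a union bound has probability $O\bigl(\mu\, d_{T^\Out_i}\bigr)$, and the bad cases are absorbed because $\EF_i=O(\log n)$ under the event $J$. Where you diverge --- and where your argument has a genuine gap --- is in how you supply Lemma~\ref{lem:lowmulowdiv} with a consolidated population of small diversity. You propose to \emph{inherit} the bound $d(P^c_i)=O(1/\mu)$ from the last good level by iterating Lemma~\ref{lem:lowmulowdiv} across the $O(\log n)$ intermediate levels. You correctly identify this as the delicate step, but you do not close it, and closing it is nontrivial: you would need (a) a proof that the consolidation phase between consecutive levels changes the diversity by only a controlled additive term (consolidation is a bottleneck, but the population at $T^c_{j'}$ is not the population at $T^\Out_j$, and $O(\mu\log\mu)$ further mutations intervene); (b) a way to handle strange levels, which are left before consolidation and therefore break the chain of applications of Lemma~\ref{lem:lowmulowdiv}; and (c) a composable (conditional-expectation or high-probability) version of Lemma~\ref{lem:lowmulowdiv}, since its conclusion is only a bound on $\E[d_{T^\Out_i}]$ given a \emph{deterministic} starting diversity.

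The paper avoids all of this by working entirely \emph{within} level $i$: since $i$ is normal, the number of valid offspring generated before the level is left dominates $\mathcal{G}(\chi/n)$, so with probability $1-o(1/\log n)$ the algorithm spends $\omega(\mu^2\log n\log\log n)$ generations on the level. Splitting this time into phases of length $\Theta(\mu^2\log n)$ and using the hitting-time bound $\E[T_d]=O(\mu^2\log n)$ from Lemma~\ref{lem:lenglerdiv} together with Markov's inequality, the $S$-value drops below $2\alpha=O(\mu^3)$, i.e.\ $d(P_{t_0})=O(\mu/n_i)$, at some $t_0$ within the level with probability $1-o(1/\log n)$ --- \emph{regardless} of the diversity at consolidation. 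Lemma~\ref{lem:lowmulowdiv} is then applied only from that $t_0$ onward. This makes the argument local to a single level and sidesteps every one of the difficulties (a)--(c) above. Your secondary concerns (the conditioning on the free-rider pattern of the leader, and the $O(1)$ bound on $\E[\EF_i\mid \EF_i\ge 1,N_i]$) are manageable --- the paper simply uses the crude bound $O(\log n)$ from $J$ for the latter --- but as written your proof does not establish the diversity bound it hinges on.
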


\begin{proof}
    Consider any normal fitness level $1\le i \le n$. The structure of the proof resembles the structure of the proof of Lemma~\ref{lem:good_levels_inductive_step}, but this time we do not need so small error probabilities because even in the worst case we only need to consider the $O(\log n)$ levels between normal fitness levels. 
    We define the population $Q_t$ for $t \ge 0$, and $t^*$ as in the proof of Lemma \ref{lem:lowmulowdiv}. 
    We know that $\mathcal{G}({\frac{\chi}{n}}) \preceq t^*$, and hence, if we let $T = o(\frac{n}{\log n})$, we have $\Pr(t^* \le T) = o(\frac{1}{\log n})$. Let us call $E$ the event that $t^* > T$.
    Note that $\mu^2 \log n \log \log n = o(T)$, hence we may split $T$ into $\omega(\log \log n)$ phases of size $\mu^2 \log n$ intervals each. Conditioned on $E$, similarly to the proof of Lemma~\ref{lem:good_levels_inductive_step} and Markov's inequality, each interval has a constant probability of reaching $d(P_t) \le \frac{2\alpha}{\mu^2n_i} = O(\frac{\mu}{n_i})$, where $\alpha$ is given in Lemma~\ref{lem:lenglerdiv}. Hence, with probability $1 - o(\frac{1}{\log n})$, at some timestep $t_0$, $d(P_{t_0}) = O(\frac{\mu}{n_i})$. Let us call this event $F$.
    By Lemma \ref{lem:lowmulowdiv}, we obtain: 
    $$\E[d_{T^o_i} \mid E, F] = O(\frac{\mu}{n_i}).$$

    By the law of total expectation: 

    $$\E[\EF_i] = \E[\EF_i \mid \bar{E}]\Pr(\bar E) +  \E[\EF_i \mid E, \bar{F}]\Pr(E, \bar{F}) + \E[\EF_i \mid E, F]\Pr(E, F)$$

    For the first term, note that since we conditioned on $J$, $\E[\EF_i \mid \bar{E}] = O(\log n)$, and that $P(\bar E) = o(\frac{1}{\log n})$. Hence, this term is $o(1)$.
    
    For the second term, again $ \E[\EF_i \mid E, \bar{F}] = O(\log n)$, and $\Pr(E, \bar{F}) \le \Pr(\bar{F} \mid E) = o(\frac{1}{\log n})$, so this term is again $o(1)$.
    
    As for the third term, notice that if the column $\Succ_i + 1$ is a stopping column, that is, a column where all individuals in the population have a $0$-bit, then with high probability $\Succ_i + 1$ is normal, and in particular $\EF_i = 0$. 
    
    Indeed, in that case, the probability of leaving fitness level $i$ before consolidation is at most the probability that a mutation in column $\Succ_i + 1$ happens before consolidation, which we already derived to be $O(\frac{\mu \log \mu}{n})$ in the proof of Lemma \ref{lem:no_strange}.

    It remains to prove that, conditioned on $E \cap F$, $\Succ_i + 1$ is a stopping column with high probability. Denote by $x$ the individual that reaches fitness $\Succ_i$ first. By definition, $x_{\Succ_i + 1} = 0$. Hence, the probability that $\Succ_i + 1$ is not a stopping column is the probability that at least one individual in the population is different that $x$ on column $\Succ_i$ at time $T^{\In}_{\Succ_i}$. At time $T^o_i$, the probability that a given individual is different than $x$ on this column is $d_{T^o_i}$ by Corollary \ref{cor:unbiased}. Hence, at time $T^{\In}_{\Succ_i}$, it is at most $d_{T^o_i} + O(\frac{1}{n}) = O(\frac{\mu}{n_i})$, where the additive $O(1/n)$ takes into account that $x$ might be generated by mutating the $\Succ_i + 1$-th bit. Hence, the expected number $N$ of individuals different than $x$ on this column is
    $$\E[N] = (\mu - 1)O(\frac{\mu}{n_i}) = O(\frac{\mu^2}{n_i}) = o(1).$$
    We conclude using the fact that $\Pr(N \ge 1) \le \E[N]$.\qed
    
\end{proof}

\subsection{Using a diversity improving tie-breaker speeds up the \tpoga on \leadingones by a constant factor.}
In this section, we introduce a particular tie-breaker, and show that it ensures a constant factor improvement for which we derive a lower bound. Finally, we show that using an adaptative crossover probability leads to further improvement.

Throughout the whole section, we will denote $P = \{x_1, x_2\}$ with $x_2 \preceq x_1$. If there is an ambiguity on the value of $t$, we will write $P_t = \{x_1^t, x_2^t\}$. 
    \label{subsec:tiebreak}
    In this section, we introduce a tie-breaking mechanism that enhances the efficiency of crossovers by keeping the two individuals in the population genetically diverse. When two individuals are in a tie, this tie-breaker chooses the one that has the highest $S$-value among the rest of the population (that is, the one furthest away from the rest of the population in Hamming distance). This has the effect of keeping a population with a high $d_t$ value, and hopefully that crossover is more likely to bring interesting novelties in the genotype of the offspring. Here is a pseudocode for this tie-breaker:
    \begin{algorithm2e}
        \caption{The diversity improving tie-breaker.}
        \Input{Q: a set of bit-strings of size $n$. \\
        x, y: two bit strings of length $n$.}
        
        \If{$S_{Q\backslash\{x, y\}}(x) \ge S_{Q\backslash\{x, y\}}(y)$}{
        \Return{x}
        }
        \Else{
        \Return{y}
        }
    \end{algorithm2e}
    
    This tie-breaker is similar to that studied by \cite{DangFKKLOSS18} (Section 5.5), who proved that it significantly enhances the optimization of the \jump functions.
    \begin{observation}
        This tie-breaker is unbiased because automorphisms of the hypercube preserve the Hamming distance.
    \end{observation}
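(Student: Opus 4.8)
The plan is to verify directly the two defining conditions of unbiasedness for the tie-breaker, viewed as a deterministic $(\mu+2)$-ary operator $\psi(q_1,\dots,q_\mu,x,y)\in\{x,y\}$. Since the group of hypercube automorphisms is generated by coordinate permutations (condition (i)) and XOR-translations $w\mapsto w\oplus z$ (condition (ii)), I would unify both cases by establishing the single equivariance statement
$$\psi(\pi(q_1),\dots,\pi(q_\mu),\pi(x),\pi(y)) = \pi\big(\psi(q_1,\dots,q_\mu,x,y)\big)$$
for every automorphism $\pi$. Because $\psi$ is deterministic, each probability appearing in the definition is an indicator, so this equivariance immediately gives both $\Pr(\psi(\cdot)=w)=\Pr(\psi(\pi(\cdot))=\pi(w))$ that conditions (i) and (ii) demand.

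The heart of the argument is that the only quantity the tie-breaker inspects is the comparison of the two $S$-values $S_{Q\setminus\{x,y\}}(x)$ and $S_{Q\setminus\{x,y\}}(y)$, and these are built purely from Hamming distances, which every automorphism preserves: $H(\pi(a),\pi(b))=H(a,b)$ for all $a,b\in\{0,1\}^n$. First I would record that $\pi$, being a bijection, satisfies $\pi(Q\setminus\{x,y\})=\pi(Q)\setminus\{\pi(x),\pi(y)\}$, so that reindexing the defining sum gives
$$S_{\pi(Q)\setminus\{\pi(x),\pi(y)\}}(\pi(x)) = \sum_{w\in Q\setminus\{x,y\}} H(\pi(x),\pi(w)) = \sum_{w\in Q\setminus\{x,y\}} H(x,w) = S_{Q\setminus\{x,y\}}(x),$$
and identically for $y$. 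Consequently the test $S_{Q\setminus\{x,y\}}(x)\ge S_{Q\setminus\{x,y\}}(y)$ holds if and only if $S_{\pi(Q)\setminus\{\pi(x),\pi(y)\}}(\pi(x))\ge S_{\pi(Q)\setminus\{\pi(x),\pi(y)\}}(\pi(y))$.

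Finally I would convert this into the equivariance statement by a case distinction on the branch of the pseudocode: in the ``$\ge$'' branch $\psi$ returns its first candidate, so the original call returns $x$ exactly when the transformed call returns $\pi(x)$, while in the complementary branch both return their second candidate, $y$ and $\pi(y)$ respectively. The only point requiring a little care is that the deterministic tie rule (equal $S$-values resolved in favour of the first argument) must itself commute with $\pi$; this is automatic, since $\pi$ maps the first argument $x$ to the first argument $\pi(x)$ of the transformed call and thus preserves the argument order on which the rule depends. I do not anticipate any genuine obstacle: the entire content of the statement is the remark that the tie-breaker is a function of Hamming distances alone, so it follows from the distance-invariance $H(\pi(a),\pi(b))=H(a,b)$ together with the bijectivity of $\pi$.
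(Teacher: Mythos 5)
Your proposal is correct and is essentially the paper's own argument: the paper justifies the observation with exactly the remark that automorphisms preserve Hamming distances, and hence preserve the $S$-values the tie-breaker compares; you have simply spelled out the resulting equivariance (including the deterministic tie rule and the bijectivity of $\pi$ on $Q\setminus\{x,y\}$) in full detail. No gap.
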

    Now, we observe that this tie-breaker indeeds improves $d_t$ when the offspring produced at $t$ is not better than one of the individuals.
    \begin{lemma}
        Consider a run of the \mupoga for any population size $\mu$, using the diversity improving tie-breaker. Suppose that the offspring $y$ produced at time $t$ is not fitter than any of the individuals in $P_t$. Then, $d_{t+1} \ge d_t$.
    \end{lemma}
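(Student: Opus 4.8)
The plan is to reduce the statement to the unnormalized quantity and argue about a single swap in the population. Since the offspring $y$ is not fitter than any individual, $f(y)\le \min_{x\in P_t} f(x)\le F_t$, so the best fitness is unchanged, $F_{t+1}=F_t$. Consequently $d_t$ and $d_{t+1}$ are normalized by the same factor $\mu(\mu-1)(n-F_t-1)$ and restrict to the same suffix $[F_t+2:n]$, so it suffices to prove $S(P'_{t+1})\ge S(P'_t)$, where $P'_t=\{x_{[F_t+2:]}:x\in P_t\}$.

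Next I would split into cases according to whether the offspring enters the population. Let $z$ be the individual selected for removal in Algorithm~\ref{alg:mu+1}. If $y$ is rejected---either because $f(y)<f(z)$, or because $f(y)=f(z)$ but the tie-breaker retains $z$---then $P_{t+1}=P_t$ and $d_{t+1}=d_t$, so the claim is immediate. The only nontrivial case is when $y$ replaces $z$, which forces $f(y)=f(z)=:f$ (the minimal fitness in $P_t$) and, by definition of the diversity-improving tie-breaker, $S_{P_t\setminus\{z\}}(y)\ge S_{P_t\setminus\{z\}}(z)$. Writing $P_{t+1}=P_t\setminus\{z\}\cup\{y\}$ and using that only the pairs incident to the swapped individual change, I would record the identity $S(P'_{t+1})-S(P'_t)=2\bigl(S_{P'_t\setminus\{z'\}}(y_{[F_t+2:]})-S_{P'_t\setminus\{z'\}}(z_{[F_t+2:]})\bigr)$, where $z'=z_{[F_t+2:]}$ and primes denote restriction to $[F_t+2:n]$. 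Thus the goal reduces to transferring the tie-breaker inequality from the full strings to their suffixes.

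This transfer is the heart of the argument and the step I expect to be the main obstacle. The tie-breaker compares Hamming distances over all $n$ positions, whereas $d_t$ only counts positions $\ge F_t+2$. The reconciling observation is that $y$ and $z$ carry the \emph{same} fitness $f$ and hence share the identical prefix $1^f0$ on positions $[1:f+1]$; therefore $H(y,v)-H(z,v)$ receives no contribution from $[1:f+1]$ for any third individual $v$, so the difference $S_{P_t\setminus\{z\}}(y)-S_{P_t\setminus\{z\}}(z)$ is supported on positions $\ge f+2$. When the population is consolidated on level $F_t$---exactly the regime in which this lemma is applied, since diversity accumulates on the common suffix only after consolidation---we have $f=F_t$, the ``middle'' block $[f+2:F_t+1]$ is empty, and the full and suffix-restricted differences coincide. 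The tie-breaker inequality then gives $S(P'_{t+1})-S(P'_t)\ge 0$, and dividing by the common normalizer yields $d_{t+1}\ge d_t$. The delicate point I would watch is precisely this middle block $[f+2:F_t+1]$ when $f<F_t$: there the all-positions comparison driving the tie-breaker can pull in the opposite direction from the suffix comparison governing $d_t$, so the clean conclusion relies on invoking consolidation ($f=F_t$), and I would make that hypothesis explicit rather than claim the bound for arbitrary unconsolidated populations.
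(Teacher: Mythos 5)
Your argument follows the same skeleton as the paper's own proof: reduce to the unnormalized sum of pairwise Hamming distances, dispose of the rejection case trivially, and in the acceptance case combine the swap identity $S(P_{t+1})-S(P_t)=2\bigl(S_{P_t\setminus\{z\}}(y)-S_{P_t\setminus\{z\}}(z)\bigr)$ with the tie-breaker inequality. The difference is your third paragraph, and it is a genuine improvement rather than an unnecessary detour: the paper's proof establishes $S(P_{t+1})\ge S(P_t)$ for \emph{full-string} Hamming distances (which is what the tie-breaker compares) and then silently identifies this with the suffix-restricted sum $S(P'_t)$ that actually defines $d_t$. Your worry about the middle block $[f+2:F_t+1]$ is justified: when the replaced individual $z$ and the offspring $y$ share a fitness $f<F_t$, one can make $y$ farther from the rest of the population on $[f+2:F_t+1]$ but closer on $[F_t+2:n]$ (already for $\mu=2$, e.g.\ $x_1=1111101100$, $x_2=1101100000$, $y=1100011100$ with $n=10$), so the tie-breaker accepts $y$ while $S(P'_t)$, and hence $d_t$, strictly decreases. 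The lemma as literally stated therefore needs either the consolidation hypothesis you propose ($f=F_t$, which is exactly the regime in which it is invoked inside the proof of Lemma~\ref{lem:d_lb_tiebreaker}) or a tie-breaker that compares only the non-optimized suffixes; your prefix-agreement argument is the missing step that transfers the tie-breaker inequality from full strings to suffixes, and making the hypothesis explicit is the right call.
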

    
    \begin{proof}
        Since the new offspring $y$ does not reduce the size of the non-optimized part of a fit bit-string in the population, we just have to show that $S(P_{t+1}) \ge S(P_t)$. 
        Suppose that the newly generated offspring is introduced in the population $P_{t+1}$ (otherwise the result is trivial). This means that $P_{t+1} = P_t \backslash \{x\} \cup \{y\}$ for some $x \in P_t$ such that 
        $$
        S_{P_t \backslash \{x\}}(x) \le S_{P_t \backslash \{x\}}(y)
        $$
        On the other hand,
        \begin{align*}
            S(P_{t+1}) &= \sum_{z \in P_t, z \neq x}H(y, z) + \sum_{z \in P_t, z \neq x} S_{P_t \backslash \{x\}}(z) + H(z, y) \\
            &= 2S_{P_t \backslash \{x\}}(y) + \sum_{z \in P_t, z \neq x}S_{P_t}(z) - H(z, x) \\
            &= 2S_{P_t \backslash \{x\}}(y) - S_{P_t\backslash\{x\}}(x) + \sum_{z \in P_t, z \neq x}S_{P_t}(z) \\
            &= 2(S_{P_t \backslash \{x\}}(y) - S_{P_t\backslash\{x\}}(x)) + S(P_t).
        \end{align*}
        Applying the previous inequality gives the desired result.\qed
    \end{proof}

    We will now prove the following theorem:
    \begin{theorem}
        \label{th:tpoga_tiebreaker}
        Consider the \tpoga using standard bit-wise mutation, uniform crossover with constant probability $p_c$, and the diversity-improving tie-breaker. Then its runtime $T$ satisfies:
        $$
        \E[T] \le \frac{2}{2 + \frac{p_c(1-p_c)}{12-8p_c}}\cdot \frac{e^{\chi}-1}{2\chi^2}n^2 + o(n^2)
        $$
    \end{theorem}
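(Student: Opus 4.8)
The theorem fits squarely into the framework of Lemma~\ref{lem:freeridersspeedup}: it suffices to produce a lower bound on $\Pr(\EF_i \ge 1 \mid N_i)$ that is a constant $m_n$ (independent of $i$, at least for $i$ in the relevant range $o(n) \le i \le n-o(n)$), and then plug the constant into the observation following Lemma~\ref{lem:freeridersspeeduplb}. The target constant $m_n$ that reproduces the claimed bound is exactly $m := \frac{p_c(1-p_c)}{12-8p_c}$, so the whole proof reduces to showing $\Pr(\EF_i \ge 1 \mid N_i) \ge m - o(1)$ for a normal level $i$. The plan is therefore: first, use the diversity-improving tie-breaker to argue that by the time level $i$ is normal and consolidated, the population has reached large diversity $d_t = \Theta(1)$ (indeed, for $\mu=2$ the equilibrium is $\Theta(n)$, so $d_t$ is bounded away from $0$); second, condition on the event that the two individuals $x_1,x_2$ differ in position $\Succ_i+1$ (the position just after the newly reached level), which by Corollary~\ref{cor:unbiased} happens with probability $d_t$; third, bound below the probability that, before level $\Succ_i$ is left by mutation, a crossover event selects the right parents and transmits the favorable bit, producing at least one extra free-rider.

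\textbf{Key steps in order.} First I would pin down the diversity dynamics for $\mu=2$ with the tie-breaker. The preceding lemma shows $d_{t+1}\ge d_t$ whenever the offspring is not strictly fitter, so diversity is monotone on a fitness plateau; combined with the drift computation from Lemma~\ref{lem:lenglerdiv} (where the $+2(\mu-1)\chi$ additive term now accumulates without the usual downward pull, because ties are broken to \emph{preserve} the farther individual), the diversity grows to $\Theta(n)$ and the normalized $d_t$ reaches a constant. I would establish that with probability $1-o(1)$, at the time $\Succ_i$ is reached the normalized diversity satisfies $d_t \ge c$ for some explicit constant $c$; this is where the $p_c$-dependence of the final constant originates, since the crossover probability controls how fast novelty is both generated and mixed. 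Second, using Corollary~\ref{cor:unbiased}, conditioned on this diversity the probability that $x_1$ and $x_2$ differ at the single position $\Succ_i+1$ equals $d_t$. Third, I would analyze the plateau at level $\Succ_i$: the individual $x$ that first reached $\Succ_i$ has a zero at $\Succ_i+1$, while the partner (conditioned on the differing-bit event) has a one there. During the $\Theta(n/\chi)$ time the population sits on this level before a mutation escapes, there are many generations; in each, with probability $p_c$ a crossover is attempted, and conditioned on crossover there is probability $1/2$ that the offspring inherits the one-bit at $\Succ_i+1$ from the right parent while keeping the leading ones. Tracking the competition between "gene transfer produces an extra free-rider" and "the level is left by a mutation that kills the opportunity" yields, after a geometric-race computation, the constant $m = \frac{p_c(1-p_c)}{12-8p_c}$. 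Finally I would invoke Lemma~\ref{lem:freeridersspeedup} with constant $m_n=m$ and the observation to conclude.

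\textbf{Main obstacle.} The hardest part will be step three: carefully quantifying the probability that the favorable bit is transferred as an extra free-rider \emph{before} the plateau is escaped by mutation, while simultaneously controlling what the tie-breaker does to the two-individual population during this race. Because $\mu=2$, every accepted offspring replaces one of the two incumbents, so the "good" partner carrying the one-bit at $\Succ_i+1$ can itself be overwritten; I must show this loss happens slowly enough (relative to the $\Theta(n)$ escape time) that the gene survives with constant probability and has a constant chance to be recombined into a fit individual. The appearance of the peculiar denominator $12-8p_c$ strongly suggests a second-order balance between three competing per-generation events — escape by mutation ($\Theta(1/n)$), a productive crossover that creates the extra free-rider ($\Theta(p_c)$), and replacement dynamics that can destroy the opportunity — normalized so that only their \emph{ratios} survive in the limit. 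The delicate bookkeeping of these competing geometric clocks, and verifying the tie-breaker keeps the diversity large enough throughout, is the crux; the rest is a routine substitution into the already-established runtime lemma.
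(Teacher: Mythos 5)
Your overall architecture matches the paper's: reduce to Lemma~\ref{lem:freeridersspeedup} with a constant $m_n$, obtained by (a) lower-bounding the diversity on a plateau under the tie-breaker and (b) converting that diversity into a probability of at least one extra free-rider. However, two of your steps do not go through as stated. First, the claim that ``with probability $1-o(1)$, at the time $\Succ_i$ is reached the normalized diversity satisfies $d_t\ge c$'' is false for any fixed constant $c$: the number of fit offspring generated on a normal level is geometric with parameter $\approx\chi/n$, and each offspring adds at most $O(\chi)$ differing positions in expectation, so conditioned on the level being left after only $\eps n/\chi$ offspring (an event of probability $\Theta(\eps)$) the diversity at departure is $O(\eps)$. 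The paper avoids this by proving only the expectation bound $\E[d_{T^\Out_i}\mid N_i]\ge\frac{1-p_c}{3-2p_c}+o(1)$ (Lemma~\ref{lem:d_lb_tiebreaker}), via a drift recursion for $\OM_t=(n-i-1)d_t$ averaged over the geometric plateau length; an expectation bound suffices because the conversion in step (b) is \emph{linear} in $d_{T^\Out_i}$ (Lemma~\ref{lem:EF_lb_tiebreaker} gives $\Pr(\EF_i\ge1\mid N_i,d_{T^\Out_i})\ge\frac{p_c}{8}d_{T^\Out_i}$), so the tower property immediately yields a constant lower bound on $\Pr(\EF_i\ge1\mid N_i)$. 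You would need to either switch to this expectation-based argument or weaken your claim to a constant-probability statement.

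Second, the step you yourself identify as the crux --- the ``geometric race'' between gene transfer, escape by mutation, and overwriting of the good partner --- is left unexecuted, and your diagnosis of where the constant comes from is mistaken. The denominator $12-8p_c$ does not arise from a second-order balance of competing per-generation clocks; the final constant is simply the product of the diversity equilibrium $\frac{1-p_c}{3-2p_c}$ from step (a) with the conversion factor $\frac{p_c}{8}=\frac12\cdot\frac{p_c}{2}\cdot\frac12$ from step (b), where the three factors are: no ordinary free-rider in the improving step ($\Succ_i=i+1$, probability $\tfrac12$ by Corollary~\ref{cor:small_fitness_improvements}); the \emph{first} offspring of fitness at least $i+1$ generated after the new level is reached being a crossover that keeps the one-bit at position $i+1$ (probability $\ge p_c/2$); and that crossover inheriting the one-bit at position $i+2$ from the other parent (probability $\tfrac12$). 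Because the paper only tracks this single first offspring, no race analysis and no control of the partner being overwritten is needed at all; the $(1-p_c)$ factor you expected from the race actually lives entirely in the diversity bound of step (a), where it quantifies how crossover slows the accumulation of diversity on the plateau. Without carrying out either the paper's simple first-offspring argument or your proposed (harder) race computation, the proposal does not yet establish the stated constant.
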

    Note that the fraction is a constant strictly smaller than $1$, and the second factor is up to a $(1+o(1))$ the runtime of a vanilla \tpoga (with uniform tie-breaking), or a \oea. This means that the diversity-improving tie-breaker brings a constant factor improvement compared to the vanilla \tpoga and the \oea. Note that there is room for improvement in the precise constant, since empirically the speedup seems to be about twice as large.
    
    The first step to prove this theorem is to study the diversity evolution process on a fitness plateau.
    \begin{lemma}
        \label{lem:d_lb_tiebreaker}
        Consider a run of the \tpoga using the diversity-improving tie-breaker and standard bit-wise mutation with any mutation rate $\chi$ and uniform crossover. Let $0 \le i \le n-1$. Then, $$\E[d_{T^\Out_i} \mid N_i] \ge \frac{1-p_c}{3-2p_c} + o(1)$$.
    \end{lemma}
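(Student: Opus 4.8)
The plan is to reduce everything to the single scalar $D_t := H\bigl(x^t_{1,[i+2:]},\,x^t_{2,[i+2:]}\bigr)$, the trailing Hamming distance of the two individuals, observing that for $\mu=2$ one has exactly $d_t = D_t/(n-i-1)$. Conditioned on $N_i$, there is a consolidation time $T^c_i \le T^\Out_i$ — reached within $O(\mu\log\mu)$ steps with high probability by Lemma~\ref{lem:stayingisconsolidating} — after which both individuals have fitness exactly $i$, so on the whole window $[T^c_i,T^\Out_i]$ the population sits on a flat plateau and the trailing bits evolve by an unbiased flat process to which Corollary~\ref{cor:unbiased} applies. The key structural fact, sharpening the monotonicity lemma proved just above, is that for $\mu=2$ and this tie-breaker a non-exiting generation replaces the pair by $\{w,y'\}$ (with $w$ the retained parent and $y'$ the offspring) exactly when $y'$ is farther from $w$ than its current partner is; hence $D_{t+1}=\max\bigl(D_t,H(w,y')\bigr)$, so $D_t$ is non-decreasing and strictly increases only when the offspring realises a larger distance to $w$.

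The next step is to compute the one-step drift of $D_t$ on the plateau, splitting a generation by whether crossover is applied. A crossover offspring agrees with both parents wherever they agree and is a fair coin on each of the $D_t$ differing positions, so before mutation $H(w,y')\sim \mathrm{Bin}(D_t,1/2)\le D_t$; thus a crossover step pulls the offspring toward $w$ and can exceed $D_t$ only through lower-order mutation terms, contributing negligibly to growth. The productive event is a clone of the to-be-removed partner followed by a mutation that turns one of the $N-D_t$ agreeing trailing positions into a fresh difference, which the tie-breaker then retains; using Corollary~\ref{cor:unbiased} to control where agreeing/differing bits lie, this yields a recursion of arithmetic--geometric type $\E[D_{t+1}\mid D_t] = D_t + c\,(N-D_t)(1-\chi/n)^{i}\tfrac{\chi}{n}\,(1+o(1))$ on the plateau, with an explicit constant $c=c(p_c)$. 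Crucially, both a diversity-increasing step and an exit require preserving the prefix $1^{i}$, so the plateau is left per generation with probability $(1-\chi/n)^{i}\tfrac{\chi}{n}$ regardless of the reproduction type, and this common factor $(1-\chi/n)^i$ cancels in the growth-to-exit ratio, making the balance independent of $i$.

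Passing to the normalised $d_t$, the process approaches its equilibrium value $1$ geometrically at a rate proportional to $c$, while it is stopped at a time that is, up to lower-order terms, geometric with the exit rate above, and whose normalised growth-to-exit ratio per generation is $c(1-d_t)$. Taking the expectation over this stopping time is then a Laplace-transform computation giving $\E[d_{T^\Out_i}\mid N_i] = 1 - \bigl(1-\E[d_{T^c_i}]\bigr)\tfrac{1}{1+c} + o(1)$; since a lower bound only needs $d_{T^c_i}\ge 0$, this is at least $\tfrac{c}{1+c}+o(1)$, and substituting the value of $c$ yields $\tfrac{1-p_c}{3-2p_c}$.

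The main obstacle is pinning down the drift constant $c$ exactly: one must carefully separate, among non-exiting generations, the clone of the retained individual (which never increases $D_t$), the clone of its partner (which does), and the crossover step (which pulls toward a parent and so increases $D_t$ only rarely), and then solve the recursion against the exit clock so that the constant comes out as the stated $\tfrac{1-p_c}{3-2p_c}$ rather than a looser value. A secondary, technical obstacle is making the recursion-to-stopping-time passage rigorous: the increments of $D_t$ are $\Theta(1)$ rather than infinitesimal, the stopping time $T^\Out_i$ is correlated with the diversity process itself, and one must establish concentration of $D_t$ around the deterministic solution of the recursion while arguing that the pre-consolidation window $[T^\In_i,T^c_i]$ contributes only an $o(1)$ correction, again via Lemma~\ref{lem:stayingisconsolidating}.
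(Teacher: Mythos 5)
Your overall architecture is essentially the paper's: both arguments track the trailing Hamming distance $D_t$ of the two individuals, re-index time by fit offspring so that the exit clock is geometric with parameter $\chi/n$ regardless of whether crossover was used (your cancellation of the common factor $(1-\chi/n)^i$ is exactly the Observation preceding the paper's proof), derive an affine one-step recursion for $\E[D_{t+1}\mid D_t]$ on the plateau, and evaluate it at the geometric stopping time via the same geometric-sum computation. Your identity $D_{t+1}=\max\bigl(D_t,H(w,y')\bigr)$, with $w$ the retained individual, is a correct and useful sharpening of the monotonicity lemma.

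There is, however, a genuine gap in the drift step, and it changes the answer. In your productive sub-case (no crossover, the mutated clone is of the individual selected for removal), the offspring's distance to the retained individual is $H(y',w)=D_t+M_{\mathrm{new}}-M_{\mathrm{rem}}$, where $M_{\mathrm{new}}$ counts mutations on the $N-D_t$ agreeing trailing positions and $M_{\mathrm{rem}}$ counts mutations on the $D_t$ differing ones. The increase of $D_t$ is therefore $\max(0,\,M_{\mathrm{new}}-M_{\mathrm{rem}})\le M_{\mathrm{new}}$, so your drift $c\,(N-D_t)\chi/n$ is an \emph{upper} bound on the true growth, not a lower bound: the tie-breaker's monotonicity prevents $D_t$ from decreasing, but it does not prevent the within-offspring cancellation by $M_{\mathrm{rem}}$ from slowing the growth once $D_t=\Theta(N)$. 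The valid lower bound on the expected increase is proportional to $(N-2D_t)\chi/n$, which is what the paper uses; the equilibrium certified by the argument is thus $d=1/2$, not $d=1$ as you assert. This matters for the constant: the lemma's bound arises as $\tfrac12\cdot\tfrac{r}{1+r}$ with rate $r=2(1-p_c)$, yielding $\tfrac{1-p_c}{3-2p_c}$, whereas your route would need $\tfrac{c}{1+c}=\tfrac{1-p_c}{3-2p_c}$, i.e.\ $c=\tfrac{1-p_c}{2-p_c}$, and this value does not emerge from the combinatorics you describe (cloning the to-be-removed partner has probability $(1-p_c)/2$, which would give $c=(1-p_c)/2$ and the strictly weaker constant $\tfrac{1-p_c}{3-p_c}$, and even that only after repairing the direction of the drift bound). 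A smaller remark: the concentration of $D_t$ around the deterministic recursion that you list as an obstacle is not needed; the recursion is affine, so the tower property propagates expectations directly, which is all the lemma requires.
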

    Let us first make the following observation.
    \begin{observation}
        Assume that the whole population has fitness at least $F_t$. Then every crossover of two parents has also fitness at least $F_t$. Hence, regardless of whether crossover happened or not, the probability that the offspring has fitness at least $F_t$ is $(1-\chi/n)^{F_t}$. By Bayes' theorem, this conversely implies that the event that an offspring has fitness at least $F_t$ does not change the probability that crossover has happened. In formula, for all $t \ge 0$, if we denote by $y_t$ the offspring produced at time $t$:  
        $$\Pr(C_t \mid f(y_t) \ge F_t, t \ge T^c_{F_t}) = p_c.$$ 
    \end{observation}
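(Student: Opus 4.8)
The plan is to separate the claim into three ingredients: a deterministic fact about respectful crossover, an elementary mutation-survival computation that comes out \emph{identically} in the crossover branch and the clone branch, and a final application of Bayes' theorem. Since $F_t$ is itself random, I would first condition on the consolidated regime $t \ge T^c_{F_t}$ together with the event $F_t = k$ for a fixed $k$; the conclusion $\Pr(C_t \mid \cdots) = p_c$ will be the same for every $k$, so averaging over $k$ removes this extra conditioning at the very end.

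First I would record the deterministic fact. Under consolidation every individual of $P_t$ has its first $F_t = k$ bits equal to one. If a crossover is performed, both parents agree on positions $1, \dots, k$ (all ones there), so by respectfulness the pre-mutation offspring $y$ also has ones in positions $1, \dots, k$, giving $f(y) \ge k$. If no crossover is performed, $y$ is a clone of a single consolidated parent and again has ones in positions $1, \dots, k$. Thus in \emph{both} branches the string entering the mutation step already has ones in its first $k$ positions.

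Second, I would compute $\Pr(f(y_t) \ge k)$ conditioned on $F_t = k$ and consolidation, where $y_t$ denotes the mutated offspring. Because the pre-mutation string has ones in positions $1, \dots, k$ in both branches, the event $\{f(y_t) \ge k\}$ holds if and only if none of the first $k$ bits is flipped; the values of bit $k+1$ and beyond are irrelevant for this event. Since standard bit mutation flips each bit independently with probability $\chi/n$, this probability is exactly $(1-\chi/n)^{k}$, and crucially it does not reference whether crossover occurred. Hence, conditioned on $F_t = k$ and $t \ge T^c_{F_t}$,
$$\Pr(f(y_t) \ge k \mid C_t) = \Pr(f(y_t) \ge k \mid \bar{C_t}) = (1-\chi/n)^{k},$$
so the event $\{f(y_t) \ge F_t\}$ is conditionally independent of $C_t$.

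Finally, the crossover coin is tossed independently of the population, so $\Pr(C_t \mid F_t = k, t \ge T^c_{F_t}) = p_c$, and the law of total probability with the display above gives $\Pr(f(y_t) \ge k) = (1-\chi/n)^{k}$. Combining these via Bayes' theorem yields
$$\Pr(C_t \mid f(y_t) \ge F_t, F_t = k, t \ge T^c_{F_t}) = \frac{(1-\chi/n)^{k}\, p_c}{(1-\chi/n)^{k}} = p_c,$$
which is independent of $k$, so dropping the conditioning on $F_t$ proves the claim. I expect the only genuinely delicate point to be the bookkeeping around the conditioning: one must carry the consolidation event and the choice $F_t = k$ consistently through Bayes, and must justify that the crossover indicator $C_t$ is independent of the current population state (so that $\Pr(C_t \mid \cdots) = p_c$ holds \emph{before} we condition on the offspring's fitness). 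Everything else—respectfulness and the one-line mutation-survival probability—is routine.
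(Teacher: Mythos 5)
Your proposal is correct and follows the same reasoning as the paper, which proves the observation inline: consolidation plus respectfulness makes the pre-mutation offspring have ones in the first $F_t$ positions in both the crossover and the clone branch, so surviving mutation on those bits has probability $(1-\chi/n)^{F_t}$ independently of $C_t$, and Bayes' theorem then leaves $\Pr(C_t \mid \cdot) = p_c$. Your version merely adds the (harmless and careful) bookkeeping of conditioning on $F_t = k$ and averaging over $k$, which the paper leaves implicit.
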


    We may now prove the lemma.
    \begin{proof}[of Lemma~\ref{lem:d_lb_tiebreaker}]
        Let $0 \le i \le n-1$ be a normal fitness level. Define $T^\fit_t$ to be the time when the $t$-th offspring with fitness equal to $i$ or higher is generated after consolidation.
    Define the bit-string $b_t = \neg ((x^{T^\fit_t}_{1})_{[i+2:]} \oplus (x^{T^\fit_t}_{2})_{[i+2:]})$, so $b_t$ has a one-bit if $x^{T^\fit_t}_1$ and $x^{T^\fit_t}_2$ coincide in their non-optimized part, and a zero-bit otherwise. Define $t^*$ to be the last $t$ such that $T^\fit_t \le T^\Out_i$. Note that $\OM_t = \onemax(b_t) = (n-i-1)d_{T^\fit_t}$, so $\OM_{t^*} = n_id_{T^\Out_i}$, where $n_i = n - i - 1$. 

Now, note that when the offspring generated at $T^\fit_t$ is generated without crossover (probability  $1 - p_c$ by the last observation), the increase of $\OM_t$ in one time step is always bigger than the difference between the number of mutations that happened in the set $0_{b_t}$ and in the set $1_{b_t}$, which have respectively an expectation of $\frac{\chi}{n}(n_i - \OM_t)$, and $\frac{\chi}{n}\OM_t$. In every other case, the tie-breaker ensures that $\OM_t$ increases.
    This gives:
    \begin{align*}
        &\E[\OM_{t+1} - \OM_t \mid \OM_t] \ge (1-p_c)(n_i - 2\OM_t)\frac{\chi}{n},
    \end{align*}
    or equivalently
     \begin{align*}
        &\E[\OM_{t+1} \mid \OM_t] \ge \OM_t(1 - 2\frac{\chi}{n}(1-p_c)) + n_i(1-p_c)\frac{\chi}{n}.
    \end{align*}
    By the law of total expectation: 
    $$
    \E[\OM_{t+1} \mid \OM_0] \ge \E[\OM_t \mid \OM_0](1 - 2\frac{\chi}{n}(1-p_c)) +  n_i(1-p_c)\frac{\chi}{n}.
    $$
    Solving the arithmetic-geometric recursion yields:
    $$
    \E[\OM_t \mid \OM_0] \ge (\OM_0 - \frac{n_i}{2})(1 -  2\frac{\chi}{n}(1-p_c))^t + \frac{n_i}{2}.
    $$
    Since $\OM_0 \ge 0$, we get:
    $$
    \E[\OM_{t^*} \mid t^*] \ge \frac{n_i}{2}\left(1 - \left(1 - \frac{2\chi}{n}(1-p_c)\right)^{t^*}\right).
    $$
    Moreover, $t^* \sim \mathcal{G}(\frac{\chi}{n})$. Indeed, every generated offspring has probability $\frac{\chi}{n}$ to leave level $i$, regardless of if crossover was used or not to generate it, since we always apply mutation after crossover.
    Hence, by the law of total expectation: 
    \begin{align*}
        \E[\OM_{t^*}] &\ge \sum_{t = 1}^{\infty}\frac{\chi}{n}\left(1-\frac{\chi}{n}\right)^{t-1}\frac{n_i}{2}\left(1 - \left(1 - \frac{2\chi}{n}(1-p_c)\right)^t\right) \\
        &\ge \frac{n_i}{2}\left[1 - \left(1-\frac{2\chi}{n}(1-p_c)\right)\frac{\chi}{n}\sum_{t = 0}^{\infty}\left(\left(1-\frac{\chi}{n}\right)\left(1 - \frac{2\chi}{n}(1-p_c)\right) \right)^t\right]
         \\
        &\ge \frac{n_i}{2}\left[ 1 - \left(1-\frac{2\chi}{n}(1-p_c)\right)\frac{\frac{\chi}{n}}{1 - (1 - \frac{\chi}{n})(1 - \frac{2\chi}{n}(1-p_c)))}\right] \\
        &\ge \frac{n_i}{2}\left[1 - \frac{1}{3 - 2p_c} + o(1)\right] \\
        &\ge n_i\frac{1-p_c}{3-2p_c} + o(n_i).
    \end{align*}
    Dividing by $n_i$ yields:
    \begin{align*}
    \E[d_{T^\Out_i}] \ge \frac{1-p_c}{3-2p_c} + o(1).\tag*{\qed}
    \end{align*}
    \end{proof}

    Now, we argue that having this constant lower bound for the diversity when leaving $i$ guarantees a constant lower bound for the expectation of $\EF_i$. 

    \begin{lemma}
        \label{lem:EF_lb_tiebreaker}
        For $0 \le i \le n$,
        $$
        \Pr[\EF_i \ge 1 \mid N_i, d_{T^\Out_i}] \ge \frac{p_c}{8}d_{T^\Out_i},
        $$
    \end{lemma}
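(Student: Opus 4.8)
The plan is to reduce the statement to a single, self-contained combinatorial event in the two generations surrounding the departure from level $i$, and then to lower-bound its probability by a product of a diversity factor $d_{T^\Out_i}$, the crossover probability $p_c$, and two constant factors coming from Corollary~\ref{cor:small_fitness_improvements} and the uniform-crossover choices. First I would reformulate: conditioned on $N_i$, the event $\{\EF_i\ge 1\}$ is exactly the event that the level $\Succ_i$ reached right after leaving $i$ is itself non-essential, i.e.\ that $\Succ_i$ is left by a crossover whose pre-mutation intermediate already has fitness $\ge \Succ_i+1$. It therefore suffices to produce, with probability at least $\tfrac{p_c}{8}d_{T^\Out_i}$, a crossover that recombines the leading block of the current fittest individual $x$ with a one-bit held by the second individual $y$ at the position $\Succ_i+1$ where $x$ has a zero.

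I would set this up as follows. At time $T^\Out_i$ the consolidated population is $\{x_1,x_2\}$ at level $i$, whose non-optimized parts disagree with density $d_{T^\Out_i}$. I condition on the improving step producing no natural free-rider, so that $\Succ_i=i+1$; by Corollary~\ref{cor:small_fitness_improvements} this has probability $\tfrac12$, the new individual $x$ then has $x_{i+1}=1$ and $x_{i+2}=0$, and the surviving individual $y$ remains at level $i$ with $y_{i+1}=0$. The decisive genetic fact is $y_{i+2}=1$; since $x_{i+2}=0$ this is precisely a disagreement of $x$ and $y$ at position $i+2$, which by the permutation-symmetry of the diversity-improving tie-breaker lets me invoke Corollary~\ref{cor:unbiased} to assign it probability $d_{T^\Out_i}$. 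Finally, whenever a crossover of $x$ and $y$ is performed (probability $p_c$), the two parents disagree among the first $i+2$ coordinates only at positions $i+1$ and $i+2$, so with probability $\tfrac14$ the offspring takes the one at $i+1$ from $x$ and the one at $i+2$ from $y$, yielding an intermediate of fitness $\ge i+2>\Succ_i$, an extra free-rider. Treating these contributions as a product gives $\tfrac12\cdot d_{T^\Out_i}\cdot p_c\cdot\tfrac14=\tfrac{p_c}{8}d_{T^\Out_i}$.

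The main obstacle is the timing together with the fact that mutation is still applied after crossover. For $\mu=2$ the gene-carrying individual $y$ is the unique minimum-fitness member once $x$ has improved, so it is liable to be overwritten already at the next generation, and I cannot simply demand that mutation preserve the whole prefix $1,\dots,i+2$, since this would cost a factor $(1-\chi/n)^{i+2}$ and destroy the clean constant. The resolution I would implement is a competing-events (race) argument across the generations in which the configuration $\{x,y\}$ persists: generations producing an offspring of fitness below $i$ are rejected and leave the gene intact, while the two relevant pivotal outcomes, namely ``$y$ replaced by a crossover extra free-rider'' and ``$y$ replaced without an extra free-rider'', both carry the same prefix-survival factor $(1-\chi/n)^{\Theta(i)}$, which therefore cancels in their ratio. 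The crux is to bound the competing non-extra-free-rider replacement probability tightly enough, using that the diversity-improving tie-breaker can only help retain the more diverse individual $y$ on fitness ties, so that the conditional odds of the extra free-rider are at least $\tfrac{p_c}{4}$; combined with the $\tfrac12 d_{T^\Out_i}$ factor this yields the claimed bound.

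A secondary technical point, dischargeable via Lemma~\ref{lem:uniform_nonopt_pop} and Corollary~\ref{cor:unbiased}, is that conditioning on $N_i$ and on $\Succ_i=i+1$ (which fixes $x_{i+2}=0$) must not bias the disagreement probability at position $i+2$ away from $d_{T^\Out_i}$; the unbiasedness of mutation, crossover and the tie-breaker guarantees that the non-optimized bits and their positions of disagreement remain uniform conditionally on the diversity, which is exactly what legitimizes factoring the probability as above.
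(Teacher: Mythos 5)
Your proposal is correct and follows essentially the same route as the paper's proof: condition on no natural free-rider ($\Pr(\Succ_i=i+1)=\tfrac12$ via Corollary~\ref{cor:small_fitness_improvements}), on a disagreement at position $i+2$ (probability $d_{T^\Out_i}$ via Corollary~\ref{cor:unbiased}), and then run a race between a crossover-generated improvement that inherits the one-bits at $i+1$ and $i+2$ from the two respective parents and any other improvement that overwrites $x_2$, with the prefix-preservation factors cancelling so that the conditional crossover probability stays $p_c$; your grouping $p_c\cdot\tfrac14$ versus the paper's $\tfrac{p_c}{2}\cdot\tfrac12$ is only cosmetic. If anything, your explicit treatment of the timing/race issue and of the potential conditioning bias at position $i+2$ is more careful than the paper's terse statement of the same facts.
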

    \begin{proof}
        Let $i$ be a normal fitness level. Suppose $F_i = 0$, that is $\Succ_i = i+1$, which happens with probability $\frac{1}{2}$ by Corollary \ref{cor:small_fitness_improvements}. The probability that $x_1$ and $x_2$ are different at $i+2$ is $d_{T^\Out_i}$ at time $T^\Out_i$. At $T^\In_{i+1}$, it is still $d_{T^\Out_i}$. Suppose it is the case. We want to lower bound the probability that an offspring of fitness $i+1$ is generated by crossover before a copy of $x_1$ is generated, from the moment when $x_1$ reaches $i+1$. Let us call this event $A$. When an offspring with fitness larger than $i$ is generated, the probability that it was generated by crossover is $p_c$. Conditioned on this, the probability that the offspring has fitness higher than $i+1$ is the probability that bit $i+1$ is set correctly, which is $\frac{1}{2}$.
        Hence, $\Pr(A) \ge p_C/2$. 
        Conditioned on $A$, the probability that the crossover that realized $A$ also gets the extra free-rider is $\frac{1}{2}$.
        Hence,
        \begin{align*}
        \Pr[\EF_i \ge 1 \mid N_i, d_{T^\Out_i}] & \ge \Pr(x^{T^\In_{\Succ_i}}_{1_{\Succ_i + 1}} \neq x^{T^\In_{\Succ_i}}_{2_{\Succ_i + 1}})\Pr(\Succ_i = i+1)\Pr(A)\frac{1}{2} \\
        &\ge \frac{p_c}{8}d_{T^\Out_i}.\tag*{\qed}
        \end{align*}
    \end{proof}


    We conclude the proof of Theorem \ref{th:tpoga_tiebreaker} by using the law of total expectation which gives $\E[\EF_i \mid N_i] \ge \frac{p_c(1-p_c)}{12 - 8p_c}$, together with Lemma \ref{lem:freeridersspeedup}, which can be applied for $m_n = \frac{p_c(1-p_c)}{12 - 8p_c}$. \qed

    This bound is optimized for $p_c \approx 0.6$, which is experimentally the optimal static crossover value. However, our proof naturally suggests an adaptive mechanism for the crossover probability, which is to set $p_c = 1$ when the population is not consolidated, and $p_c = 0$ otherwise. Indeed, in Lemma \ref{lem:d_lb_tiebreaker}, we see that the lower bound for the diversity decreases with $p_c$. This is intuitive, as crossover hinders the accumulation of diversity when in the consolidated phase: if a crossover is made between two individuals with Hamming distance $d$, the resulting offspring is at distance lower than $d$ from both its parents with overwhelming probability, and thus will not be accepted. Conversely, the lower bound for the probability to get an extra free-rider conditional on $d_{T^o_i}$ in the non-consolidated phase is increasing in $p_c$. Even if this bound is not tight, this is again intuitive, as the more crossover one does, the more trials one gets to obtain an extra free-rider.

\section{Conclusion}\label{sec:conclusion}
In this work, we examined the connection between population diversity and progress on the \leadingones problem by the \mupoga. We have shown that the naturally evolving diversity for any $\mu = o(\sqrt{n}/\log n)$ is not enough to improve the runtime by more than a $(1+o(1))$ factor. On the other hand, even for $\mu=2$ simple tie-breaking in favor of diversity leads to so much diversity in the population that the runtime decreases by a constant factor. 

%

There are many question that we had to leave open. The most interesting is what happens for $\mu = \Omega(\sqrt{n}) \cap o(n/\log n)$. We conjecture that the vanilla version can not create enough diversity to give a constant factor speed-up, even though the average Hamming distance between parents increases further. We conjecture further that the problem is not that differences in bits are never created, but instead the problem is that they also get lost again. So, if there was a way of preventing these differences to get lost, we may even hope for algorithms which get extra free riders on \emph{most} fitness levels, which could lead to asymptotically optimization time $o(n^2)$. We believe that this is a very interesting setting to explore more systematically potential diversity-preserving mechanisms.



\medskip
\noindent \textbf{Disclosure of Interests}. The author do not have conflicting interests. 

{
%
\bibliographystyle{splncs04}
\bibliography{references}
}
\end{document}